\documentclass[letterpaper,11pt]{article}

\usepackage[linesnumbered,algoruled,boxed,lined,noend]{algorithm2e}
\usepackage{fullpage, subfig}
\usepackage{bbm}
\usepackage{graphicx}
\usepackage{amsmath,amssymb,amsthm,amsfonts}

\usepackage{paralist}
\usepackage{bm}
\usepackage{xspace}
\usepackage{url}
\usepackage{prettyref}
\usepackage{boxedminipage}
\usepackage{wrapfig}
\usepackage{ifthen}
\usepackage{color}
\usepackage{xspace}
\usepackage{graphicx}

\usepackage{nicefrac}

\DeclareMathOperator*{\argmax}{arg\,max}
\DeclareMathOperator*{\argmin}{arg\,min}

\usepackage[utf8]{inputenc}

\newcommand{\subexpo}{\mathsf{SubE}}

\usepackage{xcolor}
\definecolor{expert}{HTML}{008000}
\definecolor{error}{HTML}{f96565}

\usepackage{thmtools}
\usepackage{thm-restate}

\usepackage{tikz}
\usetikzlibrary{arrows,calc} 
\newcommand{\tikzAngleOfLine}{\tikz@AngleOfLine}
\def\tikz@AngleOfLine(#1)(#2)#3{%
\pgfmathanglebetweenpoints{%
\pgfpointanchor{#1}{center}}{%
\pgfpointanchor{#2}{center}}
\pgfmathsetmacro{#3}{\pgfmathresult}%
}

\declaretheoremstyle[
    headfont=\normalfont\bfseries, 
    bodyfont = \normalfont\itshape]{mystyle} 

\usepackage[linesnumbered,algoruled,boxed,lined,noend]{algorithm2e} 

\usepackage{listings}
\usepackage{tikz}
\usepackage{caption}
\usepackage{mdwmath}
\usepackage{multirow}
\usepackage{mdwtab}
\usepackage{eqparbox}
\usepackage{multicol}
\usepackage{amsfonts}
\usepackage{tikz}
\usepackage{multirow,bigstrut,threeparttable}
\usepackage{bbm}
\usepackage{epstopdf}
\usepackage{mdwmath}
\usepackage{mdwtab}
\usepackage{eqparbox}
\usetikzlibrary{topaths,calc}
\usepackage{latexsym}
\usepackage{natbib}
\usepackage{amssymb}
\usepackage{bm}
\usepackage{amssymb}
\usepackage{graphicx}
\usepackage{mathrsfs}
\usepackage{epsfig}
\usepackage{psfrag}
\usepackage{setspace}
\usepackage[
            CJKbookmarks=true,
            bookmarksnumbered=true,
            bookmarksopen=true,
						bookmarks=false,
            colorlinks=true,
            citecolor=red,
            linkcolor=blue,
            anchorcolor=red,
            urlcolor=blue,
            hypertexnames=false
            ]{hyperref} 

\usepackage{comment}
\usepackage{mathtools}
\usepackage{blkarray}
\usepackage{multirow,bigdelim,dcolumn,booktabs}

\usepackage{xparse}
\usepackage{tikz}
\usetikzlibrary{calc}
\usetikzlibrary{decorations.pathreplacing,matrix,positioning}

\usepackage[T1]{fontenc}
\usepackage[utf8]{inputenc}
\usepackage{mathtools}
\usepackage{blkarray, bigstrut}
\usepackage{gauss}

\newcommand*{\BraceAmplitude}{0.4em}%
\newcommand*{\VerticalOffset}{0.5ex}%
\newcommand*{\HorizontalOffset}{0.0em}%
\newcommand*{\blocktextwid}{3.0cm}%
\NewDocumentCommand{\InsertLeftBrace}{%
	O{} 
	O{\HorizontalOffset,\VerticalOffset} 
	O{\blocktextwid} 
	m   
	m   
	m   
}{%
	\begin{tikzpicture}[overlay,remember picture]
	\coordinate (Brace Top)    at ($(#4.north) + (#2)$);
	\coordinate (Brace Bottom) at ($(#5.south) + (#2)$);
	\draw [decoration={brace, amplitude=\BraceAmplitude}, decorate, thick, draw=black, #1]
	(Brace Bottom) -- (Brace Top) 
	node [pos=0.5, anchor=east, align=left, text width=#3, color=black, xshift=\BraceAmplitude] {#6};
	\end{tikzpicture}%
}%
\NewDocumentCommand{\InsertRightBrace}{%
	O{} 
	O{\HorizontalOffset,\VerticalOffset} 
	O{\blocktextwid} 
	m   
	m   
	m   
}{%
	\begin{tikzpicture}[overlay,remember picture]
	\coordinate (Brace Top)    at ($(#4.north) + (#2)$);
	\coordinate (Brace Bottom) at ($(#5.south) + (#2)$);
	\draw [decoration={brace, amplitude=\BraceAmplitude}, decorate, thick, draw=black, #1]
	(Brace Top) -- (Brace Bottom) 
	node [pos=0.5, anchor=west, align=left, text width=#3, color=black, xshift=\BraceAmplitude] {#6};
	\end{tikzpicture}%
}%
\NewDocumentCommand{\InsertTopBrace}{%
	O{} 
	O{\HorizontalOffset,\VerticalOffset} 
	O{\blocktextwid} 
	m   
	m   
	m   
}{%
	\begin{tikzpicture}[overlay,remember picture]
	\coordinate (Brace Top)    at ($(#4.west) + (#2)$);
	\coordinate (Brace Bottom) at ($(#5.east) + (#2)$);
	\draw [decoration={brace, amplitude=\BraceAmplitude}, decorate, thick, draw=black, #1]
	(Brace Top) -- (Brace Bottom) 
	node [pos=0.5, anchor=south, align=left, text width=#3, color=black, xshift=\BraceAmplitude] {#6};
	\end{tikzpicture}%
}%

\usetikzlibrary{patterns}

\definecolor{cof}{RGB}{219,144,71}
\definecolor{pur}{RGB}{186,146,162}
\definecolor{greeo}{RGB}{91,173,69}
\definecolor{greet}{RGB}{52,111,72}

\theoremstyle{plain}
\newtheorem{thm}{Theorem}[section]
\newtheorem{theorem}{Theorem}[section]

\newtheorem{lmm}{Lemma}[section]

\newtheorem{definition}{Definition}
\newtheorem{prop}[theorem]{Proposition}

\def \bP {\mathbb{P}}

\def \bE {\mathbb{E}}
\def \bR {\mathbb{R}}

\def\1{\mathbbm{1}}

\usepackage{xspace}

\newcommand{\stepa}[1]{\overset{\rm (a)}{#1}}
\newcommand{\stepb}[1]{\overset{\rm (b)}{#1}}
\newcommand{\stepc}[1]{\overset{\rm (c)}{#1}}
\newcommand{\stepd}[1]{\overset{\rm (d)}{#1}}

\newcommand{\naturals}{\mathbb{N}}

\newcommand{\TV}{{\sf TV}}

\newcommand{\KL}{{\sf KL}}

\newcommand{\pth}[1]{\left( #1 \right)}
\newcommand{\qth}[1]{\left[ #1 \right]}
\newcommand{\sth}[1]{\left\{ #1 \right\}}
\newcommand{\bpth}[1]{\Big( #1 \Big)}
\newcommand{\bqth}[1]{\Big[ #1 \Big]}
\newcommand{\bsth}[1]{\Big\{ #1 \Big\}}

\newcommand{\Poi}{\text{\rm Poi}}
\newcommand{\Unif}{\text{\rm Unif}}

\newcommand{\indc}[1]{{\mathbf{1}_{\left\{{#1}\right\}}}}

\definecolor{myblue}{rgb}{.8, .8, 1}
\definecolor{mathblue}{rgb}{0.2472, 0.24, 0.6} 
\definecolor{mathred}{rgb}{0.6, 0.24, 0.442893}
\definecolor{mathyellow}{rgb}{0.6, 0.547014, 0.24}

\newcommand{\calE}{{\mathcal{E}}}
\newcommand{\calF}{{\mathcal{F}}}
\newcommand{\calG}{{\mathcal{G}}}

\newcommand{\calN}{{\mathcal{N}}}

\newcommand{\calP}{{\mathcal{P}}}

\newcommand{\calS}{{\mathcal{S}}}

\newcommand{\calX}{{\mathcal{X}}}
\newcommand{\calY}{{\mathcal{Y}}}

\newcommand{\rmd}{\mathrm{d}}

\usepackage{cleveref}
\crefname{lemma}{Lemma}{Lemmas}
\Crefname{lemma}{Lemma}{Lemmas}
\crefname{thm}{Theorem}{Theorems}
\Crefname{thm}{Theorem}{Theorems}
\Crefname{assumption}{Assumption}{Assumptions}
\Crefname{definition}{Definition}{Definitions}
\Crefname{prop}{Proposition}{Propositions}
\crefname{prop}{Proposition}{Propositions}
\crefformat{equation}{(#2#1#3)}

\newcommand{\reg}{\mathsf{Regret}}
\newcommand{\mmse}{\mathsf{mmse}}
\newcommand{\ntest}{{n_{\mathsf{test}}}}
\newcommand{\Nloc}{N_{\mathrm{loc}}}

\newrefformat{eq}{(\ref{#1})}
\newrefformat{thm}{Theorem~\ref{#1}}
\newrefformat{th}{Theorem~\ref{#1}}
\newrefformat{chap}{Chapter~\ref{#1}}
\newrefformat{defn}{Definition~\ref{#1}}
\newrefformat{definition}{Definition~\ref{#1}}
\newrefformat{sec}{Section~\ref{#1}}
\newrefformat{seca}{Section~\ref{#1}}
\newrefformat{algo}{Algorithm~\ref{#1}}
\newrefformat{fig}{Fig.~\ref{#1}}
\newrefformat{tab}{Table~\ref{#1}}
\newrefformat{rmk}{Remark~\ref{#1}}
\newrefformat{clm}{Claim~\ref{#1}}
\newrefformat{def}{Definition~\ref{#1}}
\newrefformat{cor}{Corollary~\ref{#1}}
\newrefformat{lemma}{Lemma~\ref{#1}}
\newrefformat{lmm}{Lemma~\ref{#1}}
\newrefformat{prop}{Proposition~\ref{#1}}
\newrefformat{pr}{Proposition~\ref{#1}}
\newrefformat{property}{Property~\ref{#1}}
\newrefformat{app}{Appendix~\ref{#1}}

\newcommand{\UnifiedFigure}[4]{
    \begin{figure}[#1]
      \centering
      #4
      \caption{#3}\label{#2}
    \end{figure}
}

\RestyleAlgo{boxruled}  

\begin{document}

\title
{Universal priors: solving empirical Bayes \\
via Bayesian inference and pretraining}

\author{Nick Cannella, Anzo Teh, Yanjun Han, Yury Polyanskiy\thanks{N.C. is with the Courant Institute of Mathematical Sciences, NYU, New York City, NY, email: \url{nvc9912@nyu.edu}; Y.H. is with the Courant Institute of Mathematical Sciences and Center for Data Science, NYU, New York City, NY, email: \url{yanjunhan@nyu.edu}; A.T. and Y.P. are with Department of EECS, MIT, Cambridge,
		MA, email: \url{anzoteh@mit.edu} and \url{yp@mit.edu}.}}
\maketitle

\begin{abstract} 
We theoretically justify the recent empirical finding of \citep{teh2025solving} that a transformer pretrained on synthetically generated data achieves strong performance on empirical Bayes (EB) problems. We take an indirect approach to this question: rather than analyzing the model architecture or training dynamics, we ask why a pretrained Bayes estimator, trained under a \emph{prespecified training distribution}, can adapt to \emph{arbitrary test distributions}. Focusing on Poisson EB problems, we identify the existence of \emph{universal priors} such that training under these priors yields a near-optimal regret bound of $\widetilde{O}(\frac{1}{n})$ uniformly over all test distributions. Our analysis leverages the classical phenomenon of \emph{posterior contraction} in Bayesian statistics, showing that the pretrained Bayes estimator adapts to unknown test distributions precisely through posterior contraction. This perspective also explains the phenomenon of \emph{length generalization}, in which the test sequence length exceeds the training length, as the model performs Bayesian inference using a fractional posterior.
\end{abstract}

\tableofcontents

\section{Introduction}\label{sec:intro}

Consider the following empirical Bayes (EB) task in the Poisson model: let $\theta_1,\dots,\theta_n$ be i.i.d. drawn from some unknown prior $G_0$ supported on $[0, A]$, and the observations $X^n$ be conditionally independent with $X_i\sim \Poi(\theta_i)$ given $\theta^n$. Here we assume knowledge of $A$, but do not impose any condition on the prior $G_0$ (not even continuity or smoothness). The target of empirical Bayes is to propose an estimator $\widehat{\theta}^n = \widehat{\theta}^n(X^n)$ that is nearly optimal \emph{on every problem instance}, i.e., achieves competitive performance compared to the Bayes estimator with the oracle knowledge of $G_0$. The standard notion in empirical Bayes to quantify the estimator performance is the \emph{regret}, defined as the excess MSE over the Bayes risk: 
\begin{align}\label{eq:regret}
	\reg(\widehat{\theta}^n; G_0) &= \bE_{G_0}\bqth{\frac{1}{n}\|\widehat{\theta}^n(X^n) - \theta^n \|_2^2 - \min_{\theta^\star(\cdot)} \frac{1}{n}\|\theta^\star(X^n) - \theta^n \|_2^2 }\nonumber
	\\
	&= \bE_{G_0}\bqth{\frac{1}{n} \|\widehat{\theta}^n - \theta_{G_0}(X^n)\|_2^2 }, 
\end{align}
where $\theta_{G_0}(X_i) = \bE_{G_0}[\theta_i | X_i]$ is the Bayes estimator (posterior mean) with the knowledge of $G_0$, and $\theta_{G_0}(X^n) = (\theta_{G_0}(X_1),\dots,\theta_{G_0}(X_n))$. A small regret $\reg(\widehat{\theta}^n) = o(1)$ means that the Bayes risk can be asymptotically attained by the legal estimator $\widehat{\theta}^n$. Compared with classical statistical estimators (like the MLE), empirical Bayes estimators usually enjoy a much better empirical performance due to instance-wise guarantees and implicit adaptations to the prior structure \citep{Rob51, Rob56, jiang2009general, han2025besting}.

There exist several ways to solve empirical Bayes problems in the literature. Specializing to the Poisson empirical Bayes model, the earliest example is Robbins' estimator based on $f$-modeling (i.e., mimicking the form of the Bayes estimator $\theta_{G_0}(X)$). A numerically more stable approach is the $g$-modeling, which learns a prior from data and uses the Bayes estimator under the learned prior. A notable example for learning the prior is the nonparametric MLE (NPMLE), as well as a broader class of minimum distance estimators \citep{vandegar2021neural,jana2025optimal}. Finally, a modern approach is to use empirical risk minimization (ERM), which minimizes a properly constructed loss function evaluated on $X^n$ over a suitably chosen function class \citep{barbehenn2022nonparametric, jana2023empirical}. With the exception of Robbins' estimator, all these estimators solve an optimization program at test time, meaning that these programs depend on $X^n$. 

The recent work \citep{teh2025solving} proposes to solve empirical Bayes problems via a different strategy of using a \emph{pretrained estimator}. Unlike the previous ERM approach which trains a separate model for each sample $X^n$, a pretrained model learns a function $\widehat{\theta}^n(\cdot)$ from a large pool of properly generated training data and enables extremely fast computation at test time (namely, applying $\widehat{\theta}^n(\cdot)$ directly to $X^n$). This strategy is inspired by recent successes of TabPFN \citep{hollmann2022tabpfn, hollmann2025accurate}, which similarly applies a single pretrained model across diverse categorical datasets. 
This therefore achieves \textit{cost amortization} in a similar spirit with amortized inference (see, e.g. \cite{zammit2025neural}), where after pretraining with synthetic data, a transformer can perform quick inference on millions of batches of new observations. As shown via experimental results in \cite{teh2025solving} for Poisson EB, a well-trained transformer indeed achieves a better performance than the state-of-the-art NPMLE-based estimator at only a tiny fraction of inference time. 

In this paper, we study the statistical aspect of the approach in \cite{teh2025solving}. Specifically, we ask the following question: 
\begin{center}
	\fbox{%
		\parbox{0.9\linewidth}{%
			\centering
			Why can a pretrained Bayes estimator trained under a \emph{prespecified training distribution} adapt to \emph{all possible} test distributions?
	}}
\end{center}
For Poisson EB, a pretrained estimator is constructed as follows: Given a large number of training batches $\{(\theta^{n,(m)}, X^{n,(m)})\}_{m=1}^M$ generated from a training prior $\theta^{n,(m)} \sim G_{\Pi}$ and Poisson model $X^{n,(m)} | \theta^{n,(m)} \sim \otimes_{i=1}^n \Poi(\theta_i^{(m)})$, the pretrained estimator is the following empirical risk minimizer: 
\begin{align}\label{eq:ERM}
	\widehat{\theta}^n = \argmin_{T: \naturals^n \to [0,A]^n } \frac{1}{M}\sum_{m=1}^M \| \theta^{n,(m)} - T(X^{n,(m)}) \|_2^2.\tag{\textsf{ERM}}
\end{align}
At the population level $M\to\infty$, and assuming that the global minimizer of \Cref{eq:ERM} is attained, this pretrained estimator $\widehat{\theta}^n$ is the posterior mean vector $\bE_{G_{\Pi}}[\theta^n | X^n] = (\bE_{G_{\Pi}}[\theta_1 | X^n], \dots, \bE_{G_{\Pi}}[\theta_n | X^n])$ under an $n$-dimensional prior $G_{\Pi}$ for $\theta^n$. We call such a training prior $G_\Pi$ \emph{universal} if this Bayes estimator achieves a vanishing regret over \emph{all} test priors $G_0$, i.e.
\begin{align}\label{eq:universal}
	\sup_{G_0\in \calP([0,A])} \reg(\bE_{G_{\Pi}}[\theta^n | X^n] ; G_0) = o(1). 
\end{align}

\begin{algorithm}[!t]
	\caption{A pretrained empirical Bayes estimator via transformers}\label{alg:universal_prior}
	\SetAlgoLined
	\KwIn{Dimension parameter $n$, support parameter $A>0$, number of training batches $M$.}
	\KwOut{A pretrained estimator $\widehat{\theta}^n: \naturals^n \to [0, A]^n$. }
	
	Let $k\gets \lceil c_0\frac{\log n}{\log\log n}\rceil$; \tcp*[f]{$c_0>0$ is a properly chosen hyperparameter}
	
	\For(\tcp*[f]{Generate training data}){batch $m = 1,\cdots, M$}{
		Sample prior locations $\lambda_1, \dots, \lambda_k \sim \Unif([0,A])$\;
		Sample prior weights $(w_1, \dots, w_k) \sim \mathsf{Dir}(1,\dots,1)$\; 
		Sample training outputs $\theta^{n,(m)} \sim (\sum_{j=1}^k w_j \delta_{\lambda_j})^{\otimes n}$\; 
		Sample training inputs $X^{n, (m)}\sim \otimes_{i=1}^n \Poi(\theta_i^{(m)})$\;
	}
	Train a transformer $\mathsf{T}_{\widehat{\zeta}}$ to minimize the training error, with parameter \tcp*[f]{Training} 
	\begin{align*}
		\widehat{\zeta} = \argmin_{\zeta} \frac{1}{M}\sum_{m=1}^M \| \theta^{n,(m)} - \mathsf{T}_\zeta(X^{n,(m)}) \|_2^2.
	\end{align*}\\
	\Return{The trained transformer as the final estimator: $\widehat{\theta}^n = \mathsf{T}_{\widehat{\zeta}}(X^n)$ for any test data $X^n$. }
\end{algorithm}

One might expect that universal priors must be carefully engineered to achieve this ambitious goal; interestingly, this turns out not to be the case. A natural way to construct $G_{\Pi}$ is through a hierarchical prior or a \emph{prior-on-prior (PoP)}: Let $\Pi \in \calP(\calP([0,A]))$\footnote{Here and throughout, for a Polish space (i.e., a complete and separable metric space) $(X,d)$, we write $\calP(X)$ for the set of Borel probability measures on $(X,d)$ equipped with the topology of weak convergence; under this topology, $\calP(X)$ is itself a Polish space by Prokhorov's theorem.} be a PoP (i.e. prior over probability distributions), we can generate $G\sim \Pi$ and $\theta^n | G \sim G^{\otimes n}$. In other words, the high-dimensional prior $G_\Pi = \bE_\Pi[G^{\otimes n}]$ is an i.i.d. mixture induced by the PoP $\Pi$. The resulting Bayes estimator $\theta_\Pi^n(X^n) := \bE_{G_{\Pi}}[\theta^n | X^n]$ will be called a \emph{hierarchical Bayes estimator}, or a \emph{nonparametric Bayes estimator} as the prior $G_\Pi$ is usually nonparametric. We will also call a PoP $\Pi$ \emph{universal} if the corresponding prior $G_\Pi$ is universal in the sense of \Cref{eq:universal}. Under this hierarchical structure, it turns out that there exists a remarkably simple choice of a universal PoP $\Pi$. 

\begin{thm}\label{thm:simple_prior}
	Let the PoP $\Pi$ be the distribution of a random prior $G = \sum_{j=1}^k w_j \delta_{\lambda_j}$ with $\lambda_1, \dots, \lambda_k \sim \Unif([0,A])$ and $(w_1, \dots, w_k) \sim \mathsf{Dir}(1,\dots,1)$. For $k = \lceil c_0\frac{\log n}{\log\log n}\rceil$ with a large enough absolute constant $c_0>0$, it holds that
	\begin{align*}
		\sup_{G_0\in \calP([0,A])} \reg(\theta_\Pi^n; G_0) \le C \frac{\log^3 n}{n(\log\log n)^2},  
	\end{align*}
	where $C=C(A,c_0)$ is an absolute constant depending only on $A$ and $c_0$. 
\end{thm}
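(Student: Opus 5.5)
We plan to show that the hierarchical Bayes estimator adapts through posterior contraction. Write $f_G(x)=\int \mathrm{Poi}(x;\theta)\,G(\rmd\theta)$ for the Poisson mixture pmf. Then
\[
\theta_\Pi^n(X^n)_i=\bE_\Pi\bigl[\theta_G(X_i)\,\big|\,X^n\bigr]
\]
(up to a leave-one-out correction, which we handle by conditioning on $X^n$ in full), where $\theta_G(x)=(x+1)f_G(x+1)/f_G(x)$ and the outer expectation is over the posterior $\Pi(\cdot\mid X^n)$ on priors $G$. By Jensen,
\[
\reg(\theta_\Pi^n;G_0)\le \bE_{G_0}\Bigl[\int \frac1n\sum_i(\theta_G(X_i)-\theta_{G_0}(X_i))^2\,\Pi(\rmd G\mid X^n)\Bigr].
\]
The task therefore reduces to two things. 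First, a posterior contraction statement: $\Pi(\cdot\mid X^n)$ concentrates on $\{G: H^2(f_G,f_{G_0})\le \epsilon_n^2\}$ with $\epsilon_n^2\asymp \frac{\log^3 n}{n(\log\log n)^2}$. Second, a deterministic inequality converting Hellinger closeness of mixtures into closeness of Bayes rules, i.e. the known Poisson bound $\bE_{f_{G_0}}(\theta_G-\theta_{G_0})^2\lesssim \max(H^2\,(\log n/\log\log n)^2,\, n^{-1})$ after truncating $X$ at $K\asymp \log n/\log\log n$.

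For contraction we follow the Ghosal--Ghosh--van der Vaart scheme, using its in-expectation (rather than high-probability) form, since the integrand is bounded by $A^2$ and we need an expected bound. Two ingredients are required.

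\textbf{(i) Prior mass.} Every $G_0$ on $[0,A]$ can be replaced by a $k$-atomic $G'$ matching its first $\approx k$ moments. For $k\asymp \log n/\log\log n$ this gives $\TV(f_{G_0},f_{G'})\le n^{-2}$, by the standard moment-matching and Taylor argument, which uses $(eA/k)^{k}\le n^{-c}$ when $c_0$ is large. Perturbing the atoms by $n^{-2}$ and the weights by $n^{-2}$ keeps $\KL$ and the second KL moment at most $\epsilon_n^2$. The uniform/Dirichlet prior puts mass at least $(n^{-2}/A)^k\cdot(n^{-2})^{k}k!\ge \exp(-Ck\log n)$ on this neighborhood. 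Since $k\log n\asymp \log^2 n/\log\log n\le n\epsilon_n^2$, this is consistent with the target rate. Crucially, this bound is uniform in $G_0$.

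\textbf{(ii) Tests/entropy.} The support of $\Pi$ is contained in $\calP([0,A])$. The Hellinger metric entropy of Poisson mixtures over $[0,A]$ is $\log N(\epsilon)\lesssim \log^2(1/\epsilon)/\log\log(1/\epsilon)$, which is a known result, again via moment matching. Le Cam--Birgé tests then give the exponential type-II error bounds, so no sieve is needed.

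Combining (i) and (ii), the expected posterior mass outside a Hellinger ball of radius $M\epsilon_n$ is at most $e^{-cn\epsilon_n^2}+n^{-1}$, uniformly in $G_0$.

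Finally we integrate. Split the posterior expectation into the part inside the contraction ball, bounded by the Hellinger-to-regret inequality, and the complement, bounded by $A^2$ times the posterior mass. One subtlety is that the regret inequality is evaluated on the empirical average over the same $X^n$ that drives the posterior. We handle it by truncating at $K$, controlling $X_i>K$ via Poisson tails, and using uniform deviation bounds of the empirical pmf over $x\le K$ (i.e. bounding $\frac1n\sum_i g(X_i)$ by $\bE g$ plus $O(K/n)$ after noting that $\theta_G$ is bounded by $A$ and depends on $X$ only through $x\le K$). I expect the main obstacle to be this coupling together with the sharp exponent in the Hellinger-to-regret conversion. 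To keep the final rate at $\log^3 n/(n(\log\log n)^2)$ rather than worse, the entropy rate $\epsilon_n^2 \asymp \log^2 n/(n\log\log n)$ must be multiplied by only a factor of $\log n/\log\log n$, not its square. I would try to obtain this using the ratio-form inequality $|\theta_G(x)-\theta_{G_0}(x)|\lesssim$ local Hellinger terms weighted by $f_{G_0}(x)$, summed over $x\le K$, which gains exactly a factor of $K$.
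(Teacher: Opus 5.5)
\textbf{Verdict: there is a genuine gap.} Your architecture matches the paper's: a $k$-atomic approximation of $G_0$ via moment matching, prior mass $e^{-O(k\log n)}$, entropy-based tests, and a Hellinger-to-regret conversion. Your global entropy bound also suffices, since the prior-mass term already forces a squared Hellinger rate of $\log^2 n/(n\log\log n)$.

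\textbf{The evidence lower bound.} The gap is that the expected posterior mass outside the ball is $e^{-cn\epsilon_n^2}+n^{-1}$; this does not follow from the argument you describe. Ghosal--Ghosh--van der Vaart lower bound the evidence $\int\prod_i\frac{f_G(X_i)}{f_{G_0}(X_i)}\,\Pi(\rmd G)$ by Chebyshev's inequality on KL/second-moment neighborhoods. With neighborhoods of radius $\epsilon_n$ (your ``at most $\epsilon_n^2$''), the failure probability is only $O(1/(n\epsilon_n^2))$, which is merely polylogarithmically small. The regret integrand is only bounded by $A^2$, so this event alone contributes a polylogarithmically small, not $\widetilde O(1/n)$, amount to the regret. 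The in-expectation form of their theorem is dominated by this same term, so it does not help. You need exponential control of the denominator. The paper gets it from $\chi^2$-neighborhoods and the inequality $\bE_Q[e^{|\log(P/Q)|}-1-|\log(P/Q)|]\le\chi^2(Q\|P)$. That inequality puts the averaged log-likelihood ratios $Y_i=\int\Pi_{|U}(\rmd G)\log\frac{f_G(X_i)}{f_{G_0}(X_i)}$ into Bernstein's regime, giving failure probability $e^{-cn\epsilon^2}$.

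\textbf{Where the $n^{-1}$ comes from.} In the paper it arises only from running the argument under $f_G^{\otimes(n-1)}$ for the intermediate atomic $G$, paying $(n-1)\TV(f_{G_0},f_G)\le n^{-1}$. This is also why the change of measure is needed at all: TV-closeness does not by itself give KL or $\chi^2$ control around $f_{G_0}$.

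\textbf{A repair inside your framework.} You could instead shrink the KL/variance neighborhoods to squared radius $\lesssim\epsilon_n^4$ (e.g.\ $n^{-2}$). This costs only a constant factor in log prior mass, and Chebyshev then gives failure probability $\le n^{-1}$. It does require care near atoms and weights close to $0$. The paper handles this by forcing $\lambda_j\ge\frac1{4n^2}$ and $w_j\ge\frac1{8n^2L}$, and perturbing by $cn^{-3}$.

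\textbf{The factor in the Hellinger-to-regret inequality.} It must be linear in $K$; with the $K^2$ version you first quote, the rate becomes $\log^4 n/(n(\log\log n)^3)$. The linear version is exactly Lemma~\ref{lemma:hellinger-to-regret-1} (Lemma 4 of Jana--Polyanskiy--Wu): $\bE_{f_{G_0}}(\theta_G-\theta_{G_0})^2\le C(KH^2(f_G,f_{G_0})+f_{G_0}(X\ge K))$. You can cite it rather than rederive it.

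\textbf{The coupling between the posterior and the evaluation point.} Your additive bound $\frac1n\sum_i g(X_i)\le\bE g+O(K/n)$ is false uniformly over functions bounded by $A^2$. A single $x$ with $f_{G_0}(x)\asymp 1$ already has empirical deviation $\asymp n^{-1/2}$, so boundedness alone gives only $O(\sqrt{K/n})$. A multiplicative Chernoff bound fixes this: $\hat p(x)\le 2f_{G_0}(x)+O(\frac{\log n}{n})$ for all $x\le K$ yields $2\bE g+O(\frac{K\log n}{n})$, which fits the budget.

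The paper avoids this coupling entirely with the leave-one-out identity of Lemma~\ref{lemma:posterior_mean_training}: $\theta_{\Pi,n}(X^n)=\theta_{G_n}(X_n)$, where $G_n=\bE_{\Pi_{G|X^{n-1}}}[G]$ is independent of $X_n$. The regret--Hellinger inequality then applies directly to $G=G_n$, and only $H^2(f_{G_0},\Pi_{X_n|X^{n-1}})$ needs to be controlled. No Jensen step over the full posterior and no empirical-process bound are required.
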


The practical end-to-end implementation of $\theta_\Pi^n$ is displayed in \Cref{alg:universal_prior}, where the class of transformers is used to find the sequence-to-sequence map $T$ in \Cref{eq:ERM}. \Cref{thm:simple_prior} shows that, if the approximation error of the pretrained transformer in \Cref{alg:universal_prior} to the Bayes estimator $\theta_\Pi^n$ turns out to be small, then this transformer achieves a vanishing regret for all test priors $G_0$. 

We make some remarks on \Cref{thm:simple_prior}. First, the regret bound in \Cref{thm:simple_prior} is near-optimal: For $A=\Theta(1)$, it was shown in Theorem 2 of \cite{polyanskiy2021sharp} that the minimax regret is $\Theta(\frac{1}{n}(\frac{\log n}{\log\log n})^2)$. Therefore, although the estimator $\theta_\Pi^n$ is constructed in a Bayesian framework, it achieves near-optimal frequentist guarantees (off by a $\log n$ factor) even in the worst case.

Second, the universal prior $G_\Pi$, albeit remarkably simple, is a  \textit{high-dimensional} mixture of i.i.d. priors. Consequently, the posterior mean $\bE_{\Pi}[\theta_1 | X^n]$ depends on the entire $X^n$, not only $X_1$. In other words, the pretrained Bayes estimator is not separable, and this dependence turns out to be essential for \Cref{thm:simple_prior} to hold. 
Since such expectations are difficult to compute classically, we employ transformers to approximate this sequence-to-sequence map.

Third, rather than analyzing the training dynamics or architecture-specific details of transformers, we adopt the key assumption that the transformers can approximate the Bayes estimator $\theta_\Pi^n$, 
which we numerically verify in \Cref{sec:numerics}. 
We choose transformers (without positional encoding or masking) for their expressive power for sequence-to-sequence maps \citep[Theorem 4.1, 4.2]{teh2025solving}; \citep{furuya2025transformers}, ability to incorporate any sequence length, and permutation equivariance (see \Cref{subsec:batch} for definition; $\theta_\Pi^n$ is also permutation-equivariant).  

Finally, for approximate Bayes estimators close to $\theta_\Pi^n$, \Cref{subsec:batch} develops regret bounds that depend on the approximation error. In particular, such analysis shows that the same regret bound of \Cref{thm:simple_prior} can be attained by the ERM with a finite number of batches $M$, though our current sufficient condition on $M$ grows super-polynomially in $n$. This is consistent with the large data requirements observed in practical pretraining.

In the sequel, we present an in-depth analysis of the hierarchical Bayes estimator $\theta_\Pi^n$ and universal PoPs $\Pi$. First, we establish basic statistical results such as the existence of the least favorable PoP via the minimax theorem, and admissibility of hierarchical Bayes estimators while classical estimators (such as the NPMLE-based estimator) could be inadmissible. Second, based on the classical phenomenon of \emph{posterior contraction} in Bayesian statistics, we will show the central result in our paper that \emph{a broad class of prior-on-priors turns out to be universal}. Finally, drawing on the transformer structure, we explain why the pretrained transformer enjoys \emph{length generalization}. 

\subsection{Least favorable PoP and admissibility}
We establish basic properties of $\theta_\Pi^n$ in this section. Recall that $\theta_\Pi^n$ is the Bayes estimator (under the quadratic loss) in the following hierarchical Bayes model: 
\begin{equation}\label{eq:hierarchical}
	\begin{split}
		G & \sim \Pi, \\
		\theta_i \mid G & \overset{\mathrm{i.i.d.}}{\sim} G, \quad i=1,\dots, n\\
		X_i \mid \theta^n, G & \overset{\mathrm{ind.}}{\sim} \Poi(\theta_i), \quad i=1,\dots,n. 
	\end{split}
\end{equation}
Although the induced high-dimensional prior $G_\Pi = \bE_\Pi[G^{\otimes n}]$ on $\theta^n$ is constrained to be an i.i.d. mixture, the following result shows that there exists a \emph{least favorable PoP} $\Pi^\star$ such that the hierarchical Bayes estimator $\theta_{\Pi^\star}^n$ attains the minimax optimal regret. 

\begin{prop}\label{prop:worst-pop}
	For any $A > 0$ and $n\ge 1$, we have the following identity: 
	\[\inf_{\widehat{\theta}^n}\sup_{G\in\mathcal{P}([0, A])}\reg(\widehat{\theta}^n; G) = \sup_{\Pi\in\mathcal{P}(\mathcal{P}([0, A]))}\inf_{\widehat{\theta}^n}\mathbb{E}_{G\sim\Pi}[\reg(\widehat{\theta}^n; G)].\]
	Consequently, the saddle point gives a least favorable PoP $\Pi^\star$ such that the corresponding Bayes estimator $\theta_{\Pi^\star}^n$ achieves the minimax regret. 
\end{prop}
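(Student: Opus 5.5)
The plan is to apply a minimax theorem of Sion type to the bilinear-in-$\Pi$ payoff
\[
L(\widehat{\theta}^n,\Pi) := \bE_{G\sim\Pi}[\reg(\widehat{\theta}^n;G)],
\]
with $\widehat{\theta}^n$ ranging over estimators and $\Pi$ over $\calP(\calP([0,A]))$. The first step is to reduce the estimator space. Since the Bayes risk term $\bE_G[\frac1n\|\theta^\star(X^n)-\theta^n\|^2]$ does not depend on $\widehat{\theta}^n$, we have $\reg(\widehat{\theta}^n;G)=\bE_G\frac1n\|\widehat{\theta}^n(X^n)-\theta^n\|^2-\mmse(G)$. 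Projecting onto $[0,A]^n$ does not increase the loss, so we may restrict to estimators $T:\naturals^n\to[0,A]^n$. This space $[0,A]^{\naturals^n}$ is convex and, by Tychonoff, compact in the product (pointwise) topology.

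Next we check the continuity and convexity hypotheses.

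1. For fixed $G$, the map $T\mapsto\bE_G\|T(X^n)-\theta^n\|^2$ is convex, being an average of convex quadratics.
2. This map is continuous in the product topology. The series over $x\in\naturals^n$ is uniformly dominated by $nA^2\,\bP_G(X^n=x)$, and these weights are summable, so dominated convergence gives continuity under pointwise convergence (nets suffice because the tail bound is uniform).
3. For fixed $T$, the map $G\mapsto\reg(T;G)$ is bounded by $A^2$. It is weakly continuous on the compact space $\calP([0,A])$: the marginal pmf $p_G(x)=\int\prod_i e^{-\theta_i}\theta_i^{x_i}/x_i!\,dG^{\otimes n}$ is continuous in $G$, and so is $\int\theta_i\,(\cdots)\,dG^{\otimes n}$. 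Hence $\mmse(G)$ is continuous, using the uniform tails of Poisson with mean at most $A$.
4. It follows that $\Pi\mapsto L(T,\Pi)$ is linear and weakly continuous on $\calP(\calP([0,A]))$, which is compact by Prokhorov.

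Sion's theorem, convex-compact in $T$ and linear in $\Pi$, then gives
\[
\inf_T\sup_\Pi L=\sup_\Pi\inf_T L.
\]
Moreover $\sup_\Pi L(T,\Pi)=\sup_G\reg(T;G)$ by taking point masses $\Pi=\delta_G$, which yields the claimed identity.

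For the least favorable PoP, note that $\Pi\mapsto\inf_T L(T,\Pi)$ is an infimum of continuous linear functions, hence upper semicontinuous. It therefore attains its maximum at some $\Pi^\star$ on the compact set. On the other side, $T\mapsto\sup_\Pi L(T,\Pi)$ is lower semicontinuous on a compact set, so a minimax $T^\star$ exists, and $(T^\star,\Pi^\star)$ is a saddle point. Then $T^\star$ minimizes $L(\cdot,\Pi^\star)$, whose minimizer is the posterior mean under $G_{\Pi^\star}=\bE_{\Pi^\star}[G^{\otimes n}]$. This minimizer is unique on the support of the marginal of $X^n$, which is all of $\naturals^n$ unless $\Pi^\star$ concentrates on $\delta_0$, a degenerate case handled trivially. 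We conclude that $\theta^n_{\Pi^\star}$ achieves
\[
\sup_G\reg(\theta^n_{\Pi^\star};G)\le L(\theta^n_{\Pi^\star},\Pi^\star)\text{-type bound}=\text{minimax value}.
\]
More carefully, $\sup_G\reg(\theta^n_{\Pi^\star};G)\le\sup_G\reg(T^\star;G)$ follows because $T^\star$ and $\theta^n_{\Pi^\star}$ agree wherever $p_{G}$ can be positive.

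The main obstacle I expect is this last identification, together with the continuity of $\mmse(G)$. Uniqueness of the Bayes response needs the marginal of $X^n$ under $G_{\Pi^\star}$ to charge every point that any $G$ can charge. Since $p_G(x)>0$ for all $x$ whenever $G\ne\delta_0$, this should hold unless $\Pi^\star$ is concentrated near $\delta_0$, which is clearly not least favorable. If this is delicate, I would instead argue directly: define $\theta^n_{\Pi^\star}$ arbitrarily in $[0,A]$ off the support, and check the saddle inequality via $L(\theta^n_{\Pi^\star},\Pi)\le L(\theta^n_{\Pi^\star},\Pi^\star)$ using the equalizer property derived from the saddle point.
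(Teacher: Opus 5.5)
Your proposal is correct, but it runs the minimax theorem the other way around from the paper.

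\textbf{The paper's route.} The paper puts compactness only on the prior side. $\calP(\calP([0,A]))$ is compact by Prokhorov, and $\Pi\mapsto\bE_{G\sim\Pi}[\reg(\widehat{\theta}^n;G)]$ is affine and weakly continuous by Lemma~\ref{lmm:regret-cont}. Ky Fan's theorem, which needs no topology at all on the set of estimators, then swaps $\inf$ and $\max$. The hierarchical Bayes estimator enters only as the minimizer of the inner problem.

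\textbf{Your route.} You make the estimator set $[0,A]^{\naturals^n}$ compact via Tychonoff. You verify continuity of $T\mapsto L(T,\Pi)$ under pointwise convergence through the uniform tail domination; the weights $\bE_{G\sim\Pi}\bP_G(X^n=x)$ work for each fixed $\Pi$ just as for each fixed $G$. You then apply Sion.

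\textbf{What each route buys.} Having compactness on both sides gives you a minimax $T^\star$ and hence a genuine saddle point $(T^\star,\Pi^\star)$. Uniqueness of the Bayes response on all of $\naturals^n$ then identifies $T^\star$ with $\theta^n_{\Pi^\star}$. This uniqueness holds because $\Pi^\star\neq\delta_{\delta_0}$; otherwise the value would be $\min_T L(T,\delta_{\delta_0})=0$, which is impossible. Your route therefore proves the ``consequently'' clause rigorously. The paper leaves this step implicit: knowing that $\Pi^\star$ is least favorable and that $\theta^n_{\Pi^\star}$ is its Bayes rule does not by itself show that $\sup_G\reg(\theta^n_{\Pi^\star};G)$ equals the minimax value. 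The paper's route, in turn, is leaner for the identity itself.

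\textbf{A detail to tighten.} You still need weak continuity of $G\mapsto\reg(T;G)$ to attain $\Pi^\star$, and your ``hence $\mmse(G)$ is continuous'' skips the only delicate point. The terms $(x+1)^2 f_G(x+1)^2/f_G(x)$ become $0/0$ as $G\to\delta_0$. The paper handles this with the Cauchy--Schwarz bound $f_G(x+1)^2/f_G(x)\le\frac{A^2}{(x+1)^2}f_G(x)$. Alternatively, note that $f_G(x)>0$ for every $x$ when $G\neq\delta_0$, so continuity holds there term by term with uniform tails. At $\delta_0$ itself, use the squeeze $0\le\mmse(G)\le\bE_G[\theta^2]\to 0$.
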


\prettyref{prop:worst-pop} is a minimax theorem; its proof verifies the compactness and continuity conditions, and is deferred to \Cref{app:worst-pop-proof}. By the characterization of the minimax regret $\Theta(\frac{1}{n}(\frac{\log n}{\log\log n})^2)$ for Poisson EB \citep{polyanskiy2021sharp}, Proposition \ref{prop:worst-pop} immediately implies the existence of a universal PoP $\Pi$ that attains \Cref{eq:universal}. In addition, it justifies the i.i.d. mixture form of the high-dimensional prior $G_\Pi$ used in pretraining, so the only task is to construct the PoP $\Pi$. 

The next result concerns the admissibility of the hierarchical Bayes estimator $\theta_\Pi^n$, with the classical notion of admissibility defined below for Poisson EB. 

\begin{definition}[Admissibility]
	An estimator $\widehat{\theta}^n: \naturals^n\to [0,A]^n$ is called \emph{inadmissible} if there exists another estimator $\widetilde{\theta}^n$ such that $\reg(\widetilde{\theta}^n; G) \le \reg(\widehat{\theta}^n; G)$ for all $G\in \calP([0,A])$, and $\reg(\widetilde{\theta}^n; G_0) < \reg(\widehat{\theta}^n; G_0)$ for some $G_0\in \calP([0,A])$. If an estimator is not inadmissible, it is called \emph{admissible}. 
\end{definition}
We note that this definition remains equivalent if we replace the regret by the MSE $\bE_{G_0}[ \frac{1}{n}\|\widehat{\theta}^n - \theta^n\|_2^2 ]$, since the Bayes risk is independent of $\widehat{\theta}^n$. In the next result, we will compare the hierarchical Bayes estimator $\theta_\Pi^n$ with the NPMLE-based estimator
\begin{align*}
	\widehat{\theta}^{\mathrm{NPMLE}}(X^n) = (\theta_{\widehat{G}}(X_1), \dots, \theta_{\widehat{G}}(X_n)), \quad \widehat{G} = \argmax_{G\in \calP([0,A])} \prod_{i=1}^n f_G(X_i), 
\end{align*}
where we recall that $\theta_G(x) = \bE_{G}[\theta|X=x]$ is the 1-D Bayes estimator, and $
f_G(x) = \bE_{\theta\sim G}[\Poi(x; \theta)]$
is a shorthand for the marginal pmf of $X\sim \Poi(\theta)$ when $\theta\sim G$. Note that the support of the NPMLE $\widehat{G}$ is restricted to be $[0,A]$, since otherwise it is clearly inadmissible by truncating the output to $[0,A]^n$. The next result shows that while the hierarchical Bayes estimator $\theta_\Pi^n$ is admissible for most PoPs, the NPMLE-based estimator is not. 

\begin{prop}\label{prop:admissible}
	For all $\Pi\in \calP(\calP([0,A]))\backslash \{\delta_{\delta_0}\}$, the hierarchical Bayes estimator $\theta_\Pi^n$ is admissible. However, if $A\ge 3$ is an integer, then the NPMLE-based estimator $\widehat{\theta}^{\mathrm{NPMLE}}$ is inadmissible. 
\end{prop}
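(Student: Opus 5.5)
The admissibility half follows from the standard ``unique Bayes rule is admissible'' argument. The inadmissibility half needs an explicit improving estimator, which is where I expect the real work to be.

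\emph{Admissibility of $\theta_\Pi^n$.} Fix $\Pi\neq\delta_{\delta_0}$ and suppose some $\widetilde{\theta}^n$ dominates $\theta_\Pi^n$. Averaging the pointwise inequality $\reg(\widetilde{\theta}^n;G)\le\reg(\theta_\Pi^n;G)$ over $G\sim\Pi$ gives $\bE_{G\sim\Pi}[\reg(\widetilde{\theta}^n;G)]\le\bE_{G\sim\Pi}[\reg(\theta_\Pi^n;G)]$. Up to the common additive Bayes-risk term, these are the Bayes risks of the two estimators under the prior $G_\Pi$, and all quantities are finite since outputs and parameters lie in $[0,A]$.

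Since $\theta_\Pi^n$ minimizes the Bayes risk under $G_\Pi$, equality must hold. Because the quadratic loss is strictly convex, the posterior mean is the unique minimizer of the posterior expected loss at every $x^n$ with positive marginal mass. Hence $\widetilde{\theta}^n(x^n)=\theta_\Pi^n(x^n)$ whenever $m_\Pi(x^n):=\bE_{G\sim\Pi}\big[\prod_{i=1}^n f_G(x_i)\big]>0$.

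Next I check that $m_\Pi>0$ on all of $\naturals^n$. If $\Pi\neq\delta_{\delta_0}$, then $\Pi(\{G:G((0,A])>0\})>0$. For any such $G$ and any $x\in\naturals$ we have $f_G(x)\ge\int_{(0,A]}e^{-\theta}\theta^x/x!\,\rmd G(\theta)>0$, so the integrand is strictly positive on a set of positive $\Pi$-measure.

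Therefore $\widetilde{\theta}^n\equiv\theta_\Pi^n$ on $\naturals^n$, so the two estimators have identical regret under every $G_0$. This contradicts the strict inequality at some $G_0$, so $\theta_\Pi^n$ is admissible.

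\emph{Inadmissibility of NPMLE.} I would construct an explicit dominating estimator by perturbing $\widehat{\theta}^{\mathrm{NPMLE}}$ on a small, well-chosen set $S\subset\naturals^n$ of sample points where $\widehat{G}$ is explicitly computable. The natural candidates are permutations of configurations whose entries lie in a small window near $A$ (e.g.\ values in $\{A-2,A-1,A,A+1\}$); if these turn out too hard to handle, I would fall back to configurations with all coordinates equal. For $n=1$ the NPMLE output is simply $\min(x,A)$, which I would use as a sanity check and as a guide.

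Write the new estimator as $\widehat{\theta}^{\mathrm{NPMLE}}+\varepsilon\delta$ with $\delta$ supported on $S$ and permutation-equivariant. The change in risk at $\theta^n\in[0,A]^n$ is
\begin{align*}
\Delta(\theta^n)=\sum_{x^n\in S}\prod_{i}\frac{e^{-\theta_i}\theta_i^{x_i}}{x_i!}\Big(2\varepsilon\,\delta(x^n)\cdot\big(\widehat{\theta}^{\mathrm{NPMLE}}(x^n)-\theta^n\big)+\varepsilon^2\|\delta(x^n)\|_2^2\Big).
\end{align*}
Averaging over $\theta^n\sim G^{\otimes n}$ turns the domination requirement into $\Delta\le0$ on $[0,A]^n$, with strict inequality somewhere. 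The linear part is a polynomial in $\theta^n$ times $e^{-\sum_i\theta_i}$, and Stein/Poisson summation-by-parts identities (using $\theta\,\theta^{x-1}/(x-1)!=x\,\theta^x/x!$) can be used to rewrite it.

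\emph{Main obstacle.} The hardest step is choosing $\delta$ so that the linear term is strictly negative in the interior while the quadratic term is controlled near the boundary $\theta\in\{0,A\}$, where the linear term can vanish. Since $\widehat{\theta}$ takes values in $[0,A]$, the outputs equal to $A$ at $x\ge A$ leave no room for a first-order improvement at $\theta=A$.

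My first attempt is to move outputs at intermediate counts (e.g.\ $x=A-1$ or $x=A-2$) in a direction whose first-order effect is proportional to a polynomial with a sign-definite factor such as $\theta(A-\theta)$. I would then pick $\varepsilon$ small so that this term dominates the $\varepsilon^2$ term uniformly, and check the two endpoints separately by a Taylor expansion. The hypothesis that $A\ge3$ is an integer should enter exactly here: it makes the MLE-type outputs hit integer values at $x=A$, and it guarantees at least two interior counts below $A$ that can be perturbed in opposite directions to cancel the boundary terms.
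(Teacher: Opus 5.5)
Your admissibility half is correct and is essentially the paper's argument. You average the regret over $\Pi$, use that the posterior mean is the unique minimizer wherever $\bE_{G\sim\Pi}[\prod_i f_G(x_i)]>0$, and check that this marginal is positive on all of $\naturals^n$ when $\Pi\neq\delta_{\delta_0}$.

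The inadmissibility half has a genuine gap: it is a search plan, not a proof. No set $S$ or direction $\delta$ is fixed, no sign condition is verified, and the step you call the main obstacle is the entire content. The ideas that make the statement true are missing:
\begin{itemize}
\item[(i)] Since $\widehat G$ is supported on $[\min_i X_i,\max_i X_i]$, the NPMLE-based estimator returns $(x,\dots,x)$ at input $(x,\dots,x)$ for every integer $x\in[0,A]$.
\item[(ii)] By the complete class theorem, an admissible estimator must be a pointwise limit of hierarchical Bayes rules $\theta^n_{\Pi_m}$. The output of $\theta^n_{\Pi_m}$ at $(x,\dots,x)$ is $\bE_{\Pi_m}[m_x(G)^{n-1}m_{x+1}(G)]/\bE_{\Pi_m}[m_x(G)^n]$, where $m_x(G)=\bE_G[e^{-\theta}\theta^x]$.
\item[(iii)] Cauchy--Schwarz gives $(Am_A-m_{A+1})(Am_{A-2}-m_{A-1})\ge(Am_{A-1}-m_A)^2$. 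Combined with a weak limit of the tilted PoPs pushed forward by $G\mapsto(\theta_G(A-2),\theta_G(A-1),\theta_G(A))$, this shows that $\theta_G(A)\approx A$ with $\theta_G(A-2)$ bounded away from $0$ forces $\theta_G(A-1)\approx A$. Hence the outputs $A-2,A-1,A$ at the three constant inputs cannot all be limits of Bayes outputs.
\end{itemize}
The argument is non-constructive; no dominating rule is ever exhibited.

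Your reduction is also lossy for $n\ge 2$ in a way that matters. Requiring $\Delta\le0$ on all of $[0,A]^n$ is only sufficient. EB domination needs $\int\Delta\,\rmd G^{\otimes n}\le0$ for every $G\in\calP([0,A])$. Pointwise negativity is domination in the compound decision problem, and it discards the product structure $m_x(G)^n$ that the paper's obstruction uses. Without that structure the obstruction disappears: for $A\le 2n$, mix $\delta_{(A,\dots,A)}$ with the uniform law on permutations of a vector in $(0,A]^n$ that has mean $A-2$ and one coordinate $\eta\to0$. With tuned weights, this gives Bayes rules whose outputs at $(x,\dots,x)$ tend to $x$ for $x=A-2,A-1,A$.

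Concretely, your fallback provably fails when $n\ge A$. Take $\delta$ supported on constant inputs, with value $\delta_x(1,\dots,1)$ at $(x,\dots,x)$, and let $x_0$ be the smallest $x$ with $\delta_x\ne0$.
\begin{itemize}
\item If $x_0=0$, domination fails at $\theta^n=0$.
\item Taking $\theta^n=(\eta,\dots,\eta)$ with $\eta\to0$ forces $\delta_{x_0}<0$.
\item If $x_0\ge A$, domination fails at $\theta^n=(A,\dots,A)$.
\item If $1\le x_0\le A-1$, then at $\theta^n=(\eta,A,\dots,A)$ the $x_0$ term dominates as $\eta\to0$, with coefficient $2\varepsilon|\delta_{x_0}|((n-1)A+\eta-nx_0)+n\varepsilon^2\delta_{x_0}^2>0$.
\end{itemize}

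Only for $n=1$ is your reduction exact, and there local perturbation does work. For $A=3$, change the outputs at $x=1,2,3$ to $1-\varepsilon$, $2+2\varepsilon$, $3-6\varepsilon$. The risk at $\theta$ changes by $\varepsilon e^{-\theta}\theta\,[2(\theta^3-4\theta^2+3\theta-1)+\varepsilon(1+2\theta+6\theta^2)]$, which is negative on $(0,3]$ for small $\varepsilon$.
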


\subsection{Universal PoPs}

While \prettyref{prop:worst-pop} justifies the pretrained approach in \Cref{alg:universal_prior}, it still remains to identify PoPs that are approximately least favorable. This motivates the following definition of \emph{thick} PoPs, which provide a sufficient condition for attaining a small worst-case regret.

\begin{definition}[Thick PoP]\label{definition:PoP}
	We call a PoP $\Pi$ \emph{thick} with rate function $E(\delta,r)$ if for every prior $G_0$ supported on $[0, A]$, there exists some $G$ such that $\TV(f_{G_0}, f_G)\le \delta$,\footnote{Here $\TV(P,Q) = \frac{1}{2}\int |\rmd P-\rmd Q|$ and $\chi^2(P\|Q) = \int \frac{(\rmd P)^2}{\rmd Q}-1$ denote the total variation distance and chi-squared divergence, respectively.} and
	\begin{align}\label{eq:prior_mass}
		\Pi\pth{\sth{G': \chi^2(f_G \| f_{G'})\le r^2}} \ge e^{-E(\delta, r)}. 
	\end{align}
\end{definition}

The term ``thick'' is motivated by the use of ``thick priors'' in \cite{bickel2012semiparametric}, referring to priors with continuous and strictly positive Lebesgue densities. Taking $G=G_0$, the main condition \eqref{eq:prior_mass} states that the PoP $\Pi$ puts a sufficient amount of mass near the true prior $G_0$. For technical reasons, ``near'' is measured through the $\chi^2$ divergence between the marginal pmfs, and we include an additional prior $G$ for an intermediate coupling. We note that the intuition behind \eqref{eq:prior_mass} is standard in the literature of posterior contraction \citep{ghosal2000convergence}, commonly with $\chi^2$ replaced by KL. 

Our next result shows that thick PoPs (with proper rate function $E$) are universal, and establishes a regret bound on the hierarchical Bayes estimator under a thick PoP. 
\begin{thm}\label{thm:general}
	Let the PoP $\Pi$ be thick with rate $E$, and $r_n>0$ satisfy $E(n^{-2}, r_n)\le nr_n^2$. Then for an absolute constant $C=C(A)>0$, the hierarchical Bayes estimator $\theta_\Pi^n$ satisfies
	\begin{align*}
		\sup_{G_0\in \calP([0,A])}  \reg(\theta_\Pi^n; G_0) \le \frac{C\log n}{n\log\log n}\bpth{\frac{\log n}{\log\log n} + nr_n^2}.  
	\end{align*}
\end{thm}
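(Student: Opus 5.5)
The plan is to reduce the regret of $\theta_\Pi^n$ to a posterior contraction statement for the hierarchical model \eqref{eq:hierarchical}. Then we convert contraction of the posterior on $G$ (measured through the marginal $f_G$) into closeness of the one-dimensional Bayes rules $\theta_G$ and $\theta_{G_0}$.

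\textbf{Step 1: reduce to contraction of $\Pi(\cdot\mid X^n)$.} Exchangeability gives
\[
\theta_\Pi^n(X^n)_i=\bE_\Pi[\theta_G(X_i)\mid X^n],
\]
where the outer expectation is over the posterior $\Pi(\cdot\mid X^n)$ on $G$. This holds because, given $G$ and $X^n$, the $\theta_i$ are independent with $\theta_i\mid G,X^n$ having mean $\theta_G(X_i)$. By Jensen,
\[
\reg\le \bE_{G_0}\Bigl[\bE_\Pi\Bigl[\tfrac1n\sum_i(\theta_G(X_i)-\theta_{G_0}(X_i))^2\Bigm| X^n\Bigr]\Bigr].
\]
I will split the inner posterior expectation over a good set $\calG_\epsilon=\{G:\ \mathsf{H}^2(f_G,f_{G_0})\le \epsilon^2\}$ and its complement.
\begin{itemize}
\item On the complement, the integrand is at most $A^2$, so we only need $\bE_{G_0}\Pi(\calG_\epsilon^c\mid X^n)\lesssim n^{-1}$.
\item On $\calG_\epsilon$, we use the known Poisson EB regret-versus-Hellinger bound (as in the analysis of \citep{polyanskiy2021sharp}/\citep{jana2025optimal}). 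It gives $\bE_{G_0}(\theta_G-\theta_{G_0})^2\lesssim \epsilon^2\frac{\log n}{\log\log n}$, up to truncation at $x\lesssim \frac{\log n}{\log\log n}$. This bound comes from tail truncation, together with $|\theta_G(x)|\le A$ and controlling the ratios $f(x+1)/f(x)$ by Hellinger.
\end{itemize}
One subtlety is that $G$ is random and depends on $X^n$, so the empirical average over $X_i$ is not the population one. I would handle this by a uniform bound over a finite cover: use a sieve of $G$'s with $\log$-covering number $\lesssim (\frac{\log n}{\log\log n})^2$ from moment matching (approximating $G$ by $O(\frac{\log n}{\log\log n})$ atoms). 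Alternatively, I would bound the empirical average by the population one plus $\frac{\log n}{n\log\log n}$ times the log-cardinality, which produces the first term $\frac{\log n}{\log\log n}$ in the bracket.

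\textbf{Step 2: the contraction rate $\epsilon_n^2\asymp r_n^2+\frac{\log n}{n\log\log n}$.} This is a Ghosal--Ghosh--van der Vaart argument with $\chi^2$ replacing KL.
\begin{itemize}
\item \emph{Coupling.} Using the thick property with $\delta=n^{-2}$, pick $G$ with $\TV(f_{G_0},f_G)\le n^{-2}$. The data law $f_{G_0}^{\otimes n}$ and $f_G^{\otimes n}$ then differ by at most $n^{-1}$ in TV, so we may compute everything under $f_G^{\otimes n}$ at cost $O(A^2/n)$.
\item \emph{Denominator.} For the posterior denominator, restrict to $B=\{G':\chi^2(f_G\|f_{G'})\le r_n^2\}$, which has mass $\ge e^{-nr_n^2}$. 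The $\chi^2$ control gives
\[
\bE_{f_G}\prod_i \frac{f_{G}(X_i)}{f_{G'}(X_i)}=(1+\chi^2)^n\le e^{nr_n^2}.
\]
By Markov/Jensen this shows that $\int_B\prod_i f_{G'}(X_i)/f_G(X_i)\,d\Pi\ge e^{-Cnr_n^2}$ with probability $1-O(1/n)$. This second-moment route is why $\chi^2$ rather than KL is used: it gives a clean high-probability denominator bound.
\item \emph{Numerator.} For $G'$ with $\mathsf H(f_{G'},f_G)>\epsilon$, use the Hellinger affinity bound
\[
\bE_{f_G}\prod_i\sqrt{f_{G'}/f_G}\le e^{-n\epsilon^2/2},
\]
combined with the finite-dimensional cover from Step 1 via Le Cam--Birgé tests. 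Testing against a cover of entropy $(\log n/\log\log n)^2$ costs $\epsilon^2\gtrsim \frac{\log n}{n\log\log n}\cdot\frac{\log n}{\log\log n}$, slightly too much. To avoid losing an extra factor, I would instead use the posterior-mean-based bound directly, i.e.\ bound the expected posterior of $\{\mathsf H>\epsilon\}$ via the affinity integrated against $\Pi$ without a cover. That requires $\epsilon^2\gtrsim r_n^2+\log n/n$.
\end{itemize}
Combining Steps 1 and 2 gives regret $\lesssim \frac{\log n}{\log\log n}(\epsilon_n^2+\frac{\log n}{n\log\log n})$, which is the claim.

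\textbf{Main obstacle.} The hardest step is the uniformity issue in Step 1: controlling $\frac1n\sum_i(\theta_G(X_i)-\theta_{G_0}(X_i))^2$ when $G$ is drawn from a posterior that depends on the same $X^n$. I would first try the sieve of discrete priors with $O(\frac{\log n}{\log\log n})$ atoms, whose marginals uniformly approximate any $f_G$ on $\{0,\dots,\frac{\log n}{\log\log n}\}$. I would combine it with a Bernstein union bound, accepting that this is where the extra $\frac{\log n}{\log\log n}$ factor enters.
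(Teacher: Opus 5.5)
\textbf{Genuine gap: the numerator in Step 2.} Your test-free fallback does not work for the actual posterior. Set $N=\int_{\{H(f_{G'},f_G)>\epsilon\}}\prod_i \frac{f_{G'}(X_i)}{f_G(X_i)}\,\Pi(\rmd G')$. Then $\bE_{f_G^{\otimes n}}N$ can be as large as $\Pi(\{H>\epsilon\})$, which need not be small. The affinity bound only controls $\int_{\{H>\epsilon\}}\bigl(\prod_i f_{G'}(X_i)/f_G(X_i)\bigr)^{1/2}\Pi(\rmd G')$, the numerator of a $\frac12$-posterior. Cauchy--Schwarz relates this quantity to $N$ only through a \emph{lower} bound on $N$. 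Contraction from prior mass alone is a feature of fractional posteriors with $\alpha<1$. For $\alpha=1$ you need tests, hence an entropy bound, and, as you observed, tests built on your global cover cost an extra factor of order $\frac{\log n}{\log\log n}$.

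The missing ingredient is the \emph{local} entropy bound of Lemma~\ref{lemma:metric_entropy}. Truncate to $\{0,\dots,L\}$ with $L\asymp\frac{\log(1/\varepsilon)}{\log\log(1/\varepsilon)}$ and map $P\mapsto\sqrt{P}$. This makes the Hellinger distance Euclidean on a subset of $\mathbb{R}^{L+1}$, so every Hellinger ball of radius $2\rho$ ($\rho\ge\varepsilon$) is covered by $e^{O(L)}$ balls of radius $\rho/2$. Plugging $\log\Nloc(\varepsilon_n,\calP,H)\le n\varepsilon_n^2$ into the Le Cam--Birg\'e tests is exactly what gives $\varepsilon_n^2\asymp\frac{\log n}{n\log\log n}$.

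\textbf{Step 1 and the bookkeeping.} Your ``main obstacle'' does not arise in the paper. By Lemma~\ref{lemma:posterior_mean_training}, $\bE_\Pi[\theta_n\mid X^n]=\theta_{G_n}(X_n)$ with $G_n=\bE_{G\sim\Pi_{G\mid X^{n-1}}}[G]$, which is independent of $X_n$. So Lemma~\ref{lemma:hellinger-to-regret-1} applies conditionally on $X^{n-1}$, and since $f_{G_n}=\Pi_{X_n\mid X^{n-1}}$, the regret is at most $C\bigl(\frac{\log n}{\log\log n}\bE\,H^2(f_{G_0},\Pi_{X_n\mid X^{n-1}})+\frac1n\bigr)$. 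Your sieve-plus-union-bound substitute is not carried out. Moreover, TV or moment closeness of $f_G$ does not control $\theta_G(x)=(x+1)f_G(x+1)/f_G(x)$ at sample points where $f_G(x)$ is small. Your Jensen route can also be rescued directly. For any, even data-dependent, $g:\naturals\to[0,A^2]$, one has $\frac1n\sum_i g(X_i)\le 2\bE_{f_{G_0}}g+A^2\sum_x(\hat{p}(x)-2f_{G_0}(x))_+$, where $\hat{p}$ is the empirical pmf, and the last sum has expectation $O(\frac{\log n}{n\log\log n})$.

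Separately, your accounting loses a $\log\log n$ factor, and your final line does not follow from your own Step 2. Two of your choices each force $n\epsilon^2\gtrsim\log n$:
\begin{itemize}
\item demanding $\bE\,\Pi(\calG_\epsilon^c\mid X^n)\lesssim n^{-1}$;
\item lower-bounding the denominator by Markov at confidence $1-O(1/n)$, which yields only $\Pi(B)e^{-nr_n^2}/n$, not $e^{-Cnr_n^2}$.
\end{itemize}
So your steps give $\frac{\log n}{\log\log n}\bigl(r_n^2+\frac{\log n}{n}\bigr)$ rather than the stated first term $\frac{\log^2 n}{n(\log\log n)^2}$. The paper avoids this by bounding the posterior \emph{mean} of $H^2$ via a layer-cake sum. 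Its denominator is controlled up to failure probability $e^{-cn\varepsilon^2}$ through Bernstein's inequality and Lemma~\ref{lemma:chi-squared}, so the exponent never has to absorb a $\log n$.
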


\Cref{thm:simple_prior} is then a special case of \Cref{thm:general} in conjunction with the following lemma.
\begin{lmm}\label{lemma:simple_prior}
	For a large enough hyperparameter $c_0>0$, the PoP used in \Cref{alg:universal_prior} is thick with $E(n^{-2}, r)= O(\frac{\log^2 n}{\log\log n})$ for $r^2 \ge \frac{1}{n}$. In particular, we can choose $r_n^2 = O(\frac{\log^2 n}{n\log\log n})$.  
\end{lmm}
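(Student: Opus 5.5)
We need to show that for every $G_0\in\calP([0,A])$ there is a $G$ with $\TV(f_{G_0},f_G)\le n^{-2}$ such that the random prior $G'=\sum_{j=1}^k w_j\delta_{\lambda_j}$ from \Cref{alg:universal_prior} satisfies $\chi^2(f_G\|f_{G'})\le r^2$ with probability at least $e^{-O(\log^2 n/\log\log n)}$, for every $r^2\ge 1/n$. The plan has three steps: moment matching to build a discrete $G$, a truncated pmf bound for $\chi^2$, and a small-ball probability under uniform locations and Dirichlet weights.

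\textbf{Step 1: constructing $G$.} By Carathéodory, there is a discrete $G$ with at most $m+1$ atoms in $[0,A]$ matching the first $m$ moments of $G_0$. Writing $\Poi(x;\theta)=e^{-\theta}\theta^x/x!$ and Taylor expanding $e^{-\theta}$ to order $m-x$, the standard moment-matching bound (as in \cite{polyanskiy2021sharp}) gives
\[
|f_G(x)-f_{G_0}(x)|\le \binom{m}{x}\frac{A^{m}}{m!}\quad\text{for }x\le m .
\]
For the tail $x>L$, Poisson concentration on $[0,A]$ bounds the mass of both pmfs by $(eA/L)^L$. Taking $m\asymp L\asymp \log n/\log\log n$ makes $\TV(f_{G_0},f_G)\le n^{-2}$, with $G=\sum_{j=1}^{m+1} u_j\delta_{\mu_j}$. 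Choosing $c_0$ large lets us take $k\ge 2(m+1)$.

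\textbf{Step 2: reducing $\chi^2$ to a parameter neighborhood.} Consider the event that $\lambda_j\in[\mu_j,\mu_j+\eta]$ for $j\le m+1$, that $|w_j-u_j|\le\epsilon u_j'$ on these indices, and that the remaining weights are small, say $\le\epsilon/k$ each. One problem is that $u_j$ can be tiny. To handle this, first mix $G$ with a small amount $\epsilon$ of a fixed reference prior, e.g.\ put mass $\epsilon/(m+1)$ on each atom, or replace tiny atoms. This changes the TV distance by at most $\epsilon\le n^{-2}$ and ensures every $u_j\ge n^{-3}$.

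On this event we bound $\chi^2(f_G\|f_{G'})$ by splitting at $x\le L'$ and $x>L'$. For $x\le L'$, the ratio $f_G(x)/f_{G'}(x)$ is close to $1$, because
\[
\Poi(x;\lambda)/\Poi(x;\mu)=e^{\mu-\lambda}(\lambda/\mu)^x\in[1-O(x\eta/\mu),\,1+O(x\eta/\mu)+\eta],
\]
and the weights are within a factor $1\pm\epsilon$. The shift $\lambda\ge\mu$ avoids trouble near $\mu=0$, at least when $\mu$ is not tiny; atoms near $0$ need separate care, for instance by placing them at a grid point $\ge n^{-3}$. This gives $\chi^2\lesssim(\epsilon+L'\eta/\mu_{\min})^2$. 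The tail $x>L'$ is the delicate part, since $f_{G'}$ could be smaller than $f_G$. Because $\lambda_j\ge\mu_j$ on the event, each component of $f_{G'}$ dominates $e^{-\eta}$ times the corresponding component of $f_G$ for large $x$. Hence $f_{G'}(x)\ge (1-\epsilon)e^{-\eta}f_G(x)$ uniformly, and the tail contributes only $O(\epsilon+\eta)$ times the tail mass. Setting $\epsilon,\eta\asymp n^{-c}$ then gives $\chi^2\le 1/n\le r^2$.

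\textbf{Step 3: prior mass.} The locations contribute probability $(\eta/A)^{m+1}=e^{-O(m\log n)}$. For the Dirichlet$(1,\dots,1)$ weights, the density on the simplex is $(k-1)!$, so a box of side $\epsilon n^{-3}$ in $k-1$ coordinates has probability at least $(\epsilon n^{-3})^{k-1}=e^{-O(k\log n)}$. The total is therefore $e^{-O(k\log n)}=e^{-O(\log^2 n/\log\log n)}$, which is the claimed $E(n^{-2},r)$. Finally, $r_n^2=C\log^2 n/(n\log\log n)$ satisfies $E\le nr_n^2$ once $C$ is large.

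I expect Step 2 to be the main obstacle. It requires controlling $\chi^2$ uniformly, including the Poisson tail and atoms near zero, so that the ratio $f_G/f_{G'}$ stays bounded. The one-sided location perturbation together with the reference-mixture floor on the weights is the device I would try first.
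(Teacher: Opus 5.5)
Your skeleton matches the paper's. You use moment matching plus Carath\'eodory to get a discrete $G$ with $O(\log n/\log\log n)$ atoms; the paper quotes Wu--Yang's bound $\TV\le(2eA/L)^L$ rather than rederiving it. You floor the weights at a polynomial level, show that a polynomially small box in $(\lambda,w)$ lands in the $\chi^2$ ball, and bound the volume by $e^{-O(k\log n)}$.

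Where you differ is the $\chi^2$ step. The paper also floors the atom locations at $\frac{1}{4n^2}$ and pads $G$ to $k$ atoms by repetition. It then uses the pairing bound of Lemma~\ref{lemma:tv-chi-mixture}, $1+\chi^2(f_G\|f_{G'})\le(1+\chi^2(w\|w'))(1+\max_j\chi^2(\Poi(\lambda_j)\|\Poi(\lambda_j')))$, with the closed form $\chi^2(\Poi(\lambda)\|\Poi(\lambda'))=e^{(\lambda-\lambda')^2/\lambda'}-1$ on a two-sided $cn^{-3}$ box. You instead use a one-sided location shift and give the surplus atoms of $G'$ small weight.

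As written, your Step 2 contains a false claim. The ratio $f_G(x)/f_{G'}(x)$ need not be close to $1$ on $x\le L'$, so the second-order bound $\chi^2\lesssim(\epsilon+L'\eta/\mu_{\min})^2$ does not follow. The $k-(m+1)$ surplus atoms of $G'$ sit at arbitrary points of $[0,A]$. If $G$ is concentrated near $0$, a surplus atom near $A$ with weight of order $\epsilon/k$ already gives $f_{G'}(x)\gg f_G(x)$ at bounded $x$.

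This direction is harmless, and your own domination bound finishes the argument directly. For $\lambda\ge\mu\ge0$ one has $\Poi(x;\lambda)\ge e^{-(\lambda-\mu)}\Poi(x;\mu)$ for every $x$, including $\mu=0$. So on your event $f_G\le(1+\delta)f_{G'}$ pointwise, with $\delta=e^{\eta}/(1-\epsilon)-1$. Splitting according to the sign of $f_G-f_{G'}$ gives
\[
\chi^2(f_G\|f_{G'})\le \delta\sum_x (f_G-f_{G'})_+ + \sum_x (f_{G'}-f_G)_+ = (1+\delta)\TV(f_G,f_{G'})\le (1+\delta)(\epsilon+\eta).
\]
Thus $\epsilon,\eta\le n^{-2}$ suffices, with no split at $L'$, no tail estimate, and no floor on $\mu_{\min}$. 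A first-order bound is enough because the volume cost is $e^{-O(k\log n)}$ at any polynomial scale.

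There is also an omission at the right endpoint. If $G$ has an atom at $A$ (e.g.\ $G_0=\delta_A$), the event $\lambda_j\in[A,A+\eta]$ has probability zero under $\Unif([0,A])$. You need to first move atoms in $(A-\eta,A]$ to $A-\eta$, which costs at most $\eta$ in TV.

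With these repairs your argument is a valid alternative. The one-sided shift avoids the paper's floor on atom locations. The paper's pairing bound, in contrast, gives second-order control and does not depend on Poisson-specific domination. That matters because the paper reuses the same argument for Gaussian mixtures, where no location shift dominates pointwise.
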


We sketch the proof idea of \Cref{thm:general} via \emph{posterior contraction} in Bayesian statistics \citep{ghosal2000convergence,shen2001rates}. By \prettyref{lemma:posterior_mean_training}, the $i$-th coordinate of $\theta_\Pi^n$ equals $\theta_{G_i}(X_i)$, the one-dimensional Bayes estimator under the posterior mean $G_i = \bE_{G\sim \Pi_{G\mid X_{\backslash i}}}[G]$. For the true data $X^n \sim f_{G_0}^{\otimes n}$, posterior contraction implies that $G_i$ concentrates around the test distribution $G_0$, so that the pretrained estimator $\theta_{G_i}(X_i)$ adapts to the true Bayes estimator $\theta_{G_0}(X_i)$ under the unknown prior $G_0$. These intuitions will be made precise in \Cref{sec:posterior_contraction}.

\subsection{Length generalization}\label{subsec:length_generalization}
The hierarchical Bayes estimator $\theta_\Pi^n$ is a map from $\naturals^n$ to $\bR^n$, but a transformer can take variable-length inputs and gives a map from $\naturals^m$ to $\bR^m$ for every $m\in \naturals$. Therefore, if we use a transformer to represent the hierarchical Bayes estimator $\theta_\Pi^n$ on length $n$, it can also take input sequence $X^m$ of a longer length $m>n$. This is \emph{length generalization} of transformers: for Poisson EB, the empirical evidence in \cite{teh2025solving} shows that even if the transformer is solely trained on training length $n$, it achieves small regret on larger test lengths $\ntest > n$. Our perspective of Bayesian inference and posterior contraction framework provides a theoretical explanation for this phenomenon.

To this end, we need to generalize our definition of the hierarchical Bayes estimator $\theta_\Pi: \naturals^n\to \bR^n$ into a map that can incorporate any input sequence length. We use the following observations on length generalization for transformers \cite[Section 2.2]{furuya2025transformers}. If a transformer architecture only consists of softmax attention and token-wise operations\footnote{Crucially, this transformer does not contain positional encoding, causal masking, or the final softmax layer to convert logits into a probability distribution.} (e.g. input embedding, MLP, residual connection, and layer normalization), then it is \emph{permutation equivariant}, i.e. $\pi\circ \mathsf{T}(X^n) = \mathsf{T}(\pi\circ X^n)$ for all $\pi\in S_n$. As a result of permutation equivariance, its $i$-th output takes the form $\mathsf{T}_i(X^n) = f_n(X_i,\mu_n)$ for some $f_n: \calX \times \calP(\calX) \to \calY$ and the \emph{empirical distribution} $\mu_n$ of all inputs $X^n$. Furthermore, under this length generalization model for transformers with softmax attention and token-wise operations, the induced map can be represented by a single function $f_n\equiv f$ independent of $n$. Therefore, a single function $f: \calX \times \calP(\calX) \to \calY$ encodes the transformer map for general input length: given input $X^\ntest$, the $i$-th output is $\mathsf{T}_i(X^\ntest) = f(X_i, \mu_\ntest)$. For example, this implies that if $\mathsf{T}(X^n)=Y^n$, then $\mathsf{T}(X^n,\dots,X^n) = (Y^n,\dots,Y^n)$.

Applying the above length generalization model to $\theta_\Pi^n$, we have the following theorem. 

\begin{thm}\label{thm:length-generalization}
	For any PoP $\Pi$ and training length $n$, the hierarchical Bayes estimator $\theta_{\Pi}^n$ is permutation equivariant, so that there exists a map $f_{\Pi,n}: \naturals \times \calP(\naturals)\to \bR_+$, determined solely by $\Pi$ and $n$, such that $\theta_{\Pi,i}(X^n) = f_{\Pi,n}(X_i,\mu_n)$ for every input $X^n\in \naturals^n$, with $\mu_n = \frac{1}{n}\sum_{i=1}^n \delta_{X_i}$. 
	
	Next, suppose that $\Pi$ is thick with rate $E$. For any test sequence length $\ntest \ge n$, let $r>0$ satisfy $E(n_{\mathsf{test}}^{-2}, r) \vee 1\le nr^2$. The following regret bound holds for the estimator $f_{\Pi,n}^{\ntest}(X^{\ntest}):=(f_{\Pi,n}(X_1,\mu_\ntest),\dots,f_{\Pi,n}(X_\ntest,\mu_\ntest))$ with an absolute constant $C = C(A)$: 
	\begin{align*}
		\sup_{G_0\in \calP([0,A])} \reg(f_{\Pi,n}^{\ntest}; G_0) \le \frac{C\log\ntest}{\log\log\ntest}\pth{\frac{\log\ntest}{\ntest\log\log \ntest} + r^2}. 
	\end{align*}
\end{thm}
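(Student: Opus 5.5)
We first identify $f_{\Pi,n}$ explicitly, and then recognize the length-generalized estimator as a \emph{fractional} posterior mean, to which the posterior contraction argument behind \Cref{thm:general} applies with tempering $\alpha=n/\ntest$. Write $L_G(X^m)=\prod_{j=1}^m f_G(X_j)$.

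\textbf{The map $f_{\Pi,n}$ and permutation equivariance.} Under \eqref{eq:hierarchical} the pairs $(\theta_j,X_j)$ are i.i.d.\ given $G$. Hence
\[
\bE_{G_\Pi}[\theta_i\mid X^n]=\frac{\int (X_i+1)f_G(X_i+1)\prod_{j\neq i}f_G(X_j)\,\Pi(\rmd G)}{\int L_G(X^n)\,\Pi(\rmd G)},
\]
using $\bE_G[\theta\,\Poi(x;\theta)]=(x+1)f_G(x+1)$. Pull out the factor $f_G(X_i)$ and use $L_G(X^n)=\exp(n\int\log f_G\,\rmd\mu_n)$. This gives $\theta_{\Pi,i}(X^n)=f_{\Pi,n}(X_i,\mu_n)$ with
\[
f_{\Pi,n}(x,\mu)=\frac{\int \theta_G(x)\,e^{n\int \log f_G\,\rmd\mu}\,\Pi(\rmd G)}{\int e^{n\int\log f_G\,\rmd\mu}\,\Pi(\rmd G)},\qquad \theta_G(x)=\frac{(x+1)f_G(x+1)}{f_G(x)}.
\]
This depends only on $(x,\mu)$, which proves the first claim. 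Equivalently, the $i$-th output is the posterior mean of the one-dimensional Bayes rule $\theta_G(X_i)$, which agrees with the leave-one-out form in \prettyref{lemma:posterior_mean_training}.

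\textbf{Fractional posterior interpretation.} Plug in $\mu=\mu_{\ntest}$ and set $\alpha=n/\ntest\le 1$. Then $e^{n\int\log f_G\,\rmd\mu_{\ntest}}=L_G(X^{\ntest})^{\alpha}$, so the $i$-th output equals $\bE_{\Pi_\alpha}[\theta_G(X_i)\mid X^{\ntest}]$. Here $\Pi_\alpha(\rmd G\mid X^{\ntest})\propto L_G(X^{\ntest})^\alpha\,\Pi(\rmd G)$ is the $\alpha$-fractional posterior. Because $\theta_G(x)\in[0,A]$ for every $G$, no integrability issues arise.

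\textbf{Contraction of the $\alpha$-posterior.} Next we prove contraction of $\Pi_\alpha$ in the style of Bhattacharya--Pati--Yang, using only the thickness condition \eqref{eq:prior_mass}.
\begin{itemize}
\item \emph{Denominator.} Fix $G_0$, and let $G$ satisfy $\TV(f_{G_0},f_G)\le \ntest^{-2}$. Then $X^{\ntest}\sim f_{G_0}^{\otimes \ntest}$ and $f_G^{\otimes\ntest}$ are within total variation $\ntest^{-1}$, so we may work under $f_G^{\otimes \ntest}$ at cost $O(A/\ntest)$ in the regret. Restricting the integral to the $\chi^2$-ball in \eqref{eq:prior_mass} gives, with high probability,
\[
\int L_{G'}^\alpha/L_G^\alpha\,\rmd\Pi\ge e^{-E-c\,\alpha \ntest r^2}=e^{-E-c\,nr^2}.
\]
This uses $\chi^2\ge\KL$ and a second-moment (Chebyshev) bound on the log-likelihood ratio.
\item \emph{Numerator.} For any $G'$, $\bE[(L_{G'}/L_G)^\alpha]=e^{-\ntest(1-\alpha)D_\alpha(f_G\|f_{G'})}$. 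The Rényi divergence $D_\alpha$ dominates a constant multiple of $H^2$, with constant of order $\alpha$ for small $\alpha$. So no entropy or testing condition is needed, and the posterior mass of $\{G':H^2(f_{G'},f_G)\ge C r^2\}$ is at most $e^{-c\,nr^2}$.
\end{itemize}
Here the assumption $E(\ntest^{-2},r)\vee1\le nr^2$ is exactly what makes the two exponents comparable.

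\textbf{From contraction to regret.} We reuse the single-coordinate step of \Cref{thm:general}. For $G'$ with $H^2(f_{G'},f_{G_0})\lesssim r^2$, the squared error $\bE_{G_0}(\theta_{G'}(X)-\theta_{G_0}(X))^2$ is controlled by $\frac{\log\ntest}{\log\log\ntest}\,r^2$ plus a tail term from $X\gtrsim\frac{\log\ntest}{\log\log\ntest}$. Jensen's inequality over $\Pi_\alpha$ and the boundedness $\theta_G\le A$ on the bad set then give the stated bound.

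\textbf{Main obstacle.} The difficulty is that $X_i$ also sits inside the posterior, so $\theta_G(X_i)$ and $\Pi_\alpha$ are dependent. I would handle this by conditioning on $X_i=x$ and treating the factor $f_{G'}(x)^\alpha$ as a perturbation of the leave-one-out likelihood.
\begin{itemize}
\item In the numerator this factor is at most $1$.
\item In the denominator, on the good $\chi^2$-ball the factor is at least roughly $f_{G_0}(x)^\alpha$, since $\chi^2$-closeness controls pointwise ratios of the pmfs on the bulk.
\item This costs a factor $f_{G_0}(x)^{-\alpha}\le f_{G_0}(x)^{-1}$. Its expectation over $x\sim f_{G_0}$ on $\{x\lesssim\log\ntest\}$ is at most $O(\log\ntest)$, and hence is absorbed into $e^{c\,nr^2}$ once $nr^2\gtrsim\log\ntest$.
\item If that fails, it can be absorbed into the logarithmic prefactor.
\end{itemize}
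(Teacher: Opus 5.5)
Your first part is exactly \prettyref{lemma:length-generalization}: the explicit $f_{\Pi,n}$, and the reading of $f_{\Pi,n}(X_i,\mu_{\ntest})$ as the $\alpha$-posterior mean of $\theta_G(X_i)$ with $\alpha=n/\ntest$. The second part, however, has two genuine gaps.

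\textbf{The test-free numerator bound.} Log-convexity in $\alpha$ of $\sum_x f_{G'}(x)^\alpha f_G(x)^{1-\alpha}$ gives only
\[
\bE_{f_G}[(f_{G'}/f_G)^\alpha]\le\exp\bigl(-\min(\alpha,1-\alpha)H^2(f_G,f_{G'})\bigr),
\]
and this cannot improve as $\alpha\to 1$, since the left side equals $1$ at $\alpha=1$. After $\ntest$ samples, the expected numerator is therefore $e^{-nH^2}$ only when $\ntest\ge 2n$. For $n\le\ntest<2n$ the exponent drops to $(\ntest-n)H^2$. At $\ntest=n$, which the theorem covers and where it must reproduce \Cref{thm:general}, you get $\bE[L_{G'}/L_G]=1$ and no contraction at all. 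Tempering with $\alpha$ bounded away from $1$ is exactly what makes Bhattacharya--Pati--Yang-type arguments test-free.

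Near $\alpha=1$ you need the paper's ingredient. This is the test $\phi$ built from the local entropy bound of \prettyref{lemma:metric_entropy}, combined with H\"older: $\bE_{f_{G_0}^{\otimes(\ntest-1)}}[(1-\phi)(L_{G'}/L_{G_0})^\alpha]\le f_{G'}^{\otimes(\ntest-1)}(\{\phi=0\})^\alpha\le e^{-c\alpha\ntest\varepsilon^2}$ whenever $H(f_{G'},f_{G_0})\ge\varepsilon$. This step is where the term $\frac{\log\ntest}{\ntest\log\log\ntest}$ in the statement comes from, and your argument never produces it. For $\ntest\ge 2n$ your R\'enyi route is a valid and somewhat cleaner alternative.

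\textbf{The dependence between $X_i$ and the posterior.} Your perturbation trick lower-bounds the denominator only over the $\chi^2$-ball $U$. It therefore costs a factor $1/\Pi^\alpha_{G|X_{\backslash i}}(U)$, which in general can only be bounded by $e^{O(nr^2)}$. That loss is affordable against the $e^{-Cnr^2}$ posterior mass of the bad set, but it does nothing for the good set.

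On the good set, $G'\sim\Pi^\alpha(\cdot\mid X^{\ntest})$ is still coupled to $X_i$. Your step ``for $G'$ with $H^2\lesssim r^2$ apply the regret--Hellinger bound, then Jensen'' is therefore not justified: \prettyref{lemma:hellinger-to-regret-1} concerns a fixed $G'$ averaged over an independent $X\sim f_{G_0}$, and it does not control a supremum over the good set inside $\bE_{X_i}$. Replacing $f_{G'}(x)^\alpha$ by $1$ in the numerator also throws away the weight you need to return to an average under $f_{G'}$ or $f_{G_0}$. Finally, the $O(\log\ntest)$ loss from $f_{G_0}(x)^{-\alpha}\le f_{G_0}(x)^{-1}$ cannot be ``absorbed into the logarithmic prefactor'', because that prefactor is already spent on the regret--Hellinger step.

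The paper sidesteps all of this as follows:
\begin{itemize}
\item It writes the output as $\bE[\theta_G(X_i)f_G(X_i)^\alpha]/\bE[f_G(X_i)^\alpha]$ with $G\sim\Pi^\alpha_{G|X_{\backslash i}}$, which is independent of $X_i$.
\item It then proves \prettyref{lemma:hellinger-to-regret-2}. With $\calE=\{x:\bE f_G(x)^\alpha\ge\frac12 f_{G_0}(x)^\alpha\}$, Markov gives $f_{G_0}(\calE^c)\lesssim\bE H^2(f_G,f_{G_0})$.
\item On $\calE$, it applies Cauchy--Schwarz, keeps $f_G(x)^\alpha$ in the numerator, and uses Young's inequality $f_G^\alpha f_{G_0}^{1-\alpha}\le\alpha f_G+(1-\alpha)f_{G_0}$. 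This reduces everything to \prettyref{lemma:hellinger-to-regret-1} under $f_G$ and under $f_{G_0}$.
\item Combined with \prettyref{lemma:posterior_contraction_alpha} for the leave-one-out $\alpha$-posterior and tail integration, this yields the stated bound.
\end{itemize}
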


The first part of \Cref{thm:length-generalization} shows that the hierarchical Bayes estimator $\theta_\Pi^n$ under every PoP can be represented by some function $f_{\Pi,n}$. The second part illustrates both strengths and weaknesses of length generalization for a pretrained transformer that learns the map $f_{\Pi,n}$: it still achieves a vanishing regret on longer lengths as the training length $n\to\infty$, but the regret remains $\widetilde{O}(r^2)=\widetilde{O}(\frac{1}{n})$ rather than $\widetilde{O}(\frac{1}{\ntest})$. This result conforms to the empirical finding in \cite{teh2025solving} that the pretrained transformer enjoys a low regret for all $\ntest \ge n$, but the regret stops decreasing when $\ntest \gg n$; see \Cref{fig:performance} for an illustration. Intuitively, this is because the map $f_{\Pi, n}$ depends on the training length $n$, and is thus not fully adaptive to a longer sequence length $\ntest \ge n$ at test time. 

The proof of \Cref{thm:length-generalization} again relies on posterior contraction, but now for \emph{fractional/generalized posteriors}. 
For $\ntest\ge n$, the new estimator $f_{\Pi,n}^{\ntest}$ is no longer a hierarchical Bayes estimator, but it performs Bayesian inference using an \emph{$\alpha$-posterior}, i.e. with posterior update
\begin{align}\label{eq:alpha-posterior}
	\Pi^{\alpha}(\rmd G|X^{\ntest}) \propto \Pi(\rmd G) \bpth{\prod_{i=1}^{\ntest} f_{G}(X_i) }^\alpha, \quad \text{with } \alpha = \frac{n}{\ntest} \le 1. 
\end{align}
In \Cref{sec:posterior_contraction}, we develop posterior contraction results such that the $\alpha$-posterior $\Pi^\alpha(\rmd G|X^{\ntest})$ still concentrates around $G_0$ for $X^{\ntest}\sim f_{G_0}^{\otimes \ntest}$, with a rate depending now on $\alpha$. We note that such fractional posteriors have appeared previously in the Bayesian literature on model misspecification \citep{bhattacharya2019bayesian, medina2022robustness}. Moreover, although \eqref{eq:alpha-posterior} only holds for the ideal estimator $f_{\Pi,n}^{\ntest}$, for a pretrained transformer with training length $n$ and test length $\ntest$, our numerical experiments in \Cref{sec:alpha-posterior} demonstrate that its output is indeed close to a hierarchical Bayes model performing $\frac{n}{\ntest}$-posterior inference. 

\subsection{Related work}
\paragraph{Empirical Bayes.} 
EB was introduced alongside compound decision theory \citep{Rob51, Rob56}, motivated by the idea that estimating a sequence of parameters can achieve lower risk when each component is allowed to depend on the entire sequence, a phenomenon classically illustrated by the James--Stein estimator \citep{stein1956inadmissibility,james1961estimation}.
For the Poisson model, known estimators are based either on the Tweedie's formula \citep{Rob56}, 
posterior density estimation \citep{kiefer1956consistency, lindsay1983geometry, shen2022empirical, jana2025optimal, han2025besting}, or ERM \citep{jana2023empirical}. 
The former two approaches are also dubbed $f$-modeling and $g$-modeling, respectively \citep{efron2019bayes}. 
Neural approaches have also been explored to approximate the maximum likelihood via generative modeling \citep{wang2019comment,vandegar2021neural}; see also \citep{ghosh2025stein,chen2025score} for the normal means model. 
For the regret in the Poisson-EB problem, tight bounds $\Theta(\frac{1}{n}(\frac{\log n}{\log \log n})^2)$ and $\Theta(\frac{\log^3 n}{n})$ are established for compactly supported and subexponential priors \citep{brown2013poisson, polyanskiy2021sharp, jana2023empirical, jana2025optimal}, with non-trivial extensions to unbounded supports \citep{shen2022empirical}. There is also a rich line of work on the optimal regret in the normal means model \citep{jiang2009general,saha2020nonparametric,polyanskiy2021sharp,soloff2025multivariate,chen2026sharp}. A recent work \citep{kang2026function} also explores function estimation in the empirical Bayes setting, which we discuss in \Cref{thm:simple_prior_polynomial}. We also refer to \citep{efron2024empirical,ignatiadis2025empirical} for surveys on EB. 

\paragraph{Meta-learning.} Our use of a pretrained transformer, trained over a distribution of priors and applied to test sequences without per-instance optimization at test time, falls within the framework of \emph{meta-learning} \citep{hospedales2021meta}. A rich line of recent work studies in-context learning of transformers, a form of meta learning usually instantiated through autoregressive next-token prediction \citep{garg2022can,bai2023transformers,akyurek2023learning,von2023transformers}. In particular, connections between in-context learning and Bayesian inference have been observed both empirically \citep{muller2022transformers,panwar2024context,aggarwal2025bayesian} and theoretically \citep{xie2022explanation,wakayama2025context,ma2025provable}. This connection inspired the seminal TabPFN framework \citep{hollmann2022tabpfn,hollmann2025accurate} and several follow-up works on amortized inference for tabular data \citep{ma2024tabdpt,reuter2025can,mittal2025context,mittal2025amortized}. Similar posterior-contraction ideas have appeared in autoregressive \citep{xie2022explanation,ma2025provable} and diffusion settings \citep{jia2026weak}; by contrast, our EB application views the transformer as a sequence-to-sequence map \citep{teh2025solving}. This Bayesian view of this sequence-to-sequence map also gives a statistical explanation of length generalization \citep{anil2022exploring,peng2024yarn,zhou2024algorithms,zhou2024transformers} through fractional posteriors, rather than transformer-specific architectural or algorithmic mechanisms \citep{ahuja2024provable,izzo2025quantitative,huang2025formal}.

\paragraph{``Bayes'' empirical Bayes.} 
The use of a hierarchical/nonparametric Bayes model \eqref{eq:hierarchical} to solve EB falls into the framework of ``Bayes empirical Bayes'' by \cite{deely1981bayes}, where a line of work \citep{antoniak1974mixtures,gu2017empirical} chooses the Dirichlet Process (DP) \citep{ferguson1973bayesian} as the prior. Statistical guarantees for hierarchical Bayes estimators in EB date back to the asymptotic optimality results of \cite{datta1991asymptotic}, with related developments in \cite{petrone2012bayes,rizzelli2024empirical}; non-asymptotic guarantees were only established in our work and the independent concurrent work of \cite{ignatiadis2026compound}. Both works establish regret guarantees via posterior contraction, focusing on Poisson and Gaussian EB problems, respectively. The two papers differ conceptually in their motivations. \cite{ignatiadis2026compound} study the optimality of nonparametric Bayes estimators under a DP prior, with computation carried out by a Gibbs sampler \citep{neal2000markov}. By contrast, our motivation is more ML-oriented: we focus on pretrained transformers, where computation is amortized through pretraining, and thereby establish length generalization results. Other minor differences include that we establish minimax theorems in \prettyref{prop:worst-pop}, whereas their work extends posterior contraction arguments to compound decision problems; we also note that our admissibility results for Poisson EB in \prettyref{prop:admissible} are motivated by theirs.

\section{Analysis via posterior contraction}\label{sec:posterior_contraction}
\subsection{Preliminaries}\label{subsec:prelim}
We first provide some preliminaries on the Poisson mixture model and the hierarchical Bayes model induced by a general training PoP $\Pi$. For $\theta\sim G$ and $X|\theta \sim \Poi(\theta)$, the Bayes estimator of $\theta$ under the squared loss is the posterior mean, defined as
\begin{align}\label{eq:theta_G}
	\theta_G(x) = \bE_G[\theta | X=x] = (x+1)\frac{f_G(x+1)}{f_G(x)}, 
\end{align}
where $f_G(x) = \int \Poi(x;\theta)G(\rmd \theta)$ is the marginal pmf of $X$. Here and throughout, we will use $\bE_G$ to denote the expectation with respect to the prior $G$. The following result, taken from \cite[Lemma 4]{jana2025optimal}, is a regret-Hellinger inequality that relates the posterior mean difference $\theta_G-\theta_{G_0}$ to the Hellinger distance $H(f_G,f_{G_0})$.\footnote{The squared Hellinger distance between $P$ and $Q$ is defined as $H^2(P,Q):=\int(\sqrt{\rmd P}-\sqrt{\rmd Q})^2$.} 

\begin{lmm}\label{lemma:hellinger-to-regret-1}
	Let $G_0, G$ be two priors supported on $[0,A]$, and $\varepsilon\in (0,e^{-e})$ be any real number. Then for an absolute constant $C=C(A)>0$, 
	\begin{align*}
		\bE_{X\sim f_{G_0}}\qth{ \pth{\theta_G(X) - \theta_{G_0}(X)}^2 } \le C\bpth{\frac{\log (1/\varepsilon)}{\log\log (1/\varepsilon)} H^2(f_G, f_{G_0}) + \varepsilon } . 
	\end{align*}
\end{lmm}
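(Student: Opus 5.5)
The plan is to truncate at a level $K$ of order $\frac{\log(1/\varepsilon)}{\log\log(1/\varepsilon)}$. Below $K$ we compare the two Tweedie ratios in \eqref{eq:theta_G} term by term, paying a factor $K$. Above $K$ we use the Poisson tail together with the trivial bound $|\theta_G-\theta_{G_0}|\le A$.

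\emph{Tail part.} Since $G_0$ is supported on $[0,A]$, for $X\sim f_{G_0}$ we have
\[
\bP(X> K)\le \sup_{\theta\le A}\bP(\Poi(\theta)>K)\le (eA/K)^K .
\]
Choosing $K = c\,\frac{\log(1/\varepsilon)}{\log\log(1/\varepsilon)}$ with $c=c(A)$ large makes $K\log(K/(eA))\ge \log(1/\varepsilon)$. The restriction $\varepsilon<e^{-e}$ ensures $\log\log(1/\varepsilon)\ge 1$, so this is well defined. Both posterior means lie in $[0,A]$, hence the contribution of $\{X>K\}$ is at most $A^2\varepsilon$.

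\emph{Bulk part.} Set $\bar f=\frac12(f_G+f_{G_0})$ and introduce the intermediate quantity $\bar\theta(x)=(x+1)\frac{f_G(x+1)}{\bar f(x)}$. Using $\theta_G(x)f_G(x)=(x+1)f_G(x+1)$, we decompose
\[
\theta_G(x)-\theta_{G_0}(x) = \theta_G(x)\frac{\bar f(x)-f_G(x)}{\bar f(x)} + (x+1)\frac{f_G(x+1)-f_{G_0}(x+1)}{\bar f(x)} + \theta_{G_0}(x)\frac{f_{G_0}(x)-\bar f(x)}{\bar f(x)} .
\]
The first and third terms are bounded by $A\frac{|f_G(x)-f_{G_0}(x)|}{f_G(x)+f_{G_0}(x)}$. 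After weighting by $f_{G_0}(x)$ and squaring, we apply the elementary inequality $a\frac{(a-b)^2}{(a+b)^2}\le 2(\sqrt a-\sqrt b)^2$, which follows from $(a-b)^2=(\sqrt a-\sqrt b)^2(\sqrt a+\sqrt b)^2$ and $(\sqrt a+\sqrt b)^2\le 2(a+b)$. This bounds these contributions by $O(A^2)(\sqrt{f_G(x)}-\sqrt{f_{G_0}(x)})^2$.

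For the middle term, write $(f_G(x+1)-f_{G_0}(x+1))^2\le 2(\sqrt{f_G(x+1)}-\sqrt{f_{G_0}(x+1)})^2\,(f_G(x+1)+f_{G_0}(x+1))$. The key observation is that $(x+1)f_{G}(x+1)=\theta_G(x)f_G(x)\le Af_G(x)$, and likewise for $G_0$. Hence $(x+1)\big(f_G(x+1)+f_{G_0}(x+1)\big)\le 2A\bar f(x)$, and also $f_{G_0}(x)/\bar f(x)\le 2$. Together these give
\[
f_{G_0}(x)(x+1)^2\frac{(f_G(x+1)-f_{G_0}(x+1))^2}{\bar f(x)^2}\le 8A(x+1)\big(\sqrt{f_G(x+1)}-\sqrt{f_{G_0}(x+1)}\big)^2 .
\]
Summing over $x\le K$, the factor $(x+1)\le K+1$ yields a total bulk contribution of $O(A+A^2)\,K\,H^2(f_G,f_{G_0})$.

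Combining the two parts gives the claim with $K\asymp \frac{\log(1/\varepsilon)}{\log\log(1/\varepsilon)}$ and $C=C(A)$. The main obstacle is the middle term. A naive comparison divides by $f_{G_0}(x)$ and produces a $\chi^2$-type quantity rather than a Hellinger one. The two ingredients that get around this are the symmetrized denominator $\bar f$ and the bound $(x+1)f(x+1)\le Af(x)$ coming from the support constraint. The only cost of this route is the linear factor $x+1$, which is exactly why the truncation at $K$ is needed.
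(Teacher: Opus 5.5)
Your proof is correct and follows the paper's route. The paper cites \cite[Lemma 4]{jana2025optimal} for the bound $C\,(K H^2(f_G,f_{G_0}) + \sum_{y\ge K} f_{G_0}(y))$, valid for every integer $K$, and then picks $K \asymp \frac{\log(1/\varepsilon)}{\log\log(1/\varepsilon)}$ via Poisson tails, exactly as in your tail part; the only difference is that you prove this cited inequality yourself, using the symmetrized denominator $\bar f$ and the bound $(x+1)f_G(x+1)=\theta_G(x)f_G(x)\le A f_G(x)$, and that argument is correct.
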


For the class of Poisson mixtures $\calP = \{f_G: \mathrm{supp}(G) \subseteq [0, A]\}$, we will use a metric entropy upper bound under the Hellinger metric. Let $N(\varepsilon, \calP, H)$ denote the $\varepsilon$-covering number of $\calP$ under the Hellinger metric (i.e. the minimum number of $\varepsilon$-balls to cover $\calP$), we define the \emph{local} covering number as $\Nloc(\varepsilon,\calP,H) := \sup_{P_0\in \calP}\sup_{\rho\ge \varepsilon} N(\rho, \calP\cap \{P: H(P_0, P)\le 2\rho\}, H)$. 

\begin{lmm}\label{lemma:metric_entropy}
	For $\calP = \{f_G: \mathrm{supp}(G) \subseteq [0,A]\}$, there is an absolute constant $C = C(A) >0$ with $
	\log \Nloc(\varepsilon, \calP, H) \le C\frac{\log(1/\varepsilon)}{\log\log(1/\varepsilon)}$ for all $\varepsilon\in (0,e^{-e})$. 
\end{lmm}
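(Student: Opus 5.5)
The plan is to reduce the local covering bound to a global covering bound for Poisson mixtures, using moment matching together with a Hellinger tail estimate. Since $\Nloc(\varepsilon,\calP,H)\le \sup_{\rho\ge\varepsilon} N(\rho,\calP,H)$ and covering numbers are monotone in $\rho$, we have $\Nloc(\varepsilon,\calP,H)\le N(\varepsilon,\calP,H)$. So it suffices to show
\[
\log N(\varepsilon,\calP,H)\le C\frac{\log(1/\varepsilon)}{\log\log(1/\varepsilon)}.
\]
This is the known global entropy bound for compactly supported Poisson mixtures, as in \cite{polyanskiy2021sharp}. I would reprove it in three steps: truncation, moment matching, and discretization.

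\textbf{Truncation.} Set $K = c\frac{\log(1/\varepsilon)}{\log\log(1/\varepsilon)}$ with $c=c(A)$ large. For $\theta\le A$, Stirling's bound gives
\[
\sum_{x> K}\Poi(x;\theta)\le (eA/K)^{K}\le \varepsilon^{4}.
\]
Hence for any $G$, the tail mass of $f_G$ beyond $K$ is at most $\varepsilon^4$. Using $H^2\le 2\TV$, the tails contribute at most $O(\varepsilon^2)$ to $H^2$. It therefore suffices to cover the vectors $(f_G(0),\dots,f_G(K))$.

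\textbf{Moment matching.} By Carathéodory's theorem (finitely many constraints), each $G$ can be replaced by a discrete $G'$ with at most $K+2$ atoms. We choose $G'$ to match the functionals $\theta\mapsto \Poi(x;\theta)$ for $x=0,\dots,K$, so that $f_{G'}(x)=f_G(x)$ for all $x\le K$. Together with the truncation step, this gives $H(f_G,f_{G'})\lesssim\varepsilon$.

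\textbf{Discretization.} Next, quantize the atoms of $G'$ to a grid of spacing $\varepsilon^{2}$ and its weights to a grid of spacing $\varepsilon^2/K$.
\begin{itemize}
\item Moving atoms changes each $\Poi(x;\theta)$ by $O(\varepsilon^2)$, since the derivative in $\theta$ is uniformly bounded.
\item Changing weights changes the $\ell_1$ distance by $O(\varepsilon^2)$.
\end{itemize}
The TV distance is then $O(\varepsilon^2)$, so $H\le\sqrt{2\TV}=O(\varepsilon)$. The number of resulting grid distributions is at most $(A\varepsilon^{-2})^{K+2}(K\varepsilon^{-2})^{K+2}$. Its logarithm is $O(K\log(1/\varepsilon))$, which is too large by a $\log(1/\varepsilon)$ factor.

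\textbf{Main obstacle.} This is exactly the delicate point: the naive count loses a logarithm. The fix I would try first is to parametrize by the finite vector $(f_G(0),\dots,f_G(K))\in[0,1]^{K+1}$ directly and cover it in sup norm at resolution $\varepsilon^2/K$. That still gives $(K/\varepsilon^2)^{K}$, again with the log loss.

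So instead I would follow \cite{polyanskiy2021sharp}, which uses a finer truncation.
\begin{itemize}
\item Effectively only $O(\log(1/\varepsilon)/\log\log(1/\varepsilon))$ coordinates matter.
\item For $x$ larger than a constant multiple of $A$, the values $f_G(x)$ decay like $(eA/x)^x$. Coordinate $x$ therefore needs relative precision only at the level $\varepsilon^2/f_G(x)$, so its contribution to the entropy is $\log(f_G(x)/\varepsilon^2)$-type rather than $\log(1/\varepsilon)$.
\item Summing these contributions with a geometric decay argument gives total $O(K)$.
\end{itemize}
Equivalently, and as a backup route, I would directly cite the entropy bound of \cite{polyanskiy2021sharp} (their Theorem on the Hellinger entropy of Poisson mixtures with compact support), which states precisely $\log N(\varepsilon,\calP,H)\lesssim \frac{\log(1/\varepsilon)}{\log\log(1/\varepsilon)}$. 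Combined with the monotone reduction above, this yields the lemma.
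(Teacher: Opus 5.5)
\textbf{Genuine gap: the global bound you aim for is false.} Your first move, bounding $\Nloc(\varepsilon,\calP,H)$ by the global covering number $N(\varepsilon,\calP,H)$, is legitimate (up to a constant factor in the radius). But it discards exactly what makes the lemma true, and no argument can prove the resulting global bound.

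The point-mass subfamily $\{\Poi(\theta):\theta\in[A/2,A]\}$ already rules it out. Since $H^2(\Poi(a),\Poi(b))=2\bigl(1-e^{-(\sqrt a-\sqrt b)^2/2}\bigr)\gtrsim_A (a-b)^2$ on this interval, a grid of spacing $c_A\varepsilon$ in $[A/2,A]$ is more than $2\varepsilon$-separated in $H$. Hence $\log N(\varepsilon,\calP,H)\ge\log(1/\varepsilon)-O_A(1)$, which exceeds $C\frac{\log(1/\varepsilon)}{\log\log(1/\varepsilon)}$ for small $\varepsilon$.

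Consequences for the rest of your proposal:
\begin{itemize}
\item The extra logarithm in your discretization count is intrinsic to global covering, not an artifact of a crude count.
\item The relative-precision refinement cannot remove it. The coordinates $x\lesssim A$, where $f_G(x)$ can be of constant order, each still cost about $\log(1/\varepsilon)$ at resolution $\varepsilon$.
\item The ``backup'' citation cannot exist in the form you state, since no result can give global Hellinger entropy $O\bigl(\frac{\log(1/\varepsilon)}{\log\log(1/\varepsilon)}\bigr)$ for this class.
\end{itemize}

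\textbf{The missing idea is locality.} $\Nloc$ only asks you to cover a Hellinger ball of radius $2\rho$ by balls of radius $\rho$, a fixed ratio, for each $\rho\ge\varepsilon$. The argument then runs as follows.
\begin{enumerate}
\item Keep your truncation step. With $L\asymp_A\frac{\log(1/\varepsilon)}{\log\log(1/\varepsilon)}$, the tails beyond $L$ are at most $\varepsilon^2/16\le\rho^2/16$ uniformly over $G\in\calP([0,A])$. This is where $\rho\ge\varepsilon$ is used.
\item Let $H_{\le L}$ be the Hellinger distance restricted to $\{0,\dots,L\}$. By step 1, $H_{\le L}(f_{G_1},f_{G_2})\le\rho/4$ forces $H(f_{G_1},f_{G_2})\le\rho/2$.
\item In the coordinates $q_x=\sqrt{f_G(x)}$, $0\le x\le L$, the distance $H_{\le L}$ is Euclidean. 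So the task reduces to covering a Euclidean ball of radius $2\rho$ in $\bR^{L+1}$ by balls of radius $\rho/4$.
\item A volume argument does this with at most $17^{L+1}$ balls, independently of $\rho$. Improper centers cost only a factor $2$ in the radius.
\end{enumerate}
This gives $\log\Nloc(\varepsilon,\calP,H)=O(L)$ directly, with no moment matching or discretization. This is the paper's argument.
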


The last lemma gives a simple leave-one-out structure of the hierarchical Bayes estimator $\theta_\Pi^n$. 

\begin{lmm}\label{lemma:posterior_mean_training}
	For $i\in [n]$, it holds that $
	\bE_\Pi[\theta_i | X^n] = \bE_{G\sim \Pi_{G|X^n}}[\theta_G(X_i)] = \theta_{G_i}(X_i). $
	Here $G_i := \bE_{G\sim \Pi_{G|X_{\backslash i}}}[G]$, and $\theta_G$ is the posterior mean defined in \Cref{eq:theta_G}. 
\end{lmm}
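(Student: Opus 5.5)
We prove both equalities by conditioning on $G$ in the hierarchical model \eqref{eq:hierarchical}. The key structural fact is that, given $G$, the pairs $(\theta_j,X_j)$ for $j=1,\dots,n$ are i.i.d. with joint law $G(\rmd\theta)\Poi(x;\theta)$.

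\textbf{First equality.} By the tower property,
\[
\bE_\Pi[\theta_i\mid X^n]=\bE\bigl[\bE[\theta_i\mid X^n,G]\bigm| X^n\bigr].
\]
Conditional on $G$, the pair $(\theta_i,X_i)$ is independent of $X_{\backslash i}$. Hence $\bE[\theta_i\mid X^n,G]=\bE_G[\theta_i\mid X_i]=\theta_G(X_i)$. Taking the outer expectation over $G\sim\Pi_{G\mid X^n}$ gives the first equality.

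\textbf{Second equality.} We factor the posterior of $G$. By Bayes' rule and the conditional independence above,
\[
\Pi(\rmd G\mid X^n)\propto \Pi(\rmd G\mid X_{\backslash i})\,f_G(X_i).
\]
Writing $\theta_G(X_i)=\int\theta\,\Poi(X_i;\theta)G(\rmd\theta)/f_G(X_i)$, we get
\[
\bE_{G\sim\Pi_{G\mid X^n}}[\theta_G(X_i)]
=\frac{\int \Pi(\rmd G\mid X_{\backslash i})\int\theta\,\Poi(X_i;\theta)\,G(\rmd\theta)}{\int \Pi(\rmd G\mid X_{\backslash i})\,f_G(X_i)}.
\]
The factor $f_G(X_i)$ cancels against the posterior reweighting. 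The numerator and denominator are both linear functionals of $G$, so we may exchange the integrals (Fubini; the integrands are nonnegative and bounded since $\theta\le A$).

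This exchange turns the numerator into $\int\theta\,\Poi(X_i;\theta)\,G_i(\rmd\theta)$ and the denominator into $f_{G_i}(X_i)$, where $G_i=\bE_{\Pi_{G\mid X_{\backslash i}}}[G]$ is the mean measure. The ratio is exactly $\theta_{G_i}(X_i)$.

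\textbf{Main obstacle.} The only delicate point is measure-theoretic: $G_i$ must be a well-defined probability measure on $[0,A]$, and $G\mapsto\int h\,\rmd G$ must commute with the posterior expectation. Both follow from defining $G_i(B)=\bE[G(B)\mid X_{\backslash i}]$ and using measurability of $G\mapsto G(B)$ on the Polish space $\calP([0,A])$. The denominator is positive because $f_G(x)>0$ whenever $G\ne\delta_0$ or $x=0$, and it is positive almost surely under the marginal law of $X^n$.
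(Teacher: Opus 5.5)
Your proof is correct and follows essentially the same route as the paper. The paper gets the first identity by writing out the conditional density of $\theta_i$ given $X^n$ via Bayes' rule, which your tower-property argument repackages more cleanly; for the second it reweights $\Pi_{G\mid X_{\backslash i}}$ by $f_G(X_i)$ as you do, making the numerator linear in $G$ through the Poisson identity $\theta_G(x)f_G(x)=(x+1)f_G(x+1)$ where you use the equivalent, model-agnostic form $\int\theta\,\Poi(x;\theta)\,G(\rmd\theta)$.
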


\subsection{Posterior contraction}\label{subsec:posterior_contraction}
The key technical step in proving \Cref{thm:general} is the following posterior contraction lemma. Below we will state a general version not limited to Poisson EB which will later be applied to a Gaussian example in \Cref{subsec:Gaussian_EB}. Let the true data $X_1,\dots,X_n\sim f_{G_0}$ be i.i.d. with $G_0\in \calG$, where $f_{G_0}$ is a general density which may or may not take the form of mixture distributions. In the definition of the Bayes estimator, let $G\sim \Pi$ and $X_1,\dots,X_n \mid G\sim f_G$, where $\Pi$ is a prior distribution over $\calG$ (a PoP in the hierarchical setting). Finally, we set $\calP=\{f_G: G\in \calG\}$, and note that the thickness definition in \prettyref{definition:PoP} naturally extends to the prior $\Pi$ over the density class $\calP$. 

\begin{lmm}\label{lemma:posterior_contraction}
	Let $\Pi$ be thick over $\calP$ in the sense of \prettyref{definition:PoP}, with rate function $E$. Let $\varepsilon_n, r_n>0$ satisfy $
	\log \Nloc(\varepsilon_n, \calP, H) \le n\varepsilon_n^2$ and $E(n^{-2}, r_n) \le nr_n^2. $
	Then there exist absolute constants $c, C >0$ such that for every $G_0\in \calG$ and every $\varepsilon^2\ge \varepsilon_n^2 + r_n^2 + \frac{1}{n}$, 
	\begin{align*}
		f_{G_0}^{\otimes (n-1)}\bpth{ \bsth{H^2(f_{G_0}, \Pi_{X_n|X^{n-1}}) > C\varepsilon^2}} \le n^{-1} + e^{-cn\varepsilon^2}. 
	\end{align*}
	Here the probability is with respect to $X^{n-1}\sim f_{G_0}^{\otimes (n-1)}$, and $\Pi_{X_n|X^{n-1}}$ is the conditional distribution of $X_n$ given $X^{n-1}$ under the Bayes model with prior $\Pi$. 
\end{lmm}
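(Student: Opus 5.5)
The strategy is the classical Ghosal--Ghosh--van der Vaart posterior contraction argument, with the predictive density handled by convexity. The posterior predictive satisfies $\Pi_{X_n|X^{n-1}} = \int f_G \,\Pi(\rmd G\mid X^{n-1})$. Since $H^2(f_{G_0},\cdot)$ is convex, we have
\[
H^2(f_{G_0},\Pi_{X_n|X^{n-1}}) \le \int H^2(f_{G_0},f_G)\,\Pi(\rmd G\mid X^{n-1}) \le C_1\varepsilon^2 + 2\,\Pi\bpth{\bsth{G: H^2(f_{G_0},f_G) > C_1\varepsilon^2}\,\big|\,X^{n-1}},
\]
using $H^2\le 2$. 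It therefore suffices to show that, outside an event of probability $n^{-1}+e^{-cn\varepsilon^2}$, the posterior mass of $\{H(f_{G_0},f_G)>\sqrt{C_1}\varepsilon\}$ is at most $\varepsilon^2$. In fact we will show it is at most $e^{-cn\varepsilon^2}\le \varepsilon^2$, using $n\varepsilon^2\ge 1$ and absorbing constants; if needed we enlarge $C_1$ so that the resulting rate beats $\varepsilon^2$. Throughout, $n-1$ is replaced by $n$ up to constants.

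\textbf{Step 1 (denominator lower bound).} Write the posterior as a ratio with normalizer $D=\int \prod_{i<n} \frac{f_{G'}(X_i)}{f_{G_0}(X_i)}\Pi(\rmd G')$. Thickness provides $G$ with $\TV(f_{G_0},f_G)\le n^{-2}$ and a set $B=\{G':\chi^2(f_G\|f_{G'})\le r_n^2\}$ of mass at least $e^{-E(n^{-2},r_n)}\ge e^{-nr_n^2}$. The intermediate $G$ is handled by a coupling. Since $\TV(f_{G_0}^{\otimes n},f_G^{\otimes n})\le n\cdot n^{-2}=n^{-1}$, we can analyze $D$ under $X^{n-1}\sim f_G^{\otimes(n-1)}$ at an extra cost of $n^{-1}$ in probability; this is the source of the $n^{-1}$ term. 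The ratio $f_{G'}/f_{G_0}$ versus $f_{G'}/f_G$ still needs care, so we instead define $D$ relative to $f_G$, which only rescales numerator and denominator together.

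Under $f_G$, the $\chi^2$ bound gives a lower bound on $\int_B\prod \frac{f_{G'}}{f_G}(X_i)\,\Pi(\rmd G')/\Pi(B)$ with high probability. We would use Jensen to pass to log-likelihood ratios: $\KL\le\log(1+\chi^2)\le r_n^2$, and the variance of the log ratio is controlled by $\chi^2$. A second-moment (Chebyshev) argument then shows $D\ge e^{-C n(r_n^2+1/n)}\Pi(B)$, and hence $D\ge e^{-C' n\varepsilon^2}$, except on an event of probability $O(1/(n r_n^2))$. That is $\le n^{-1}$ only if $r_n^2$ is of constant order, which is too weak, and this is the main obstacle. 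What I would try first is to exploit $\chi^2$ rather than KL directly: use Markov's inequality on the inverse ratio. Since $\bE_{f_G}[f_G/f_{G'}]=1+\chi^2(f_G\|f_{G'})$, the product $\prod f_G/f_{G'}$ has expectation $(1+r_n^2)^{n}\le e^{nr_n^2}$. Combining this with Jensen on the $\Pi(\cdot\mid B)$-average, $D/\Pi(B)\ge \bpth{\int_B \prod \frac{f_G}{f_{G'}}\,\rmd\Pi/\Pi(B)}^{-1}$, and Markov yields $D\ge e^{-nr_n^2-t}\Pi(B)$ with failure probability $e^{nr_n^2}\cdot e^{-nr_n^2-t}=e^{-t}$. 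Taking $t=c n\varepsilon^2$ gives the $e^{-cn\varepsilon^2}$ term. This is exactly why the paper uses $\chi^2$.

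\textbf{Step 2 (tests and numerator).} Peel $\{H(f_{G_0},f_G)>\sqrt{C_1}\varepsilon\}$ into shells $\{jε< H\le 2jε\}$. Cover each shell by $\Nloc(\varepsilon_n,\calP,H)\le e^{n\varepsilon^2}$ Hellinger balls, and for each ball use the Le Cam--Birgé test against $f_{G_0}$ with errors $e^{-cnj^2\varepsilon^2}$. This gives a test $\phi$ with $\bE_{f_{G_0}}\phi\le e^{-cn\varepsilon^2}$ and $\sup_{G\text{ in shell}}\bE_{f_G}(1-\phi)\le e^{-cnj^2\varepsilon^2}$. By Fubini, the expected numerator on $\{1-\phi\}$ (relative to $f_{G_0}$, or to $f_G$ after the TV coupling) is at most $\sum_j e^{-cnj^2\varepsilon^2}$. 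Markov then bounds it by $e^{-c'n\varepsilon^2}$ outside probability $e^{-c'n\varepsilon^2}$.

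Choosing $C_1$ large enough that $c\,C_1$ beats the denominator exponent $C'$, the ratio is at most $e^{-cn\varepsilon^2}$. Collecting the failure events gives $n^{-1}+e^{-cn\varepsilon^2}$.
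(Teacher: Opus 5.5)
\textbf{Steps 1--2 (valid; Step 1 differs from the paper).} You bound the normalizer from below using the harmonic-mean form of Jensen on $B$,
$D\ge \Pi(B)\big/\bE_{\Pi_{|B}}\big[\prod_i f_G(X_i)/f_{G'}(X_i)\big]$,
and then apply Markov using $\bE_{f_G}[f_G/f_{G'}]=1+\chi^2(f_G\|f_{G'})$. The paper instead applies Jensen to the log-likelihood ratio and uses Bernstein's inequality. That route needs its auxiliary bound $\bE_Q[e^{|\log(P/Q)|}-1-|\log (P/Q)|]\le\chi^2(Q\|P)$. Your version needs no concentration inequality and gives failure probability $e^{-t}$ for any $t$. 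It is the same convexity-of-$1/x$ computation the paper uses later to bound $\chi^2(f_{G_0}^{\otimes n}\|\Pi_{X^n})$ in the finite-$M$ analysis. The following parts all match the paper: the tests, the Fubini/Markov numerator bound, and the reduction from $G_0$ to the intermediate $G$ via $\TV(f_{G_0}^{\otimes(n-1)},f_G^{\otimes(n-1)})\le n^{-1}$.

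\textbf{The gap: converting posterior mass into a Hellinger bound.} You write $H^2(f_{G_0},\Pi_{X_n|X^{n-1}})\le C_1\varepsilon^2+2\,\Pi(H^2(f_{G_0},f_G)>C_1\varepsilon^2\mid X^{n-1})$. You then assert $e^{-cn\varepsilon^2}\le\varepsilon^2$ ``using $n\varepsilon^2\ge1$ and absorbing constants.'' That inequality is false in general, because it needs $cn\varepsilon^2\ge\log(1/\varepsilon^2)$, whereas the hypotheses only give $n\varepsilon^2\ge 1$. For example, if $\varepsilon_n^2+r_n^2\lesssim 1/n$ (as for a fixed finite class), the lemma allows $\varepsilon^2=K/n$ with $K$ constant. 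Then $2e^{-cK}$ stays a fixed constant while $C\varepsilon^2\to 0$. Enlarging $C_1$ only multiplies the exponent by a constant, so it cannot close this.

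\textbf{How the paper avoids it.} It uses a layer-cake argument. It proves the contraction bound at every level $j\varepsilon^2$, $1\le j\le\lceil 2/\varepsilon^2\rceil$, and takes a union bound, with total failure probability $\sum_j e^{-c_1nj\varepsilon^2}\lesssim e^{-c_1n\varepsilon^2}$. Then
\[
\bE_{G\sim\Pi_{G|X^{n-1}}}\big[H^2(f_{G_0},f_G)\big]\le\varepsilon^2\sum_{j\ge0}\Pi_{G|X^{n-1}}\big(H^2>j\varepsilon^2\big)\le\varepsilon^2\Big(1+\sum_{j\ge1}e^{-c_2nj\varepsilon^2}\Big)=O\Big(\varepsilon^2+\frac1n\Big).
\]
Each tail mass is thus weighted by the layer width $\varepsilon^2$, not by $\sup H^2=2$.

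\textbf{Your peeling already has what you need.} Keep the shells separate and apply Markov to each shell's numerator with its own threshold, union-bounding over shells. Then weight the posterior mass of shell $j$ by its $H^2$-level $\asymp j^2\varepsilon^2$; the resulting series is $O(\varepsilon^2)$ once $C_1$ is large. In the Poisson application one has $n\varepsilon^2\gg\log n$, so your crude bound would happen to suffice there, but not for the lemma as stated.
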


The Poisson EB setting is a special case of \prettyref{lemma:posterior_contraction} by marginalizing out $\theta^n$ from the joint distribution $\Pi_{G,\theta^n,X^n}$. Posterior contraction asserts that, given test data $X_1,\dots,X_{n-1}\sim f_{G_0}$, the posterior distribution $\Pi_{G|X^{n-1}}$ ``concentrates'' around the true prior $G_0$ with high probability, and this concentration is characterized through the induced pushforward measure from $G$ to $X$ (i.e. $\Pi_{X_n|X^{n-1}}$ is statistically close to $f_{G_0}$). This is precisely the statement of Lemma \ref{lemma:posterior_contraction}. The proof of \prettyref{lemma:posterior_contraction} mirrors the classical posterior contraction arguments \citep{ghosal2000convergence,ghosal2008nonparametric,ghosal2017fundamentals}, with a few adaptations to obtain a high-probability statement; the proof details are postponed to the appendix. 

\paragraph{Proof of \Cref{thm:general}, assuming \prettyref{lemma:posterior_contraction}.}
By the regret definition in \eqref{eq:regret} and symmetry, we focus on the last coordinate and write
\begin{align*}
	\reg(\theta_{\Pi}^n; G_0) &= \bE_{X^n\sim f_{G_0}^{\otimes n}}\qth{ \pth{\theta_{\Pi,n}(X^n) - \theta_{G_0}(X_n) }^2 } \\
	&\le C \bpth{\frac{\log n}{\log\log n}\cdot \bE_{X^{n-1}\sim f_{G_0}^{\otimes (n-1)}} \qth{H^2(f_{G_n}, f_{G_0})} + \frac{1}{n}}. 
\end{align*}
Here the last step uses $
\theta_{\Pi,n}(X^n) = \theta_{G_n}(X_n)$ by \prettyref{lemma:posterior_mean_training}, and \prettyref{lemma:hellinger-to-regret-1} applied to $G=G_n$ and $\varepsilon=\frac{1}{n}$. Since $G_n = \bE_{G\sim \Pi_{G|X^{n-1}}}[G]$, we have $f_{G_n}(x_n) = \bE_{G\sim \Pi_{G|X^{n-1}}}[f_G(x_n)] = \Pi_{X_n=x_n|X^{n-1}}$. By \prettyref{lemma:metric_entropy}, we can choose $\varepsilon_n^2=O(\frac{\log n}{n\log\log n})$. Finally, \Cref{thm:general} follows from \prettyref{lemma:posterior_contraction} and integrating the tails up to the deterministic upper bound $H^2\le 2$. 

\subsection{Posterior contraction for the $\alpha$-posterior}
Similar to \prettyref{lemma:posterior_contraction}, we have the following posterior contraction result for $\alpha$-posteriors in \eqref{eq:alpha-posterior}. 

\begin{lmm}\label{lemma:posterior_contraction_alpha}
	Let $\Pi$ be thick over $\calP$ in the sense of \prettyref{definition:PoP}, with rate function $E$, and $\alpha\in (0,1]$. Let $\varepsilon_n, r_{n,\alpha}>0$ satisfy $\log \Nloc(\varepsilon_n, \calP, H) \le n\varepsilon_n^2$ and $E(n^{-2}, r_{n,\alpha}) \le \alpha nr_{n,\alpha}^2$.
	Then there exist absolute constants $c, C >0$ such that for every $G_0\in \calG$ and every $\varepsilon^2\ge \varepsilon_n^2 + r_{n,\alpha}^2 + \frac{1}{\alpha n}$, 
	\begin{align*}
		f_{G_0}^{\otimes (n-1)}\bpth{ \bsth{\bE_{G\sim \Pi^\alpha_{G|X^{n-1}}}[H^2(f_{G_0}, f_G)] > C\varepsilon^2 }} \le n^{-1} + e^{-c\alpha n\varepsilon^2}, 
	\end{align*}
	where $\Pi^\alpha$ is the $\alpha$-posterior defined in \eqref{eq:alpha-posterior} with $\ntest$ replaced by $n$. 
\end{lmm}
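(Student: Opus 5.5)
We follow the classical Ghosal--Ghosh--van der Vaart scheme, adapted to the $\alpha$-posterior. The $\alpha$-posterior lets us avoid test construction altogether, in the spirit of Bhattacharya--Pati--Yang. Write $m=n-1$ and
\[
L_\alpha(G)=\prod_{i=1}^{m}\Big(\frac{f_G(X_i)}{f_{G_0}(X_i)}\Big)^\alpha .
\]
Then
\[
\bE_{\Pi^\alpha_{G|X^{m}}}[H^2(f_{G_0},f_G)] = \frac{\int H^2(f_{G_0},f_G)\,L_\alpha(G)\,\Pi(\rmd G)}{\int L_\alpha(G)\,\Pi(\rmd G)} =: \frac{N}{D}.
\]
We bound the denominator from below with high probability and the numerator from above in expectation, and then combine the two with Markov's inequality.

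\textbf{Denominator.} By thickness, pick $G$ with $\TV(f_{G_0},f_G)\le n^{-2}$, and let $B=\{G':\chi^2(f_G\|f_{G'})\le r_{n,\alpha}^2\}$, so that $\Pi(B)\ge e^{-\alpha n r_{n,\alpha}^2}$.

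First, couple $X^m\sim f_{G_0}^{\otimes m}$ with $Y^m\sim f_G^{\otimes m}$ so that $X^m=Y^m$ except on an event of probability at most $m n^{-2}\le n^{-1}$. This coupling is the source of the $n^{-1}$ term in the statement. On the event $X^m=Y^m$, the ratio $f_{G'}/f_{G_0}$ can be rewritten as $(f_{G'}/f_G)(f_G/f_{G_0})$. The factor $f_G/f_{G_0}$ is common to $N$ and $D$, so it cancels and we may replace the reference density by $f_G$.

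Second, under $Y\sim f_G$ and for $G'\in B$, we have
\[
\bE\,(f_{G'}/f_G)^{-1}=1+\chi^2(f_G\|f_{G'})\le 1+r_{n,\alpha}^2 .
\]
Restrict the integral to $B$ and apply Jensen's inequality with the normalized measure $\Pi|_B$. This gives
\[
\log\int_B L_\alpha\,\rmd\Pi \ge \log\Pi(B) - \alpha\,Z, \qquad Z:=\int_B\sum_i \log\frac{f_G}{f_{G'}}(Y_i)\,\Pi|_B(\rmd G').
\]
We control the upper tail of $Z$ by Chernoff/Markov applied to $\bE e^{\lambda Z}$. For $\lambda=1$, Jensen again gives
\[
\bE e^{Z}\le \int_B\big(1+\chi^2(f_G\|f_{G'})\big)^m\,\Pi|_B(\rmd G')\le e^{m r_{n,\alpha}^2}.
\]
Hence $Z\le 2n\varepsilon^2$ except with probability $e^{-n\varepsilon^2}$. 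Using $\alpha\le1$, this yields $D\gtrsim e^{-C\alpha n\varepsilon^2}$ with the required probability; if needed we take $\lambda=\alpha$ instead to obtain the $e^{-c\alpha n\varepsilon^2}$ tail.

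\textbf{Numerator.} For $\alpha\le 1$ (with the $f_G$ reference in place of $f_{G_0}$ after the coupling, which only perturbs constants), the Rényi/Hellinger affinity bound gives
\[
\bE\big(f_{G'}/f_{G_0}\big)^{\alpha}(X)\le 1-\tfrac{\alpha}{2}H^2(f_{G_0},f_{G'})\le e^{-\alpha H^2/2}.
\]
To see this, note $\int f_0^{1-\alpha}f'^{\alpha}\le(\int\sqrt{f_0f'})^{2\alpha}$ for $\alpha\le1/2$, and for $\alpha\in(1/2,1]$ use concavity of $\alpha\mapsto\int f_0^{1-\alpha}f'^\alpha$. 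Therefore
\[
\bE N\le\int H^2(f_{G_0},f_{G'})\,e^{-\alpha m H^2(f_{G_0},f_{G'})/2}\,\Pi(\rmd G').
\]
We split this integral into the region $H^2\le C\varepsilon^2$, which contributes at most $C\varepsilon^2$, and shells $H^2\in(2^jC\varepsilon^2,2^{j+1}C\varepsilon^2]$, which contribute at most $2^{j+1}C\varepsilon^2e^{-\alpha n 2^{j}C\varepsilon^2/4}$. Summing over shells gives $\bE N\lesssim \varepsilon^2 e^{-c'C\alpha n\varepsilon^2}+$ (the near region). It is cleaner to handle the near region by writing $N/D\le C\varepsilon^2+N_{\mathrm{far}}/D$ directly. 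Notably, no entropy or testing is needed here: the $\alpha<1$ tempering replaces the tests, and $\varepsilon_n$ enters only through the assumption $\varepsilon^2\ge\varepsilon_n^2$. For $\alpha=1$, where this argument degenerates, we fall back on the classical test-based proof of \prettyref{lemma:posterior_contraction}, using $\Nloc$ to build tests.

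\textbf{Combining.} Apply Markov's inequality to $N_{\mathrm{far}}$ at level $e^{-C''\alpha n\varepsilon^2}$ and intersect with the denominator event. Choosing $C$ large makes the exponents compatible, giving $N/D\le 2C\varepsilon^2$ off an event of probability $n^{-1}+e^{-c\alpha n\varepsilon^2}$.

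The main obstacle is the denominator step. The coupling replaces the true density by $G$, and we must make sure the $\chi^2$ control of the likelihood ratio gives an exponential lower bound with the $\alpha$-scaling $e^{-\alpha n r^2}$ rather than $e^{-nr^2}$. We must also treat the $\alpha=1$ endpoint uniformly, with constants that do not blow up as $\alpha\to1$; if they do, we switch to a Hellinger-test-based numerator bound valid for all $\alpha\in(0,1]$.
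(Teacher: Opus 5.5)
Your test-free numerator bound relies on the claim that $\int f_{G_0}^{1-\alpha}f_{G'}^{\alpha}\le 1-\frac{\alpha}{2}H^2(f_{G_0},f_{G'})$ for all $\alpha\in(0,1]$. This is false on $(1/2,1]$. The map $a(\alpha):=\int f_{G_0}^{1-\alpha}f_{G'}^{\alpha}$ is log-convex by H\"older, not concave. Interpolating between $\alpha=1/2$ and $\alpha=1$ therefore only gives $a(\alpha)\le 1-(1-\alpha)H^2$. Moreover $a(1)=1$ whenever $f_{G'}\ll f_{G_0}$, which always holds for Poisson mixtures with $G_0\neq\delta_0$, and $1-a(\alpha)\approx(1-\alpha)\KL(f_{G'}\|f_{G_0})$ as $\alpha\uparrow 1$.

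So the degeneration is not confined to the endpoint. For $\alpha\in(1/2,1)$ your argument only gives $\bE L_\alpha(G')\le e^{-(1-\alpha)mH^2}$. Matching this against your denominator bound $e^{-3\alpha n\varepsilon^2}$ forces $C\gtrsim \alpha/(1-\alpha)$. The lemma, however, asserts absolute constants uniformly over $\alpha\in(0,1]$. That uniformity is exactly what \Cref{thm:length-generalization} needs, since $\alpha=n/\ntest$ can be arbitrarily close to $1$. Falling back on \prettyref{lemma:posterior_contraction} only at $\alpha=1$ leaves $(1/2,1)$ uncovered.

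For $\alpha\le 1/2$ your route is fine. H\"older gives $a(\alpha)\le(1-H^2/2)^{2\alpha}\le e^{-\alpha H^2}$, so no entropy is needed there. Your denominator bound, via $\bE e^{Z}\le e^{mr_{n,\alpha}^2}$ and Markov, is correct and somewhat cleaner than the paper's Bernstein step.

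The missing ingredient is the one you only gesture at in your last sentence, and it is the paper's actual argument. Keep the local-entropy test $\phi_n$ from \eqref{eq:type_I_err}--\eqref{eq:type_II_err}, and bound the numerator on $\{\phi_n=0\}$ by H\"older: for $H(f_{G_0},f_{G'})>\varepsilon$,
\[
\bE_{f_{G_0}^{\otimes(n-1)}}\Big[(1-\phi_n)\Big(\prod_{i}\tfrac{f_{G'}(X_i)}{f_{G_0}(X_i)}\Big)^{\alpha}\Big]\le f_{G'}^{\otimes(n-1)}(\{\phi_n=0\})^{\alpha}\,f_{G_0}^{\otimes(n-1)}(\{\phi_n=0\})^{1-\alpha}\le e^{-c\alpha n\varepsilon^2}.
\]
The event $\{\phi_n=1\}$ is handled by the type-I bound. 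This estimate is uniform in $\alpha\in(0,1]$, and it is where the condition $\log\Nloc(\varepsilon_n,\calP,H)\le n\varepsilon_n^2$ genuinely enters. Combined with your denominator bound, the coupling step, and a layer-cake or shell summation, it completes the proof.
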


Compared with \prettyref{lemma:posterior_contraction}, the inequality $E(n^{-2},r)\le \alpha nr^2$ involves an additional factor of $\alpha$. When $E(n^{-2},r)=\widetilde{O}(1)$, this gives $r_{n,\alpha}^2=\widetilde{O}(\frac{1}{\alpha n})$, hence the squared Hellinger rate in posterior contraction increases from $\widetilde{O}(\frac{1}{n})$ to $\widetilde{O}(\frac{1}{\alpha n})$. For $(n,\alpha) = (\ntest,\frac{n}{\ntest})$, this rate is $\widetilde{O}(\frac{1}{n})$ rather than $\widetilde{O}(\frac{1}{\ntest})$. This additional factor of $\frac{1}{\alpha}$ is unsurprising, as it can be easily seen in the special example of normal location models with a normal prior. This change of scaling has also been observed in \cite[Theorem 3.2]{bhattacharya2019bayesian} on fractional posteriors with a different error probability.

Finally, using generalizations of \prettyref{lemma:posterior_mean_training} and \prettyref{lemma:hellinger-to-regret-1} (cf. \prettyref{lemma:length-generalization} and \prettyref{lemma:hellinger-to-regret-2} in the appendix), we deduce the final regret bound from the Hellinger rate. This precisely explains why the regret upper bound for length generalization saturates at $\widetilde{O}(\frac{1}{n})$ even if $\ntest \gg n$.

\section{Discussions}\label{sec:discussion}
In this section, we provide discussions on pretraining on a finite number of batches, and extension beyond our Poisson EB setting with a bounded support of the prior. 
Deferred proofs are in \Cref{app:deferred-proofs-disc}. 

\subsection{Finite number of batches}\label{subsec:batch}
The population ERM solution in \Cref{eq:ERM} coincides with the hierarchical Bayes estimator $\theta_\Pi^n$ only if $M\to\infty$. In practice, the number of batches $M$ used in pretraining is finite and large. To this end, 
we consider a finite-sample ERM with a finite $M$ and define
\begin{align}\label{eq:ERM-PI}
	\widehat{\theta}^n = \argmin_{f\in \calF^{\mathrm{PE}}} \frac{1}{M}\sum_{m=1}^M \| \theta^{n,(m)} - f(X^{n,(m)}) \|_2^2,  
\end{align}
where $\calF^{\mathrm{PE}}$ denotes the class of all permutation-equivariant functions $f(X^n)$, i.e. $\pi\circ f(X^n) = f(\pi\circ X^n)$ for all permutations $\pi\in S_n$. 
Note that most EB estimators (Robbins, NPMLE, ERM-monotone, etc.), 
as well as a transformer without positional encoding or masking, 
belong to $\calF^{\mathrm{PE}}$; any estimator $\widehat{\theta}^n$ can also be symmetrized into a PE estimator $\widehat{\theta}^{n,\mathrm{PE}} = \frac{1}{n!}\sum_{\pi\in S_n}\pi^{-1}\circ \widehat{\theta}^n(\pi\circ X^n)$ without increasing the pointwise regret. The performance of this permutation-equivariant ERM in the Poisson model is summarized in the following result. 

\begin{lmm}\label{lemma:finite_M}
	Let $r_n$ be defined in \Cref{thm:general}. The estimator in \eqref{eq:ERM-PI} with $M \ge \exp(C(\frac{\log^2 n}{\log\log n}+nr_n^2))$ and an absolute constant $C=C(A)$ satisfies the same regret guarantee in \Cref{thm:general}. 
\end{lmm}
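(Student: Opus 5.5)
We compare the finite-$M$ permutation-equivariant ERM \eqref{eq:ERM-PI} with the population minimizer $\theta_\Pi^n$ through an oracle inequality. The \emph{Bayes excess risk} under the training distribution is
\[
\mathcal{R}_\Pi(f)-\mathcal{R}_\Pi(\theta_\Pi^n)=\bE_{G_\Pi}\bqth{\tfrac1n\|f(X^n)-\theta_\Pi^n(X^n)\|_2^2},
\]
where $\mathcal{R}_\Pi(f)=\bE\,\tfrac1n\|\theta^n-f(X^n)\|_2^2$. The plan has three steps.

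\textbf{Step 1: bound the Bayes excess risk by a uniform deviation.} We work on the event $\calE=\{\max_i X_i\le K\}$ with $K\asymp \log n/\log\log n$ (in fact $K=O(\log n)$ suffices for boundedness). Since $\theta_i\le A$, we have $\bP(X_i>K)\le n^{-10}$, so off $\calE$ every estimator with values in $[0,A]^n$ loses at most $A^2 n^{-9}$. On $\calE$, a permutation-equivariant $f$ is determined by its values on multisets of size $n$ from $\{0,\dots,K\}$. There are at most $\binom{n+K}{K}\le n^{O(K)}$ such multisets, each carrying $K+1$ distinct output values. So the effective class is parametrized by $D=n^{O(K)}$ bounded reals. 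This gives $\log D = O(K\log n)$, but the class itself is infinite-dimensional in these parameters, so we must discretize.

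\textbf{Step 2: discretize and apply a union bound.} Quantize outputs to a grid of mesh $1/n$ in $[0,A]$. This costs $O(1/n)$ in risk, which is absorbed. The resulting class has cardinality $N\le (An)^{(K+1)D}$. Under the training distribution the losses are bounded by $A^2$. Bernstein's inequality with the standard fast-rate argument for squared loss (using convexity, since $\theta_\Pi^n$ is the $L^2$ projection) yields, with high probability,
\[
\text{excess}(\widehat\theta^n)\lesssim \frac{\log N+\log(1/\delta)}{M}+\frac1n .
\]
We need $\log N/M\lesssim 1/n$, which requires $M\gtrsim n\,D\,K\log n=\exp(O(K\log n))=\exp(O(\log^2 n/\log\log n))$ when $K\asymp\log n/\log\log n$. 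Taking $K=O(\log n)$ would give $\exp(O(\log^2 n))$, which is too large. Hence the truncation level must be $K\asymp \log n/\log\log n$, and we then need $\bP(X>K)$ small enough. This holds only in the bulk of the distribution; I expect to handle the rare large $X_i$ via the crude bound $\theta\le A$ together with the Poisson tail $A^K/K!\approx n^{-c}$ for $K=c'\log n/\log\log n$. Choosing $c'$ large makes the tail $n^{-2}$.

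\textbf{Step 3: transfer from the training distribution to an arbitrary test prior $G_0$.} This is the main obstacle. Thickness provides a set $B$ with $\Pi(B)\ge e^{-E}$ whose members $G'$ have $f_{G'}$ close to $f_{G}\approx f_{G_0}$ in $\chi^2$. For $G'\in B$ we have $\bE_{G'^{\otimes n}}\|f-\theta_\Pi^n\|^2\le e^{E}\,\bE_{G_\Pi}\|f-\theta_\Pi^n\|^2$. To move from $G'^{\otimes n}$ to $G_0^{\otimes n}$ on the $X$-marginal, we use the change of measure
\[
\bE_{f_{G_0}^{\otimes n}}[h]\le \sqrt{\bE_{f_{G'}^{\otimes n}}[h^2]\,\bigl(1+\chi^2(f_{G_0}\|f_{G'})\bigr)^n},
\]
together with $\chi^2\le r_n^2$ (after coupling $G_0$ to $G$ via $\TV\le n^{-2}$, which costs $O(1/n)$). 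This bounds the $(1+r^2)^n\le e^{nr_n^2}$ factor. Averaging over $B$ then gives
\[
\bE_{G_0}\tfrac1n\|\widehat\theta^n-\theta_\Pi^n\|^2\lesssim e^{O(E+nr_n^2)}\sqrt{\text{excess}} .
\]
Requiring the excess to be at most $e^{-C(\log^2n/\log\log n+nr_n^2)}/n^2$ forces precisely $M\ge \exp(C(\frac{\log^2 n}{\log\log n}+nr_n^2))$. Finally, combining $\reg(\widehat\theta^n;G_0)\le 2\reg(\theta_\Pi^n;G_0)+2\bE_{G_0}\tfrac1n\|\widehat\theta^n-\theta_\Pi^n\|^2$ with \Cref{thm:general} yields the claimed bound.
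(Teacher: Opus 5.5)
Your overall strategy (truncate to a finite alphabet, control the training excess over the permutation-equivariant class by a finite-class fast-rate bound, then transfer to $G_0$ by a $\chi^2$ change of measure) is viable and differs from the paper's. As written, however, Steps 2 and 3 do not fit together. In Step 2 you quantize to mesh $1/n$ and get training excess $\lesssim (\log N+\log(1/\delta))/M+1/n$. The additive $1/n$ is a floor that no choice of $M$ removes. Step 3 multiplies the training excess by $e^{O(E+nr_n^2)}$ and takes a square root, so it needs the training excess to be at most $e^{-C(\log^2 n/\log\log n+nr_n^2)}/n^2$. With the $1/n$ floor, your transferred bound is of order $e^{O(nr_n^2)}n^{-1/2}$. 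That misses the target rate, and for the PoP of Algorithm~\ref{alg:universal_prior} (where $nr_n^2\asymp\log^2 n/\log\log n$) it is not even $o(1)$. The off-$\calE$ part has the same problem: under the training law the ERM may incur error up to $A^2n^{-9}$ there, and that cannot be pushed through a super-polynomial change-of-measure factor either.

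\textbf{How to repair it.}
\begin{enumerate}
\item Apply the change of measure only to $h\,\mathbf{1}_{\calE}$, where $h=\frac1n\|\widehat\theta^n-\theta_\Pi^n\|_2^2$. Bound $\bE_{G_0}[h\,\mathbf{1}_{\calE^c}]\le A^2f_{G_0}^{\otimes n}(\calE^c)$ directly; this is legitimate because $G_0$ is also supported on $[0,A]$. This is exactly the role of the set $\mathcal{G}$ in Lemma~\ref{lemma:approximate_ERM}.
\item Take the mesh $\eta=\exp(-C(\log^2 n/\log\log n+nr_n^2))$ and the failure probability $\delta$ of the same order. Then $\log N\le (K+1)D\log(1+A/\eta)$ grows only by a factor $O(\log^2 n/\log\log n+nr_n^2)$.
\item Coordinatewise quantization preserves permutation equivariance and changes the empirical risk by only $O(A\eta)$.
\item The Bernstein fast-rate step is valid because $\theta_\Pi^n$ is the regression function: $(\ell_g-\ell_{\theta_\Pi^n})^2\le 4A^2\cdot\frac1n\|g-\theta_\Pi^n\|_2^2$ while $\bE[\ell_g-\ell_{\theta_\Pi^n}]=\bE\frac1n\|g-\theta_\Pi^n\|_2^2$.
\end{enumerate}
This gives training excess on $\calE$ of order $\eta+\delta+\log(N/\delta)/M$, which suffices once $M=\exp(O(\log^2 n/\log\log n+nr_n^2))$.

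Separately, your claim that $\bE_{G'^{\otimes n}}[\cdot]\le e^{E}\bE_{G_\Pi}[\cdot]$ ``for $G'\in B$'' is false pointwise; it holds only on average over $\Pi_{|B}$. So apply Cauchy--Schwarz for each $G'$ and then Jensen over $\Pi_{|B}$. Alternatively, as the paper does, bound $\chi^2(f_{G_0}^{\otimes n}\|\Pi_{X^n})+1\le\Pi(U)^{-1}\bE_{G'\sim\Pi_{|U}}[(1+\chi^2(f_{G_0}\|f_{G'}))^n]\le e^{2nr_n^2}$ using convexity of $x\mapsto 1/x$, and apply Cauchy--Schwarz once.

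\textbf{Comparison with the paper.} The paper needs neither discretization nor uniform convergence. The objective \eqref{eq:ERM-PI} separates over types, so $\widehat\theta^n(x^n)$ is the average of $\pi_m\circ\theta^{n,(m)}$ over the training batches whose input is a permutation of $x^n$. Conditionally on the training inputs, this is unbiased for $\bE_\Pi[\theta^n\mid X^n=x^n]$ with squared error at most $nA^2/|\mathcal{S}(x^n)|$. Since $|\mathcal{S}(x^n)|\sim\mathsf{B}(M,p(T))$, summing $p(T)\min\{1,1/(Mp(T))\}$ over the $N=\binom{n+L}{L}$ types in $[0,L]^n$ gives training excess $\lesssim N/M$ with no floor. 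Your route is more generic: it would apply to an ERM over any PE class of controlled metric entropy, without the closed form of the minimizer, at the price of the bookkeeping above. The paper's route is shorter and sharper because it exploits the exact form of the minimizer over $\calF^{\mathrm{PE}}$.
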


\prettyref{lemma:finite_M} shows that for the simple PoP in \Cref{alg:universal_prior} to attain the same regret of \Cref{thm:simple_prior}, $M=\exp(O(\frac{\log^2 n}{\log\log n}))$ batches suffices. 
This is superpolynomial in $n$, but it agrees with the practical intuition that pretraining typically requires a huge amount of training data. We leave it to future study if a better condition on $M$ can be obtained by assuming different structures of $\widehat{\theta}^n$.

\subsection{More discussions on Poisson EB}\label{sec:subexpo}
The compact support condition $G\in \calP([0,A])$ can be generalized. First assume that $G_0$ is a subexponential prior, i.e. $G_0\in \subexpo(s) := \{G: \forall t\ge 0: \mathbb{P}_G[\theta > t] < 2e^{-t/s}\}$. Since $G_0\in \subexpo(s)$ is effectively supported on $[0,O(\log n)]$, we modify \Cref{alg:universal_prior} to pick $k=\lceil c_0(s)\log n\rceil$ atoms for $G\in \subexpo(s)$, chosen uniformly at random on $[0, c_1(s)\log n]$. 
The following regret bound holds for the hierarchical Bayes estimator $\theta_\Pi^n$, which is near-optimal \citep{polyanskiy2021sharp}.   
\begin{thm}\label{thm:simple_prior_subexpo}
	For large enough hyperparameters $c_0(s), c_1(s)>0$, the hierarchical Bayes estimator $\theta_\Pi^n$ achieves a worst-case regret of $O_s(\frac{\log^4 n}{n})$ over all $G_0\in \subexpo(s)$. 
\end{thm}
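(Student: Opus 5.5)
We follow the proof of \Cref{thm:general}: fix $G_0\in\subexpo(s)$, use \prettyref{lemma:posterior_mean_training} to write the last coordinate as $\theta_{G_n}(X_n)$, convert Hellinger error to regret, and bound the Hellinger error by posterior contraction (\prettyref{lemma:posterior_contraction}). The new ingredient is truncation. Set $L=c_1(s)\log n$ and let $\widetilde{G}_0$ be $G_0$ conditioned on $[0,L]$. Choosing $c_1(s)$ large makes $G_0([L,\infty))\le 2e^{-L/s}\le n^{-10}$, so $\TV(f_{G_0},f_{\widetilde{G}_0})\le n^{-10}$. Thickness, as in \prettyref{definition:PoP}, only asks for some $G$ with $\TV(f_{G_0},f_G)\le\delta$. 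So the first step is to prove the analogue of \prettyref{lemma:simple_prior} on $[0,L]$: the modified PoP is thick with $E(n^{-2},r)=O_s(\log^2 n)$ for $r^2\ge 1/n$.

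For this thickness step I would use moment matching. By Carathéodory/Gauss quadrature, replace $\widetilde{G}_0$ by a $k$-atomic $G$ on $[0,L]$ that matches $\widetilde{G}_0$ in its first $2k-1$ moments; with $k\asymp\log n$ this gives $\TV(f_{G_0},f_G)\le n^{-2}$. This works because $\Poi(x;\theta)$ is well approximated by a polynomial of degree $O(\log n)$ for $\theta\in[0,L]$ and $x\lesssim\log n$, and the tail $x\gtrsim\log n$ is negligible. Next, the random $G'=\sum_j w_j\delta_{\lambda_j}$ falls in a $\chi^2$-ball around $f_G$ provided (i) each $\lambda_j$ lands within $\eta$ of the $j$-th atom of $G$, and (ii) the weights are close, $|w_j-w^\star_j|\le\eta$ (atoms with tiny weight need separate care). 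With $\eta=n^{-O(1)}$, the probability of (i)–(ii) is at least $(\eta/L)^k\cdot\eta^{k}\cdot k!$, which is $\exp(-O(k\log n))=\exp(-O_s(\log^2 n))$.

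To control $\chi^2(f_G\|f_{G'})$, I would split the sum over $x\le x_0=C\log n$ and $x>x_0$. For $x\le x_0$, bound the ratio $|f_G(x)-f_{G'}(x)|/f_{G'}(x)$ by perturbing the atoms. The derivative $\partial_\theta\log\Poi(x;\theta)=x/\theta-1$ is not uniformly bounded near $\theta=0$, so small atoms must be handled with multiplicative closeness, e.g.\ $\lambda_j\in[\lambda_j^\star(1-\eta),\lambda_j^\star(1+\eta)]$, or by forcing an atom near $0$. For $x>x_0$, the support bound $[0,L]$ and a small mixing with the largest atom keep the ratio polynomially bounded, and the mass of this region is $n^{-\Omega(1)}$. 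Taking $\eta$ polynomially small then gives $\chi^2\le 1/n\le r^2$.

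Finally, plug into the contraction argument. Contraction needs a metric-entropy bound for $\calP_L=\{f_G:\mathrm{supp}(G)\subseteq[0,L]\}$. I would bound it by $O(\log^2 n)$ by the same moment-matching argument as \prettyref{lemma:metric_entropy}: there are $O(L+\log n)$ relevant moments/atoms, each discretized at resolution $n^{-O(1)}$. This lets us take $\varepsilon_n^2,r_n^2=O_s(\log^2 n/n)$.

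The training posterior lives on $\calP_L$, while the data come from $f_{G_0}$. Since $\TV(f_{G_0}^{\otimes n},f_{\widetilde{G}_0}^{\otimes n})\le n^{-9}$, I would run \prettyref{lemma:posterior_contraction} under $\widetilde{G}_0$ and transfer the result at cost $n^{-9}$ times a crude bound on the loss. The loss is bounded because $\theta_{G_n}\le L$, together with the subexponential moments of $\theta_{G_0}(X)\le\bE[\theta|X]$.

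The regret–Hellinger step needs a version of \prettyref{lemma:hellinger-to-regret-1} for support $[0,L]$ or for subexponential priors. I would take it as the standard bound with factor $O_s(\log^2 n)$, i.e.\ $(\log n)\cdot L$ or so, as in \cite{jana2025optimal}. Combining, the regret is $O(\log^2 n)\cdot O(\log^2 n/n)=O_s(\log^4 n/n)$.

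The main obstacle is the $\chi^2$ prior-mass bound. Near $\theta=0$ and for large $x$, likelihood ratios between nearby Poisson mixtures can blow up, so the perturbation neighbourhood must be chosen carefully, multiplicatively near zero and with the tails truncated, while keeping the probability loss to $\exp(-O(\log^2 n))$.
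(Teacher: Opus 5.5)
Your outline matches the paper's argument. You reduce to priors on $[0,O(\log n)]$ and get thickness with $E(n^{-2},r)=O(\log^2 n)$ from $O(\log n)$-atom moment matching (\prettyref{lemma:moment_matching} with $A\asymp\log n$) plus a volume bound. \prettyref{lemma:posterior_contraction} then gives $\bE[H^2(f_{G_0},f_{G_n})]=O(\log^2 n/n)$, and you pay a factor $AK=O(\log^2 n)$ in the regret--Hellinger inequality of \cite{jana2025optimal}.

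\textbf{The truncation transfer.} This step is not justified as written. A coupling/TV change of the law of $X^n$ from $f_{G_0}^{\otimes n}$ to $f_{\widetilde G_0}^{\otimes n}$, with your crude moment bound on the loss, only moves the data. If you then apply a regret--Hellinger inequality on $[0,L]$, it controls $\theta_{G_n}-\theta_{\widetilde G_0}$, not $\theta_{G_n}-\theta_{G_0}$. So you also need $\bE_{X\sim f_{\widetilde G_0}}[(\theta_{\widetilde G_0}(X)-\theta_{G_0}(X))^2]$ to be small. This is true and short to prove:
\begin{itemize}
\item Write $f_{G_0}=(1-\epsilon)f_{\widetilde G_0}+\epsilon f_{G_0^+}$, where $\epsilon=G_0((L,\infty))$ and $G_0^+$ is $G_0$ conditioned on $(L,\infty)$.
\item The quantity is then at most $\frac{2}{1-\epsilon}\big(\bE_{G_0}[\theta^2\indc{\theta>L}]+\epsilon L^2\big)$.
\end{itemize}
It does, however, have to be stated.

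The paper avoids this with \prettyref{lmm:prior-truncation}, which works at the MSE level. With $E=\{\max_i\theta_i\le L\}$, it uses $\reg(\widehat\theta^n;G_0)\le\reg(\widehat\theta^n;\widetilde G_0)+\mmse(\widetilde G_0)-\mmse(G_0)+\bE_{G_0}[\frac1n\|\widehat\theta^n-\theta^n\|_2^2\indc{E^c}]$. The mmse gap is bounded by \cite[Lemma 2]{wu2011functional} and the last term by Cauchy--Schwarz. After that one may assume $G_0$ is supported on $[0,c\log n]$.

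Your alternative also avoids the oracle change and works. There you apply a regret--Hellinger bound valid for subexponential $G_0$ directly to $(G_n,G_0)$, and absorb the truncation into the intermediate $G$ of \prettyref{definition:PoP}.

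\textbf{The $\chi^2$ prior-mass step.} The step you single out as the main obstacle is easier than you expect. As in the proof of \prettyref{lemma:simple_prior}:
\begin{itemize}
\item First push atoms to $\lambda_j\ge\frac{1}{4n^2}$ and weights to $w_j\ge\frac{1}{8n^2L}$, at TV cost $O(n^{-2})$.
\item Then \prettyref{lemma:tv-chi-mixture}, together with the closed form $1+\chi^2(\Poi(\lambda)\|\Poi(\lambda'))=e^{(\lambda-\lambda')^2/\lambda'}$, shows that additive perturbations of size $cn^{-3}$ in atoms and weights already give $\chi^2\le\frac1n$, uniformly in $x$.
\end{itemize}
So you need no split in $x$, no multiplicative neighbourhoods, and no extra mixing with the largest atom.

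\textbf{Entropy.} Your global entropy bound of $O(\log^2 n)$ is cruder than the paper's local $O(\log n)$, which comes from truncating $X$ at $\Theta(\log n)$ as in \prettyref{lemma:metric_entropy}. This is harmless, since $r_n^2=O(\log^2 n/n)$ dominates anyway.
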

The next case is the regime where $G_0$ is supported on $[0,A]$ with $A = A_n \gg \log n$, where \cite{shen2022empirical} showed that the optimal regret scales as $\widetilde{\Theta}(\frac{A^{1.5}}{n})$ for $A = O(n^2)$. This regret bound can be recovered by the hierarchical Bayes estimator with $k=\widetilde{\Theta}(\sqrt{A})$ atoms drawn uniformly at random on $[0,A]$, \emph{provided that} one can establish $-\log \Pi_{X_n|X^{n-1}} = O(\mathsf{polylog}(n))$ with a sufficiently high probability over the randomness of the test data $X^n\sim f_{G_0}^{\otimes n}$. This condition is due to the regularization parameter $\rho = \exp(-O(\mathsf{polylog}(n)))$ used in the regret-Hellinger inequality \cite[Theorem 3.3]{shen2022empirical}. We leave the verification of this condition as a conjecture.

Finally we comment on a potential route toward removing the extra $\log n$ factor in \Cref{thm:simple_prior} relative to the optimal regret. Specializing to Poisson mixtures, a peeling argument into shells in \cite[Theorem 2.4]{ghosal2000convergence} could improve the rate of posterior contraction, \emph{provided that} one can show a local prior-doubling bound of the form $\log(\Pi(\{G: H^2(f_{G_0},f_G)\le Cr^2 \}) / \Pi(\{G: \chi^2(f_{G_0}\|f_G)\le r^2 \})) \le D=O(\frac{\log n}{\log\log n})$. Such a bound is plausible as $f_G$ is effectively supported on $[0,D]$, but a rigorous proof faces two obstacles. First, unlike $H^2(f_{G_0},f_G)\asymp \chi^2(f_{G_0}\|f_G)$ for Gaussian mixtures \citep{jia2023entropic}, this is generally false for Poisson mixtures due to the singularity of $\Poi(\theta)$ at $\theta= 0$. Second, although \cite{gassiat2014local} establishes a local equivalence between $H(f_G,f_{G_0})$ and a weighted $L_2$ norm, a uniform non-asymptotic version of such a result is currently unavailable, which prevents us from carrying out the volume calculation. 

\subsection{Gaussian EB}\label{subsec:Gaussian_EB}

In Gaussian EB, we have $\theta^n \sim G_0^{\otimes n}$ with some prior $G_0\in \calP([-A,A])$, and $X^n|\theta^n \sim \otimes_{i=1}^n \calN(\theta_i, 1)$.  
\Cref{alg:universal_prior} is modified as follows: we still take $k=\lceil c_0\frac{\log n}{\log\log n}\rceil $ for a large enough constant $c_0>0$, but now draw $\lambda_1,\dots,\lambda_k\sim \Unif([-A,A])$ and replace $\Poi(\lambda_j)$ by $\calN(\lambda_j,1)$. The following theorem summarizes the regret bound of the resulting hierarchical Bayes estimator $\theta_\Pi^n$, which is near-optimal compared with the lower bound $\Omega(\frac{1}{n}(\frac{\log n}{\log\log n})^2)$ in \cite{polyanskiy2021sharp}.  

\begin{thm}\label{thm:simple_prior_gaussian}
	For a large enough hyperparameter $c_0>0$, the hierarchical Bayes estimator $\theta_\Pi^n$ achieves a worst-case regret of $O_A(\frac{\log^3 n}{n(\log\log n)^2})$ over all $G_0\in \calP([-A,A])$. 
\end{thm}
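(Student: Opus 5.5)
The Gaussian analogue of \Cref{thm:simple_prior} follows the same two-part route: the general posterior-contraction machinery (\prettyref{lemma:posterior_contraction}, stated for general families) plus a thickness bound for the modified PoP. Let $\calP=\{f_G=G*\calN(0,1): G\in\calP([-A,A])\}$.

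\textbf{Step 1: the three Poisson-specific ingredients.}

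\emph{(i) Leave-one-out identity.} \prettyref{lemma:posterior_mean_training} holds verbatim: its proof only uses conditional independence in \eqref{eq:hierarchical}, so $\theta_{\Pi,n}(X^n)=\theta_{G_n}(X_n)$ with $G_n=\bE_{\Pi_{G|X^{n-1}}}[G]$. Also $f_{G_n}=\Pi_{X_n|X^{n-1}}$.

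\emph{(ii) Regret--Hellinger inequality.} In place of \prettyref{lemma:hellinger-to-regret-1} I will use the Gaussian inequality of \cite{jiang2009general} or \cite{polyanskiy2021sharp}, valid for $G_0$ on $[-A,A]$ and arbitrary $G$ (here $G_n$ is supported on $[-A,A]$):
\[
\bE_{f_{G_0}}\big[(\theta_G-\theta_{G_0})^2\big]\lesssim_A \frac{\log(1/\varepsilon)}{\log\log(1/\varepsilon)}\,H^2(f_G,f_{G_0})+\varepsilon .
\]
If the cited form carries an extra factor of $\log(1/\varepsilon)$ rather than $\log/\log\log$, I will use the compact-support version, which gives the stated form because $\theta_G$ is bounded by $A$.

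\emph{(iii) Metric entropy.} For Gaussian mixtures with compact support, $\log N(\varepsilon,\calP,H)\lesssim \log^2(1/\varepsilon)/\log\log(1/\varepsilon)$ \citep{ghosal2008nonparametric,zhang2009generalized}. This would give $\varepsilon_n^2=\widetilde O(1/n)$, but with one extra log. To match the Poisson rate I will instead bound the local covering number as $\Nloc\lesssim \log(1/\varepsilon)/\log\log(1/\varepsilon)$ via moment matching. Any $G$ is $\varepsilon$-close in Hellinger distance to a $k$-atomic $G'$ matching its first $\asymp\log(1/\varepsilon)/\log\log(1/\varepsilon)$ moments. Locally, a constant-ratio covering of a $2\rho$-ball only needs to discretize the weights and atoms to relative precision. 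If this local count proves hard, I will accept the global bound: it costs only polylog factors, and the claimed rate can still be met by the slack in the main term if the thickness term dominates.

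\textbf{Step 2: thickness of the modified PoP (main obstacle).} I want $E(n^{-2},r)=O(\log^2 n/\log\log n)$ for $r^2\ge 1/n$, mirroring \prettyref{lemma:simple_prior}.

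\emph{Choice of the intermediate $G$.} Given $G_0$, take a discrete $G=\sum_{j\le k}v_j\delta_{\mu_j}$ with $k\asymp\log n/\log\log n$ atoms matching $\asymp\log n/\log\log n$ moments of $G_0$. Standard moment matching gives $\TV(f_{G_0},f_G)\le n^{-2}$.

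\emph{Chi-square control.} Next I need $\chi^2(f_G\|f_{G'})\le r^2$ for every $G'=\sum w_j\delta_{\lambda_j}$ with $|\lambda_j-\mu_j|\le\eta$ and $|w_j-v_j|\le\eta/k$ (after merging or padding atoms), where $\eta=n^{-C}$. The Gaussian likelihood ratio is easier to handle than the Poisson one, since there is no singularity at $0$. Write
\[
\frac{f_G}{f_{G'}}\le\max_j \frac{v_j\,\varphi(x-\mu_j)}{w_j\,\varphi(x-\lambda_j)} .
\]
Here $\varphi(x-\mu)/\varphi(x-\lambda)=e^{(\mu-\lambda)x-(\mu^2-\lambda^2)/2}$, which stays within $1\pm O(\eta(|x|+A))$. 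Together with Gaussian tails this gives $\chi^2\lesssim\eta^2\,\mathrm{poly}(A)+{}$(weight perturbation)$^2$. Atoms with tiny $v_j$ need care: I will lower-bound every weight by mixing in $n^{-3}$ mass, which costs only $O(n^{-3})$ in TV.

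\emph{Probability bound.} The event has probability at least $(\eta/A)^k\cdot k!\cdot(\eta/k)^{k-1}\cdot\Gamma(k)$ under the uniform locations and the $\mathsf{Dir}(1,\dots,1)$ weights. Hence $E=O(k\log n)=O(\log^2 n/\log\log n)$.

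\textbf{Step 3: conclude.} Plug these into \prettyref{lemma:posterior_contraction} with $r_n^2\asymp\log^2 n/(n\log\log n)$. Then integrate the tail exactly as in the proof of \Cref{thm:general} (using $\varepsilon=1/n$ in the regret--Hellinger inequality) to get
\[
\frac{\log n}{\log\log n}\cdot\frac{\log^2 n}{n\log\log n}=\frac{\log^3 n}{n(\log\log n)^2}.
\]

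I expect the real difficulty to lie in the entropy and regret--Hellinger constants of Step 1: a stray extra $\log n$ from using global rather than local entropy must be avoided, since the entropy term must be $O(\log^2 n/\log\log n)$ to match the target.
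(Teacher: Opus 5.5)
Your architecture matches the paper's: the leave-one-out identity, then \prettyref{lemma:posterior_contraction} with a moment-matching thickness bound, then a regret--Hellinger inequality. The gap is that the ingredient producing the second $\log\log n$ in the denominator is not justified. In Step 1(ii) you attribute $\bE_{f_{G_0}}[(\theta_G-\theta_{G_0})^2]\lesssim\frac{\log(1/\varepsilon)}{\log\log(1/\varepsilon)}H^2(f_G,f_{G_0})+\varepsilon$ to \cite{jiang2009general} or \cite{polyanskiy2021sharp}. Neither contains an inequality of this sharpness: the latter is where the Gaussian \emph{lower} bound comes from, and Jiang--Zhang-type regret--Hellinger bounds carry extra logarithmic factors. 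Your fallback, that a compact-support version ``gives the stated form because $\theta_G$ is bounded by $A$'', is not an argument. Boundedness only removes the need to regularize small values of $f_G$; it does not improve the logarithmic factor. In the Poisson case, the $\frac{\log}{\log\log}$ factor comes from the ratio formula $\theta_G(x)=(x+1)f_G(x+1)/f_G(x)$ plus truncation at a Poisson tail level $K\asymp\frac{\log(1/\varepsilon)}{\log\log(1/\varepsilon)}$. In the Gaussian case $\theta_G(x)=x+f_G'(x)/f_G(x)$, so you must control a derivative of the mixture density through $H(f_G,f_{G_0})$. Doing this with only a $\frac{\log}{\log\log}$ loss is exactly the content of the recent sharp inequality \cite[Theorem 1]{chen2026sharp}, $\reg(\theta_G;G_0)\le CH^2\frac{\log(1/H^2)}{\log\log(1/H^2)}$ for $G,G_0$ on $[-A,A]$, which the paper quotes and applies to $G=G_n$. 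By monotonicity it gives $\reg\lesssim\frac{\log n}{\log\log n}(H^2+\frac1n)$, which is all your Step 3 needs. With only a $\log n$ factor, your Step 3 would give $\frac{\log^3 n}{n\log\log n}$, not the claimed rate.

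Your entropy worry, by contrast, is unfounded, and your closing paragraph has it backwards: you only need $\varepsilon_n^2\lesssim r_n^2$, not the Poisson value of $\varepsilon_n^2$. The global bound $\log N(n^{-1/2},\calP,H)=O(\frac{\log^2 n}{\log\log n})$ follows from moment matching with $L\asymp\frac{\log n}{\log\log n}$ atoms via \cite[Lemma 9]{wu2020optimal}, then quantizing atoms and weights. It gives $\varepsilon_n^2\asymp\frac{\log^2 n}{n\log\log n}$, the same order as $r_n^2$, so overbounding $\Nloc$ by the global covering number costs nothing. That is what the paper does, and your relative-precision local covering sketch is neither rigorous nor needed. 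Your thickness argument in Step 2 is fine and essentially the paper's, which reruns \prettyref{lemma:simple_prior} using $\chi^2(\calN(a,1)\|\calN(b,1))=e^{(a-b)^2}-1$ in the mixture bound of \prettyref{lemma:tv-chi-mixture}. Your max-ratio bound yields only $\chi^2=O(\eta)$ rather than $O(\eta^2)$, which is harmless for $\eta=n^{-C}$.
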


\subsection{Function estimation}
Motivated by the recent work \citep{kang2026function}, our result also extends to the setting where one is interested in estimating a function $g(\theta)$ of $\theta$. In this case, the hierarchical Bayes estimator becomes $g_{\Pi}^n = \bE_\Pi[g(\theta^n)|X^n]$, and the oracle estimator is $g_{G_0}(X) = \bE_{G_0}[g(\theta)|X]$. For the special case of the polynomial function $g(\theta) = \theta^p$ with $p\in \naturals$, we establish the following regret bound which is near-optimal compared with the minimax regret of $\Theta(\frac{1}{n}(\frac{\log n}{\log\log n})^{p+1})$ in \cite{kang2026function}. 

\begin{thm}\label{thm:simple_prior_polynomial}
	Let the PoP $\Pi$ be the same as \Cref{alg:universal_prior}, with a large enough hyperparameter $c_0>0$. Then for $g(\theta)=\theta^p$, the hierarchical Bayes estimator $g_\Pi^n$ achieves a worst-case regret of $O_{A,p}(\frac{1}{n}\frac{\log^{p+2}n}{(\log\log n)^{p+1}})$ over all $G_0\in \calP([0,A])$.  
\end{thm}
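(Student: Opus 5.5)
We follow the route of \Cref{thm:general}, replacing the posterior-mean functional $\theta_G$ with $g_G(x)=\bE_G[\theta^p\mid X=x]$.

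\textbf{Step 1 (reduction to the last coordinate).} As in \prettyref{lemma:posterior_mean_training}, the $i$-th coordinate of $g_\Pi^n$ equals $\bE_{G\sim\Pi_{G|X^n}}[g_G(X_i)]$. For Poisson, $g_G(x)=\frac{(x+p)!}{x!}\frac{f_G(x+p)}{f_G(x)}$, and $f_G(x)g_G(x)$ is linear in $G$. Hence this average equals $g_{G_i}(X_i)$ with $G_i=\bE_{G\sim\Pi_{G|X_{\backslash i}}}[G]$, exactly as in the $p=1$ case. By symmetry the regret equals $\bE[(g_{G_n}(X_n)-g_{G_0}(X_n))^2]$, and $f_{G_n}=\Pi_{X_n|X^{n-1}}$.

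\textbf{Step 2 (regret--Hellinger inequality for $\theta^p$).} We need an analogue of \prettyref{lemma:hellinger-to-regret-1}:
\[
\bE_{f_{G_0}}\big[(g_G-g_{G_0})^2\big]\le C_{A,p}\Big(\big(\tfrac{\log(1/\varepsilon)}{\log\log(1/\varepsilon)}\big)^{p}H^2(f_G,f_{G_0})+\varepsilon\Big).
\]
We would prove it along the lines of \cite[Lemma 4]{jana2025optimal}. First truncate to $x\le K:=C\frac{\log(1/\varepsilon)}{\log\log(1/\varepsilon)}$; both pmfs have tails beyond $K$ of size $\le\varepsilon$, and $g\le A^p$ bounds the contribution there. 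For $x\le K$, write
\[
g_G(x)-g_{G_0}(x)=\frac{(x+p)!}{x!}\Big[\frac{f_G(x+p)-f_{G_0}(x+p)}{f_G(x)}+f_{G_0}(x+p)\Big(\frac1{f_G(x)}-\frac1{f_{G_0}(x)}\Big)\Big].
\]
We then control these terms using $\frac{(x+p)!}{x!}f_G(x+p)\le A^p f_G(x)$ together with the symmetrized version (the regularized denominator $f_G\vee f_{G_0}$ trick). The point is to compare a shift by $p$ with a shift by $1$, which is where the extra factor $K^{p-1}$ relative to $p=1$ should come from. Alternatively, telescope $g_G(x)=\prod$ of one-step ratios $\theta_G$ along $x,\dots,x+p-1$ under tilted priors; the $p$ factors of $K$ then arise from $(x+p)^{p}\lesssim K^p$.

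\textbf{Step 3 (combine).} Apply \prettyref{lemma:posterior_contraction} with \prettyref{lemma:metric_entropy} and \prettyref{lemma:simple_prior}, which give $\varepsilon^2\asymp\frac{\log^2 n}{n\log\log n}$ with failure probability $O(1/n)$. Set $\varepsilon=1/n$ in Step 2 and integrate tails using $g\le A^p$. The regret is then
\[
O\Big(\big(\tfrac{\log n}{\log\log n}\big)^p\cdot\tfrac{\log^2 n}{n\log\log n}\Big)=O\Big(\frac{\log^{p+2}n}{n(\log\log n)^{p+1}}\Big),
\]
as claimed.

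\textbf{Main obstacle.} Step 2 is the main obstacle, specifically obtaining the sharp exponent $K^p$ rather than something worse. The difficulty is bounding $\sum_{x\le K}\frac{(f_G(x+p)-f_{G_0}(x+p))^2}{f_G(x)\vee f_{G_0}(x)}$ by Hellinger. The denominator is evaluated at $x$, not $x+p$, so we must use $\frac{(x+p)!}{x!}f(x+p)\lesssim A^p f(x)$ to transfer mass ratios. We would try first to mimic the proof in \cite{kang2026function} or \cite{jana2025optimal}, bounding $(f_G(y)-f_{G_0}(y))^2\le 2H^2$-type contributions times $(\sqrt{f_G(y)}+\sqrt{f_{G_0}(y)})^2$ and using the ratio bound above to cancel the denominators.
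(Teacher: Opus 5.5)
Your proposal is correct and matches the paper's proof. Both use the leave-one-out identity $g_{\Pi,n}=g_{G_n}$, which holds because $f_Gg_G=\frac{(x+p)!}{x!}f_G(x+p)$ is linear in $G$. Both then apply a regret--Hellinger bound with factor $K^p$ at $K\asymp\frac{\log n}{\log\log n}$, and plug in the Hellinger rate $O(\frac{\log^2 n}{n\log\log n})$ from \Cref{thm:general} and \prettyref{lemma:simple_prior}.

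The only difference is that the paper does not prove your Step 2; it quotes it from \cite[Lemma 16]{kang2026function} as $C(A^{2p}+A^pK^p)H^2(f_{G_0},f_G)+A^{2p}f_{G_0}(X>K-p)$. Your symmetrized route does reproduce this bound. With $\bar G=\frac{G+G_0}{2}$ and $\bar f=f_{\bar G}$, one has $(g_G-g_{G_0})(x)\bar f(x)=\frac{(x+p)!}{x!}(f_G-f_{G_0})(x+p)-\frac{g_G(x)+g_{G_0}(x)}{2}(f_G-f_{G_0})(x)$ and $\frac{(x+p)!}{x!}\frac{\bar f(x+p)}{\bar f(x)}=g_{\bar G}(x)\le A^p$. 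So the obstacle you flag goes away, and the telescoping alternative is not needed.
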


\section{Numerical experiments}\label{sec:numerics}

\UnifiedFigure{t}{fig:performance}{
Regrets of different estimators with different test sequence lengths and test priors. For all transformers, the training sequence length is fixed to be $n=512$ (indicated by the vertical dotted green line)
}{
    \subfloat[Test prior from neural PoP]{
    		\label{fig:neural-pop}
    		\includegraphics[scale=0.35]{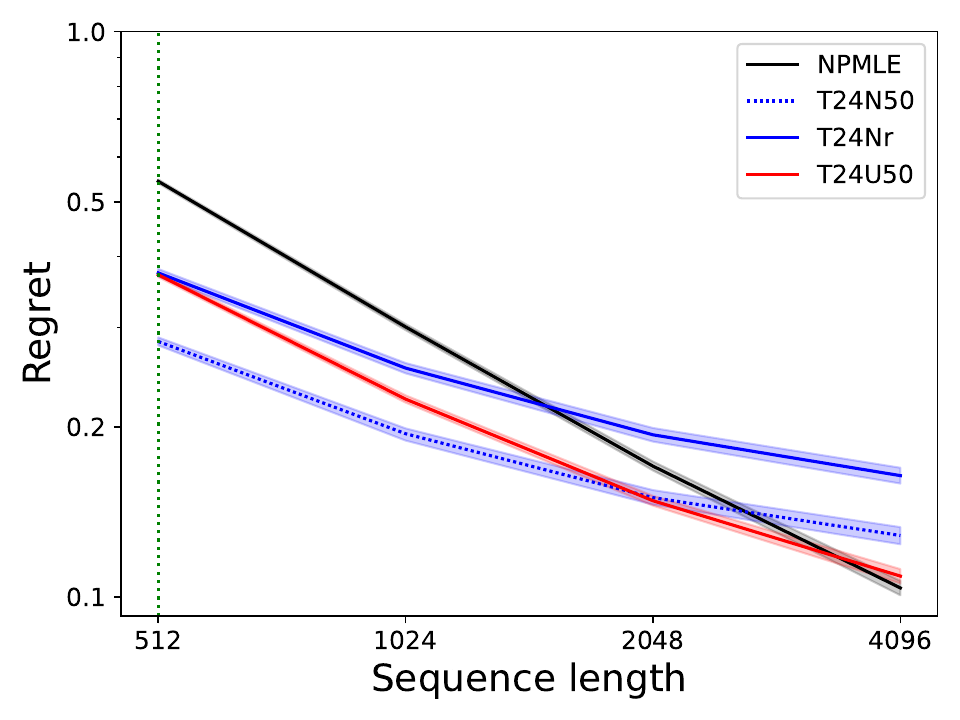}
    	}
        \subfloat[Test prior from multinomial PoP]{
        	\label{fig:multn-pop}
        	\includegraphics[scale=0.35]{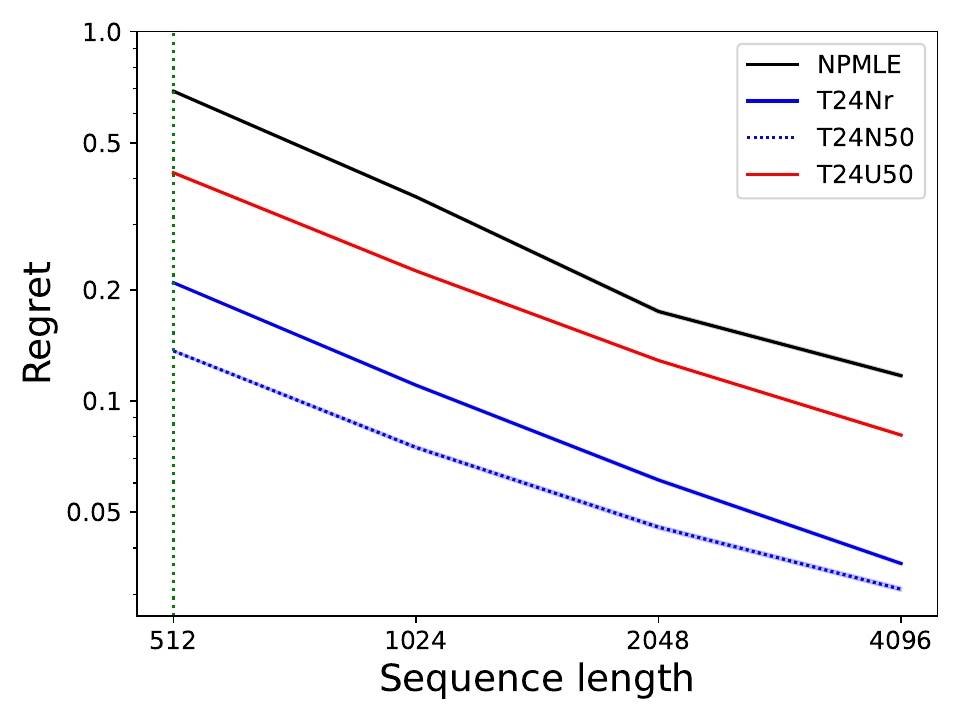}
        }
}

\UnifiedFigure{t}{fig:hb_regret}{The regret of the hierarchical Bayes estimator, as well as its mean squared distance to the trained transformer, under simple training PoPs $\Pi_m = \frac{1}{m}\sum_{i=1}^m G_i^{\otimes n}$.}
    {
        \subfloat[$m = 2$]{
			\includegraphics[scale=0.28]{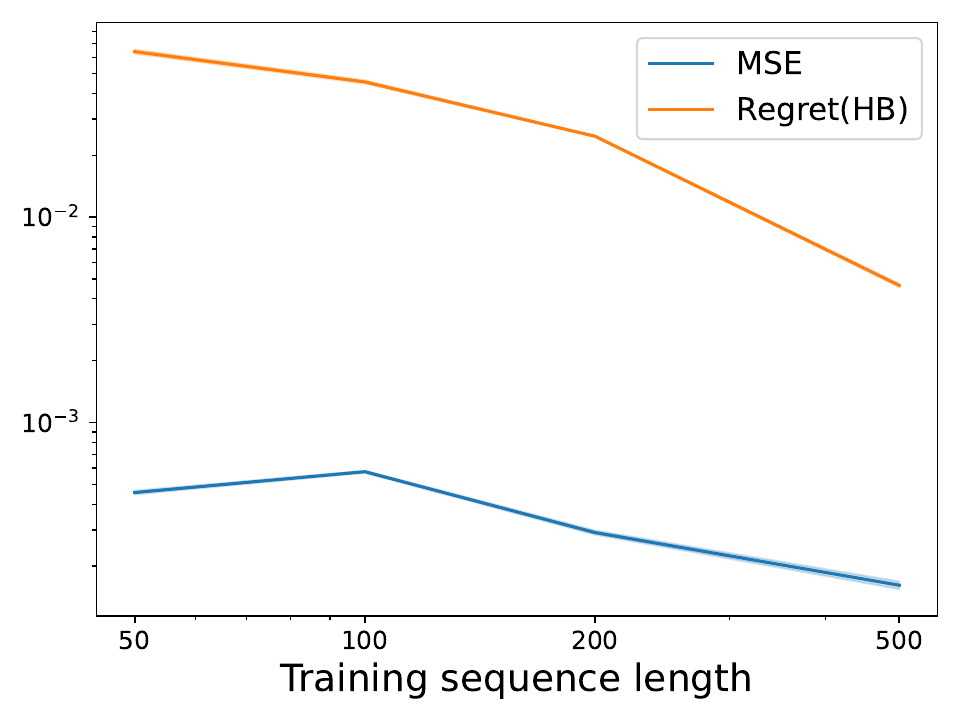}
		}
        \subfloat[$m = 5$]{
			\includegraphics[scale=0.28]{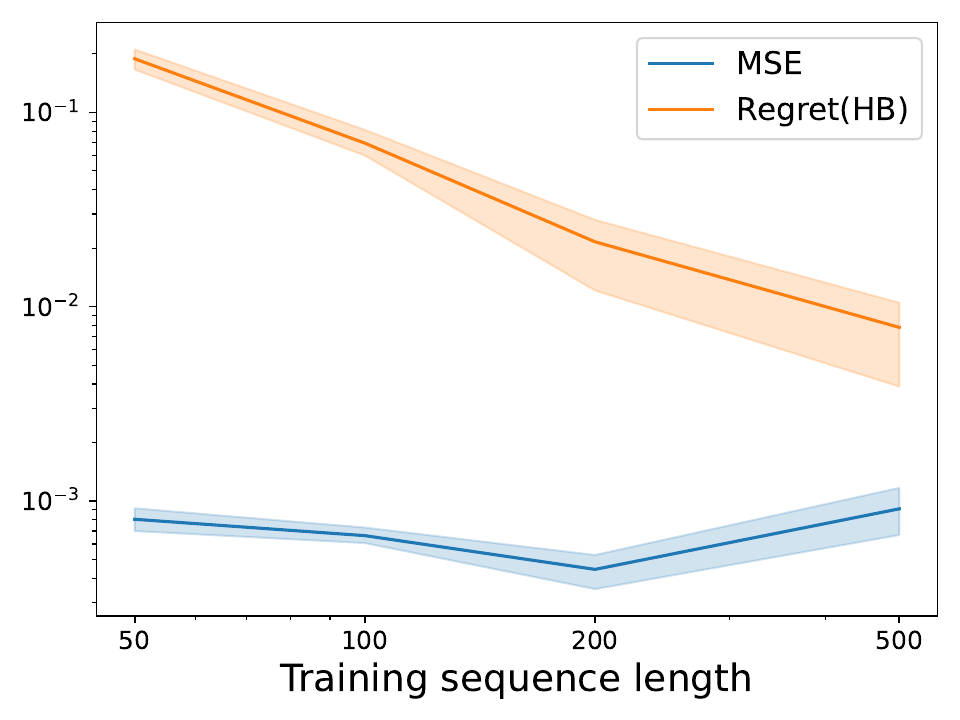}
		}
        \subfloat[$m = 10$]{
			\includegraphics[scale=0.28]{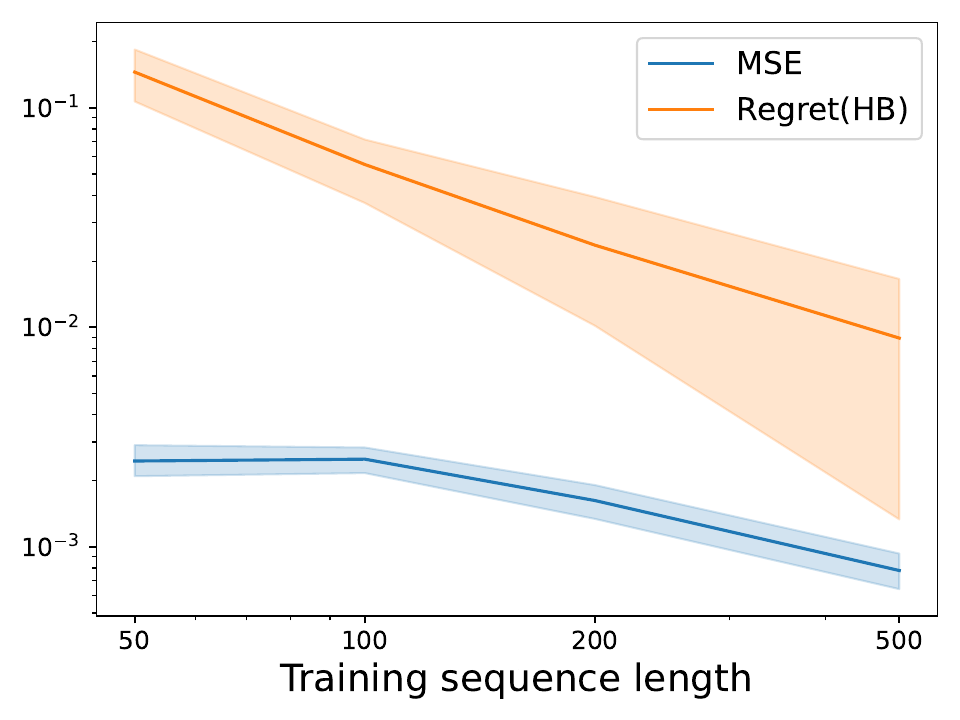}
		}
    }

\subsection{Regret performance and length generalization}\label{sec:performance}
Our first experiment validates the universality of our simple PoP in \Cref{alg:universal_prior} through transformers.
Specifically, we pretrain a transformer T24U50 according to \Cref{alg:universal_prior} with $A = 50, k = 10$, and $n = 512$, where both the training procedure and transformer architecture are identical to those in \cite{teh2025solving}. The test priors are generated from the neural and multinomial PoPs also introduced in \cite{teh2025solving}, and we compare against the classical NPMLE-based estimator as well as two other pretrained transformers (T24N50 and T24Nr) from \cite{teh2025solving}. For each test sequence length $\ntest$, \Cref{fig:performance} displays the average regret of these estimators over 4096 runs, across different test priors. We make the following observations.
First, when $\ntest = n$, all pretrained transformers outperform the NPMLE-based estimator, despite being a strong baseline, under both test priors.
Second, as $\ntest$ increases, the regrets of all pretrained transformers continue to decrease, although this improvement may saturate for large $\ntest$. These observations are consistent with both the theory of universal priors in \Cref{thm:simple_prior} and the length generalization results in \Cref{thm:length-generalization}.
Finally, we emphasize that our experiments with the simple PoP are primarily intended as a proof of concept, and we do not claim its practical optimality; we believe that a more finely tuned PoP (as used by the other two transformers) can achieve better empirical performance.

\subsection{Evidence on Bayesian inference}\label{sec:alpha-posterior}
Our second set of experiments is designed to validate the assumptions underlying our main theorems, namely that (1) the pretrained transformer approximately implements the Bayes estimator under the training PoP, and (2) for length generalization, the pretrained transformer indeed performs Bayesian inference under $\alpha$-posteriors.

\UnifiedFigure{t}{fig:alpha-posterior-2prior}{Plots of the mean squared distance between transformer output and the hierarchical Bayes estimator using various $\alpha$-posteriors, with different training lengths $n$ and test lengths $\ntest$. This distance is indeed minimized at $\alpha \simeq \frac{n}{\ntest}$.}
	{\subfloat[$m = 2, n=50$]{
			\includegraphics[scale=0.37]{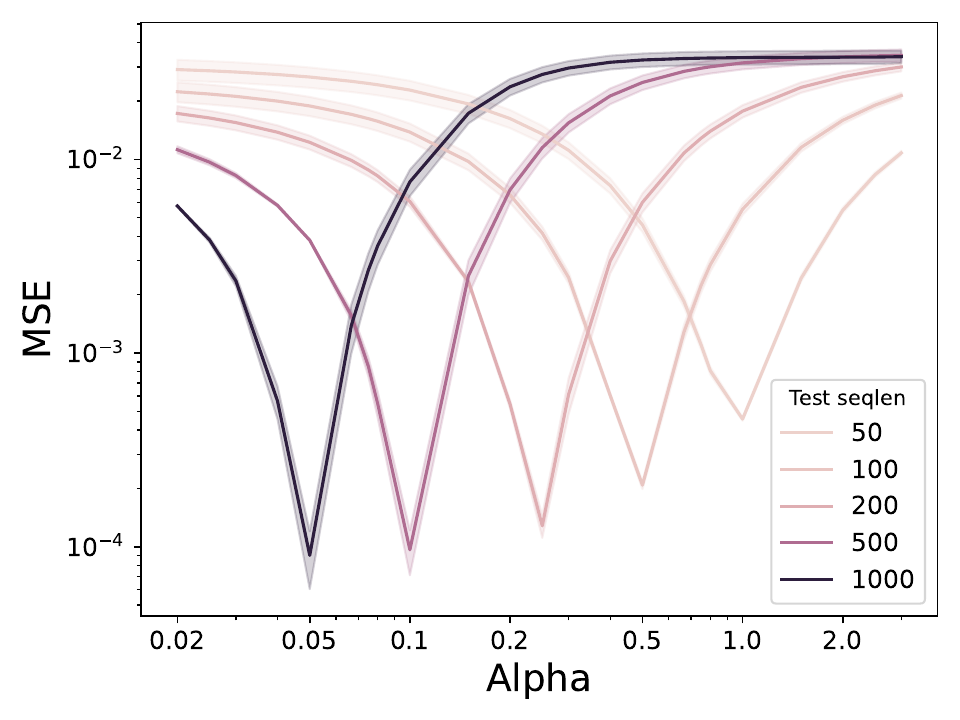}
		}
	  \subfloat[$m = 2, n=100$]{
	  	\includegraphics[scale=0.37]{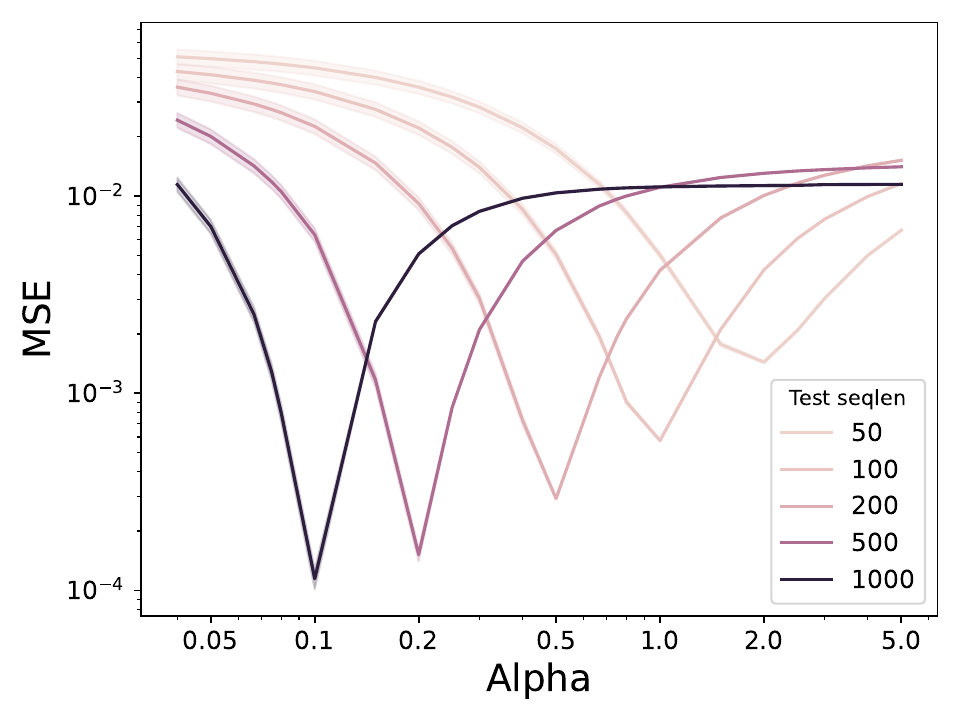}
	  }
      
      \subfloat[$m = 2, n=200$]{
      	\includegraphics[scale=0.37]{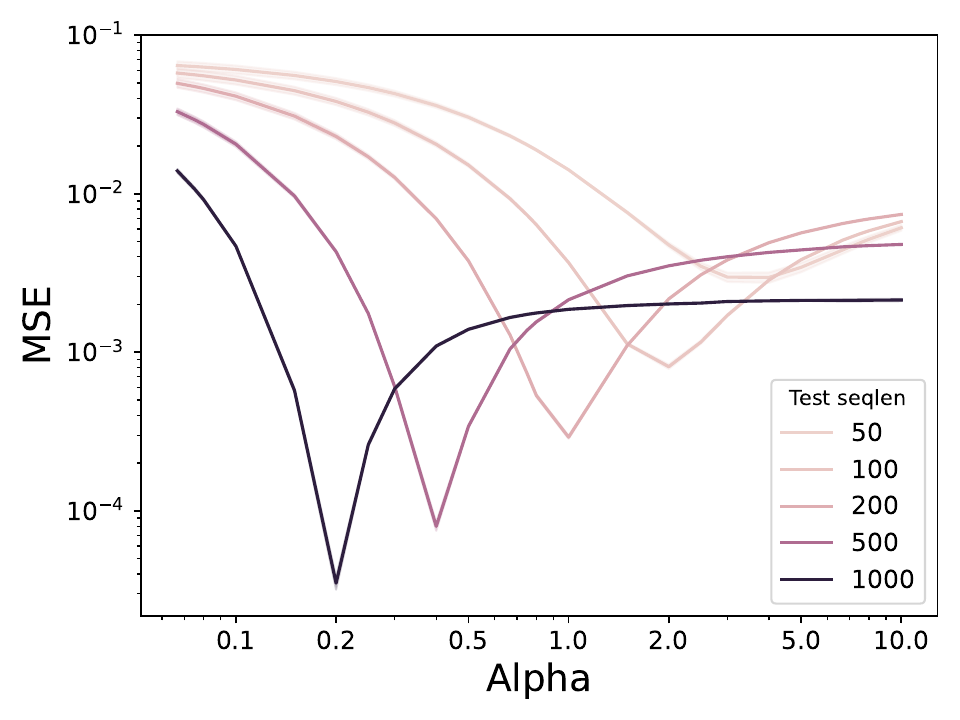}
      }
      \subfloat[$m = 2, n=500$]{
      	\includegraphics[scale=0.37]{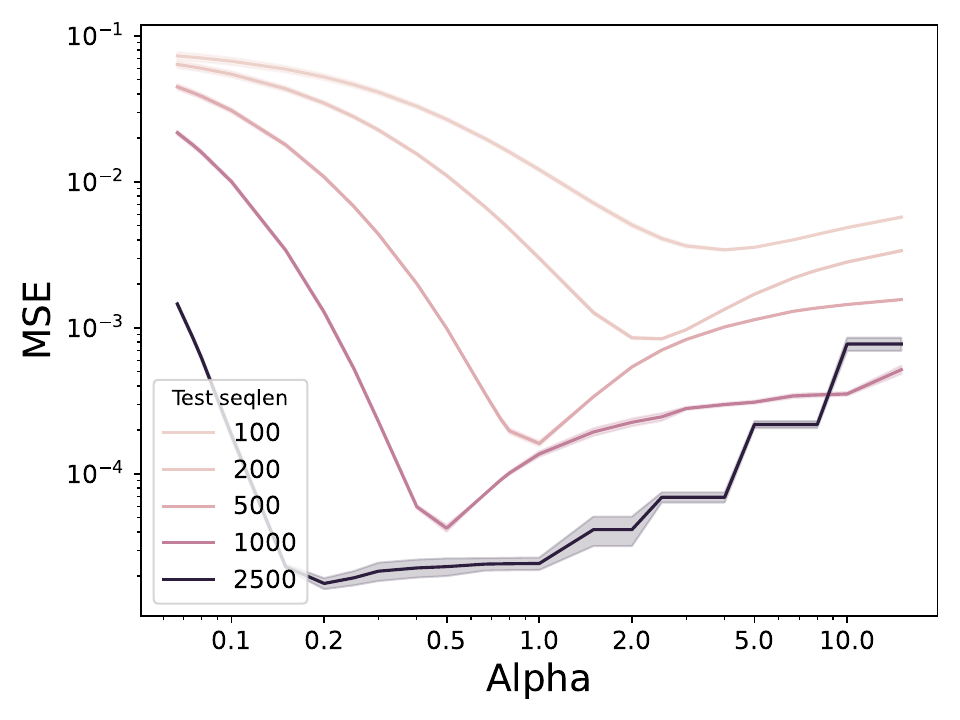}
      }
    }

For (1), we compare our pretrained transformer with the true hierarchical Bayes (HB) estimator $\theta_\Pi^n$, in scenarios where finding $\theta_\Pi^n$ is computationally easy. Specifically, we choose a simple PoP $\Pi_m = \frac{1}{m}\sum_{i=1}^m \delta_{G_i}$ with small $m$, where $G_1,\dots,G_m$ are randomly generated from our PoP in \Cref{alg:universal_prior}, conditioned on the event that $\theta_\Pi^n$ does not perform uniformly well on $G_1,\dots,G_m$. This conditioning ensures that the regret incurred by both $\theta_\Pi^n$ and the transformer is reasonably large and \emph{much larger} than their difference; implementation details are deferred to the appendix. \Cref{fig:hb_regret} displays the regret of $\theta_\Pi^n$, along with its mean squared distance to the trained transformer, for $m\in \{2,5,10\}$ under the training PoP. We observe that, relative to the regret of $\theta_\Pi^n$, the trained transformer is remarkably close to $\theta_\Pi^n$, indicating that the pretrained transformer indeed performs approximate Bayesian inference under the training distribution.

For (2), we use the transformer trained on the above PoP $\Pi_m$, compute its outputs for various test lengths $\ntest$, and compare them with the HB estimator based on $\alpha$-posteriors for different values of $\alpha$. \Cref{fig:alpha-posterior-2prior} presents the results for $m=2$ across multiple training lengths $n$ and test lengths $\ntest$; results for larger $m$ are deferred to the appendix. The quality of fit is striking: for each pair $(n, \ntest)$, the transformer output is extremely close to that of the HB estimator using $\alpha$-posteriors, with $\alpha\simeq \frac{n}{\ntest}$. This demonstrates that our $\alpha$-posterior conclusion in \Cref{thm:length-generalization}, although derived under a specific length generalization model, aligns well with the experimental findings.

\paragraph{Acknowledgement.} We thank Nikolaos Ignatiadis for sharing the concurrent work \citep{ignatiadis2026compound}, and anonymous reviewers for helpful comments on improving the organization of this paper. N.C. is supported by the Dean’s Undergraduate Research Fund Grant at NYU. 
The authors acknowledge the MIT SuperCloud and Lincoln Laboratory Supercomputing Center for providing compute resources that have contributed to the experimental results reported in \Cref{sec:numerics}.

\clearpage 

\bibliographystyle{alpha}
\bibliography{refs}

\newcommand{\etalchar}[1]{$^{#1}$}
\begin{thebibliography}{HNWSW25}

\bibitem[ADM25]{aggarwal2025bayesian}
Naman Aggarwal, Siddhartha~R Dalal, and Vishal Misra.
\newblock The {B}ayesian geometry of transformer attention.
\newblock {\em arXiv preprint arXiv:2512.22471}, 2025.

\bibitem[AM24]{ahuja2024provable}
Kartik Ahuja and Amin Mansouri.
\newblock On provable length and compositional generalization.
\newblock {\em arXiv preprint arXiv:2402.04875}, 2024.

\bibitem[Ant74]{antoniak1974mixtures}
Charles~E Antoniak.
\newblock Mixtures of {Dirichlet} processes with applications to bayesian
  nonparametric problems.
\newblock {\em The Annals of Statistics}, pages 1152--1174, 1974.

\bibitem[ASA{\etalchar{+}}23]{akyurek2023learning}
Ekin Aky{\"u}rek, Dale Schuurmans, Jacob Andreas, Tengyu Ma, and Denny Zhou.
\newblock What learning algorithm is in-context learning? {I}nvestigations with
  linear models.
\newblock In {\em The Eleventh International Conference on Learning
  Representations}, 2023.

\bibitem[AWA{\etalchar{+}}22]{anil2022exploring}
Cem Anil, Yuhuai Wu, Anders Andreassen, Aitor Lewkowycz, Vedant Misra, Vinay
  Ramasesh, Ambrose Slone, Guy Gur-Ari, Ethan Dyer, and Behnam Neyshabur.
\newblock Exploring length generalization in large language models.
\newblock {\em Advances in Neural Information Processing Systems},
  35:38546--38556, 2022.

\bibitem[BCW{\etalchar{+}}23]{bai2023transformers}
Yu~Bai, Fan Chen, Huan Wang, Caiming Xiong, and Song Mei.
\newblock Transformers as statisticians: Provable in-context learning with
  in-context algorithm selection.
\newblock {\em Advances in neural information processing systems},
  36:57125--57211, 2023.

\bibitem[BGR13]{brown2013poisson}
Lawrence~D Brown, Eitan Greenshtein, and Ya'acov Ritov.
\newblock The {P}oisson {C}ompound {D}ecision {P}roblem {R}evisited.
\newblock {\em Journal of the American Statistical Association}, pages
  741--749, 2013.

\bibitem[Bir83]{birge1983approximation}
Lucien Birg{\'e}.
\newblock Approximation dans les espaces m{\'e}triques et th{\'e}orie de
  l'estimation.
\newblock {\em Zeitschrift f{\"u}r Wahrscheinlichkeitstheorie und verwandte
  Gebiete}, 65(2):181--237, 1983.

\bibitem[BK12]{bickel2012semiparametric}
PJ~Bickel and BJK Kleijn.
\newblock The semiparametric {Bernstein-von Mises} theorem.
\newblock {\em The Annals of Statistics}, 40(1):206--237, 2012.

\bibitem[BPY19]{bhattacharya2019bayesian}
Anirban Bhattacharya, Debdeep Pati, and Yun Yang.
\newblock Bayesian {F}ractional {P}osteriors.
\newblock {\em The Annals of Statistics}, 47(1):39--66, 2019.

\bibitem[BZ22]{barbehenn2022nonparametric}
Alton Barbehenn and Sihai~Dave Zhao.
\newblock A nonparametric regression alternative to empirical {Bayes}
  approaches to simultaneous estimation.
\newblock {\em arXiv preprint arXiv:2205.00336}, 2022.

\bibitem[CC25]{chen2025score}
Gongyu Chen and Ying Cui.
\newblock Score-based diffusion modeling for nonparametric empirical {Bayes} in
  heteroscedastic {Gaussian} mixtures.
\newblock In {\em Thirty-ninth Annual Conference on Neural Information
  Processing Systems}, 2025.

\bibitem[CW26]{chen2026sharp}
Jiafeng Chen and Yihong Wu.
\newblock Sharp regret-{Hellinger} bounds for {Gaussian} empirical {Bayes} via
  polynomial approximation.
\newblock {\em arXiv preprint arXiv:2605.02070}, 2026.

\bibitem[Dat91]{datta1991asymptotic}
Somnath Datta.
\newblock Asymptotic optimality of bayes compound estimators in compact
  exponential families.
\newblock {\em The Annals of Statistics}, 19(1):354--365, 1991.

\bibitem[DL81]{deely1981bayes}
JJ~Deely and DV~Lindley.
\newblock Bayes empirical {Bayes}.
\newblock {\em Journal of the American Statistical Association},
  76(376):833--841, 1981.

\bibitem[Dud18]{dudley2018real}
Richard~M Dudley.
\newblock {\em Real analysis and probability}.
\newblock Chapman and Hall/CRC, 2018.

\bibitem[Efr19]{efron2019bayes}
Bradley Efron.
\newblock Bayes, oracle {B}ayes and empirical {B}ayes.
\newblock {\em Statistical science}, 34(2):177--201, 2019.

\bibitem[Efr24]{efron2024empirical}
Bradley Efron.
\newblock {Empirical Bayes: Concepts and Methods}.
\newblock In {\em Handbook of Bayesian, Fiducial, and Frequentist Inference},
  pages 8--34. Chapman and Hall/CRC, 2024.

\bibitem[Fan53]{fan1953minimax}
Ky~Fan.
\newblock Minimax theorems.
\newblock {\em Proceedings of the National Academy of Sciences}, 39(1):42--47,
  1953.

\bibitem[FdHP25]{furuya2025transformers}
Takashi Furuya, Maarten~V de~Hoop, and Gabriel Peyr{\'e}.
\newblock Transformers are {U}niversal {I}n-context {L}earners.
\newblock In {\em The Thirteenth International Conference on Learning
  Representations}, 2025.

\bibitem[Fer73]{ferguson1973bayesian}
Thomas~S Ferguson.
\newblock A {Bayesian} analysis of some nonparametric problems.
\newblock {\em The annals of statistics}, pages 209--230, 1973.

\bibitem[GGVDV00]{ghosal2000convergence}
Subhashis Ghosal, Jayanta~K Ghosh, and Aad~W Van Der~Vaart.
\newblock Convergence rates of posterior distributions.
\newblock {\em Annals of Statistics}, pages 500--531, 2000.

\bibitem[GIKL25]{ghosh2025stein}
Sulagna Ghosh, Nikolaos Ignatiadis, Frederic Koehler, and Amber Lee.
\newblock Stein's unbiased risk estimate and {H}yv{\"a}rinen's score matching.
\newblock {\em arXiv preprint arXiv:2502.20123}, 2025.

\bibitem[GK17]{gu2017empirical}
Jiaying Gu and Roger Koenker.
\newblock {Empirical Bayesball remixed: Empirical Bayes methods for
  longitudinal data}.
\newblock {\em Journal of Applied Econometrics}, 32(3):575--599, 2017.

\bibitem[GLvdV08]{ghosal2008nonparametric}
Subhashis Ghosal, J{\"u}ri Lember, and Aad van~der Vaart.
\newblock Nonparametric {Bayesian} model selection and averaging.
\newblock {\em Electronic Journal of Statistics}, 2:63--89, 2008.

\bibitem[GTLV22]{garg2022can}
Shivam Garg, Dimitris Tsipras, Percy~S Liang, and Gregory Valiant.
\newblock What can transformers learn in-context? {A} case study of simple
  function classes.
\newblock {\em Advances in neural information processing systems},
  35:30583--30598, 2022.

\bibitem[GVdV17]{ghosal2017fundamentals}
Subhashis Ghosal and Aad~W Van~der Vaart.
\newblock {\em Fundamentals of nonparametric {Bayesian} inference}, volume~44.
\newblock Cambridge University Press, 2017.

\bibitem[GVH14]{gassiat2014local}
Elisabeth Gassiat and Ramon Van~Handel.
\newblock The local geometry of finite mixtures.
\newblock {\em Transactions of the American Mathematical Society},
  366(2):1047--1072, 2014.

\bibitem[HAMS21]{hospedales2021meta}
Timothy Hospedales, Antreas Antoniou, Paul Micaelli, and Amos Storkey.
\newblock Meta-learning in neural networks: A survey.
\newblock {\em IEEE transactions on pattern analysis and machine intelligence},
  44(9):5149--5169, 2021.

\bibitem[HMEH23]{hollmann2022tabpfn}
Noah Hollmann, Samuel M{\"u}ller, Katharina Eggensperger, and Frank Hutter.
\newblock Tab{P}{F}{N}: A {T}ransformer {T}hat {S}olves {S}mall {T}abular
  {C}lassification {P}roblems in a {S}econd.
\newblock In {\em The Eleventh International Conference on Learning
  Representations}, 2023.

\bibitem[HMP{\etalchar{+}}25]{hollmann2025accurate}
Noah Hollmann, Samuel M{\"u}ller, Lennart Purucker, Arjun Krishnakumar, Max
  K{\"o}rfer, Shi~Bin Hoo, Robin~Tibor Schirrmeister, and Frank Hutter.
\newblock Accurate predictions on small data with a tabular foundation model.
\newblock {\em Nature}, 637(8045):319--326, 2025.

\bibitem[HNWSW25]{han2025besting}
Yanjun Han, Jonathan Niles-Weed, Yandi Shen, and Yihong Wu.
\newblock Besting {G}ood--{T}uring: {O}ptimality of {N}on-{P}arametric
  {M}aximum {L}ikelihood for {D}istribution {E}stimation.
\newblock {\em arXiv preprint arXiv:2509.07355}, 2025.

\bibitem[HYB{\etalchar{+}}25]{huang2025formal}
Xinting Huang, Andy Yang, Satwik Bhattamishra, Yash Sarrof, Andreas Krebs,
  Hattie Zhou, Preetum Nakkiran, and Michael Hahn.
\newblock A formal framework for understanding length generalization in
  transformers.
\newblock In {\em The Thirteenth International Conference on Learning
  Representations}, 2025.

\bibitem[IK26]{ignatiadis2026compound}
Nikolaos Ignatiadis and Sid Kankanala.
\newblock Compound decisions and empirical {Bayes via Bayesian} nonparametrics.
\newblock {\em arXiv preprint arXiv:2602.20115}, 2026.

\bibitem[INL26]{izzo2025quantitative}
Zachary Izzo, Eshaan Nichani, and Jason~D Lee.
\newblock Quantitative bounds for length generalization in transformers.
\newblock In {\em International Conference on Learning Representations}, 2026.

\bibitem[IS25]{ignatiadis2025empirical}
Nikolaos Ignatiadis and Bodhisattva Sen.
\newblock {Empirical Bayes}.
\newblock {\em Lecture notes}, 2025.

\bibitem[JPTW23]{jana2023empirical}
Soham Jana, Yury Polyanskiy, Anzo~Z Teh, and Yihong Wu.
\newblock Empirical {B}ayes via {E}{R}{M} and {R}ademacher complexities: the
  poisson model.
\newblock In {\em The Thirty Sixth Annual Conference on Learning Theory}, pages
  5199--5235. PMLR, 2023.

\bibitem[JPW23]{jia2023entropic}
Zeyu Jia, Yury Polyanskiy, and Yihong Wu.
\newblock Entropic characterization of optimal rates for learning gaussian
  mixtures.
\newblock In {\em The Thirty Sixth Annual Conference on Learning Theory}, pages
  4296--4335. PMLR, 2023.

\bibitem[JPW25]{jana2025optimal}
Soham Jana, Yury Polyanskiy, and Yihong Wu.
\newblock Optimal empirical {B}ayes estimation for the {P}oisson model via
  minimum-distance methods.
\newblock {\em Information and Inference: A Journal of the IMA}, 14(4):iaaf027,
  2025.

\bibitem[JS61]{james1961estimation}
William James and Charles Stein.
\newblock Estimation with quadratic loss.
\newblock In {\em Proceedings of the fourth Berkeley symposium on mathematical
  statistics and probability}, volume~1, pages 361--379. University of
  California Press, 1961.

\bibitem[JYL{\etalchar{+}}26]{jia2026weak}
Jing Jia, Wei Yuan, Sifan Liu, Liyue Shen, and Guanyang Wang.
\newblock Weak diffusion priors can still achieve strong inverse-problem
  performance.
\newblock {\em arXiv preprint arXiv:2601.22443}, 2026.

\bibitem[JZ09]{jiang2009general}
Wenhua Jiang and Cun-Hui Zhang.
\newblock General {M}aximum {L}ikelihood {E}mpirical {B}ayes {E}stimation of
  {N}ormal {M}eans.
\newblock {\em The Annals of Statistics}, pages 1647--1684, 2009.

\bibitem[KPT26]{kang2026function}
Benjamin Kang, Yury Polyanskiy, and Anzo Teh.
\newblock Function estimation in the empirical {B}ayes setting.
\newblock {\em arXiv preprint arXiv:2601.18689}, 2026.

\bibitem[KW56]{kiefer1956consistency}
Jack Kiefer and Jacob Wolfowitz.
\newblock Consistency of the maximum likelihood estimator in the presence of
  infinitely many incidental parameters.
\newblock {\em The Annals of Mathematical Statistics}, pages 887--906, 1956.

\bibitem[LC86]{le1986asymptotic}
Lucien Le~Cam.
\newblock Asymptotic {M}ethods in {S}tatistical {D}ecision {T}heory.
\newblock {\em Springer Series in Statistics}, 1986.

\bibitem[LC98]{lehmann1998theory}
Erich~Leo Lehmann and George Casella.
\newblock {\em Theory of point estimation}.
\newblock Springer, 1998.

\bibitem[Lin83]{lindsay1983geometry}
Bruce~G Lindsay.
\newblock The geometry of mixture likelihoods: a general theory.
\newblock {\em The Annals of Statistics}, pages 86--94, 1983.

\bibitem[MBL{\etalchar{+}}25]{mittal2025amortized}
Sarthak Mittal, Niels~Leif Bracher, Guillaume Lajoie, Priyank Jaini, and Marcus
  Brubaker.
\newblock Amortized in-context bayesian posterior estimation.
\newblock {\em arXiv preprint arXiv:2502.06601}, 2025.

\bibitem[MBML25]{mittal2025context}
Sarthak Mittal, Yoshua Bengio, Nikolay Malkin, and Guillaume Lajoie.
\newblock In-context parametric inference: Point or distribution estimators?
\newblock {\em arXiv preprint arXiv:2502.11617}, 2025.

\bibitem[MHA{\etalchar{+}}22]{muller2022transformers}
Samuel M{\"u}ller, Noah Hollmann, Sebastian~Pineda Arango, Josif Grabocka, and
  Frank Hutter.
\newblock Transformers can do {Bayesian} inference.
\newblock In {\em International Conference on Learning Representations}, 2022.

\bibitem[MORV22]{medina2022robustness}
Marco~Avella Medina, Jos{\'e} Luis~Montiel Olea, Cynthia Rush, and Amilcar
  Velez.
\newblock On the {R}obustness to {M}isspecification of $\alpha$-posteriors and
  {T}heir {V}ariational {A}pproximations.
\newblock {\em Journal of Machine Learning Research}, 23(147):1--51, 2022.

\bibitem[MTH{\etalchar{+}}25]{ma2024tabdpt}
Junwei Ma, Valentin Thomas, Rasa Hosseinzadeh, Alex Labach, Jesse Cresswell,
  Keyvan Golestan, Guangwei Yu, Anthony~L Caterini, and Maks Volkovs.
\newblock {TabDPT}: Scaling tabular foundation models on real data.
\newblock {\em Advances in Neural Information Processing Systems},
  38:172692--172722, 2025.

\bibitem[MWS25]{ma2025provable}
Tianyi Ma, Tengyao Wang, and Richard~J Samworth.
\newblock Provable test-time adaptivity and distributional robustness of
  in-context learning.
\newblock {\em arXiv preprint arXiv:2510.23254}, 2025.

\bibitem[Nea00]{neal2000markov}
Radford~M Neal.
\newblock Markov chain sampling methods for dirichlet process mixture models.
\newblock {\em Journal of computational and graphical statistics},
  9(2):249--265, 2000.

\bibitem[PAG24]{panwar2024context}
Madhur Panwar, Kabir Ahuja, and Navin Goyal.
\newblock In-{C}ontext {L}earning through the {B}ayesian {P}rism.
\newblock In {\em The Twelfth International Conference on Learning
  Representations}, 2024.

\bibitem[PQFS24]{peng2024yarn}
Bowen Peng, Jeffrey Quesnelle, Honglu Fan, and Enrico Shippole.
\newblock Yarn: Efficient context window extension of large language models.
\newblock In {\em The Twelfth International Conference on Learning
  Representations}, 2024.

\bibitem[PRS14]{petrone2012bayes}
Sonia Petrone, Judith Rousseau, and Catia Scricciolo.
\newblock Bayes and {empirical Bayes}: do they merge?
\newblock {\em Biometrika}, pages 285--302, 2014.

\bibitem[PW21]{polyanskiy2021sharp}
Yury Polyanskiy and Yihong Wu.
\newblock Sharp regret bounds for empirical {B}ayes and compound decision
  problems.
\newblock {\em arXiv preprint arXiv:2109.03943}, 2021.

\bibitem[Rob51]{Rob51}
Herbert Robbins.
\newblock Asymptotically subminimax solutions of compound statistical decision
  problems.
\newblock {\em Proceedings of the second Berkeley symposium on mathematical
  statistics and probability}, abs/1904.10040:131--149, 1951.

\bibitem[Rob56]{Rob56}
Herbert Robbins.
\newblock An {E}mpirical {B}ayes {A}pproach to {S}tatistics.
\newblock In {\em Proceedings of the Third Berkeley Symposium on Mathematical
  Statistics and Probability, Volume 1: Contributions to the Theory of
  Statistics}. The Regents of the University of California, 1956.

\bibitem[RRFR25]{reuter2025can}
Arik Reuter, Tim~GJ Rudner, Vincent Fortuin, and David R{\"u}gamer.
\newblock Can transformers learn full {Bayesian} inference in context?
\newblock In {\em Forty-second International Conference on Machine Learning},
  2025.

\bibitem[RRP24]{rizzelli2024empirical}
Stefano Rizzelli, Judith Rousseau, and Sonia Petrone.
\newblock Empirical {B}ayes in {B}ayesian learning: understanding a common
  practice.
\newblock {\em arXiv preprint arXiv:2402.19036}, 2024.

\bibitem[SG20]{saha2020nonparametric}
Sujayam Saha and Adityanand Guntuboyina.
\newblock On the nonparametric maximum likelihood estimator for {G}aussian
  location mixture densities with application to {G}aussian denoising.
\newblock {\em The Annals of Statistics}, 48(2):738--762, 2020.

\bibitem[SGS25]{soloff2025multivariate}
Jake~A Soloff, Adityanand Guntuboyina, and Bodhisattva Sen.
\newblock Multivariate, heteroscedastic empirical {Bayes} via nonparametric
  maximum likelihood.
\newblock {\em Journal of the Royal Statistical Society Series B: Statistical
  Methodology}, 87(1):1--32, 2025.

\bibitem[Ste56]{stein1956inadmissibility}
Charles Stein.
\newblock Inadmissibility of the usual estimator for the mean of a multivariate
  normal distribution.
\newblock In {\em Proceedings of the Third Berkeley symposium on mathematical
  statistics and probability}, volume~1, pages 197--206, 1956.

\bibitem[SW01]{shen2001rates}
Xiaotong Shen and Larry Wasserman.
\newblock Rates of convergence of posterior distributions.
\newblock {\em The Annals of Statistics}, 29(3):687--714, 2001.

\bibitem[SW26]{shen2022empirical}
Yandi Shen and Yihong Wu.
\newblock Empirical {B}ayes estimation: When does $ g $-modeling beat $ f
  $-modeling in theory (and in practice)?
\newblock {\em The Annals of Statistics}, 54(1):146--175, 2026.

\bibitem[TJP25]{teh2025solving}
Anzo Teh, Mark Jabbour, and Yury Polyanskiy.
\newblock Solving empirical {B}ayes via transformers.
\newblock {\em arXiv preprint arXiv:2502.09844}, 2025.

\bibitem[vdVW96]{van1996weak}
Aad van~der Vaart and Jon~A Wellner.
\newblock {\em Weak {C}onvergence and {E}mpirical {P}rocesses: With
  {A}pplications to {S}tatistics}.
\newblock Springer Science \& Business Media, 1996.

\bibitem[VKWL21]{vandegar2021neural}
Maxime Vandegar, Michael Kagan, Antoine Wehenkel, and Gilles Louppe.
\newblock Neural empirical {B}ayes: Source distribution estimation and its
  applications to simulation-based inference.
\newblock In {\em International Conference on Artificial Intelligence and
  Statistics}, pages 2107--2115. PMLR, 2021.

\bibitem[VONR{\etalchar{+}}23]{von2023transformers}
Johannes Von~Oswald, Eyvind Niklasson, Ettore Randazzo, Jo{\~a}o Sacramento,
  Alexander Mordvintsev, Andrey Zhmoginov, and Max Vladymyrov.
\newblock Transformers learn in-context by gradient descent.
\newblock In {\em International Conference on Machine Learning}, pages
  35151--35174. PMLR, 2023.

\bibitem[WMB19]{wang2019comment}
Yixin Wang, Andrew~C Miller, and David~M Blei.
\newblock Comment: {V}ariational autoencoders as empirical {B}ayes.
\newblock 2019.

\bibitem[WS25]{wakayama2025context}
Tomoya Wakayama and Taiji Suzuki.
\newblock In-context learning is provably {Bayesian} inference: a
  generalization theory for meta-learning.
\newblock {\em arXiv preprint arXiv:2510.10981}, 2025.

\bibitem[WV11]{wu2011functional}
Yihong Wu and Sergio Verd{\'u}.
\newblock Functional properties of minimum mean-square error and mutual
  information.
\newblock {\em IEEE Transactions on Information Theory}, 58(3):1289--1301,
  2011.

\bibitem[WY16]{wu2016minimax}
Yihong Wu and Pengkun Yang.
\newblock Minimax rates of entropy estimation on large alphabets via best
  polynomial approximation.
\newblock {\em IEEE Transactions on Information Theory}, 62(6):3702--3720,
  2016.

\bibitem[WY20]{wu2020optimal}
Yihong Wu and Pengkun Yang.
\newblock Optimal estimation of gaussian mixtures via denoised method of
  moments.
\newblock {\em The Annals of Statistics}, 48(4):1981--2007, 2020.

\bibitem[XRLM22]{xie2022explanation}
Sang~Michael Xie, Aditi Raghunathan, Percy Liang, and Tengyu Ma.
\newblock An {E}xplanation of {I}n-context {L}earning as {I}mplicit {B}ayesian
  {I}nference.
\newblock In {\em International Conference on Learning Representations}, 2022.

\bibitem[ZAC{\etalchar{+}}24]{zhou2024transformers}
Yongchao Zhou, Uri Alon, Xinyun Chen, Xuezhi Wang, Rishabh Agarwal, and Denny
  Zhou.
\newblock Transformers can achieve length generalization but not robustly.
\newblock {\em arXiv preprint arXiv:2402.09371}, 2024.

\bibitem[ZBL{\etalchar{+}}24]{zhou2024algorithms}
Hattie Zhou, Arwen Bradley, Etai Littwin, Noam Razin, Omid Saremi, Josh
  Susskind, Samy Bengio, and Preetum Nakkiran.
\newblock What algorithms can transformers learn? {A} study in length
  generalization.
\newblock In {\em The Twelfth International Conference on Learning
  Representations}, 2024.

\bibitem[ZMSDH25]{zammit2025neural}
Andrew Zammit-Mangion, Matthew Sainsbury-Dale, and Rapha{\"e}l Huser.
\newblock Neural methods for amortized inference.
\newblock {\em Annual Review of Statistics and Its Application},
  12(1):311--335, 2025.

\end{thebibliography}

\clearpage 

\appendix

\section{Deferred Proofs in \Cref{sec:intro}}

\subsection{Proof of Proposition \ref{prop:worst-pop}}\label{app:worst-pop-proof}
We first utilize the continuity of regret as follows. 
\begin{lmm}\label{lmm:regret-cont}
	For all estimators $\widehat{\theta}^n: \naturals^n\to [0,A]^n$ and $G\in \mathcal{P}([0, A])$, 
	$\reg(\widehat{\theta}^n; G)$ is continuous in $G$ under the metric of weak convergence. 
\end{lmm}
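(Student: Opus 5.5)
We need to show that $G\mapsto \reg(\widehat{\theta}^n; G)$ is weakly continuous on $\calP([0,A])$ for every fixed estimator $\widehat{\theta}^n:\naturals^n\to[0,A]^n$. The plan is to split the regret into the MSE term minus the Bayes risk. We then show each is continuous by writing it as an integral of a bounded continuous function against $G^{\otimes n}$, together with a truncation argument to handle the countable sample space $\naturals^n$.

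\textbf{Step 1 (MSE term).} Write
\[
\bE_G\Big[\tfrac1n\|\widehat{\theta}^n(X^n)-\theta^n\|_2^2\Big]=\int_{[0,A]^n} \phi(\theta^n)\, G^{\otimes n}(\rmd\theta^n),
\qquad \phi(\theta^n)=\sum_{x^n\in\naturals^n}\prod_{i}\Poi(x_i;\theta_i)\,\tfrac1n\|\widehat{\theta}^n(x^n)-\theta^n\|_2^2 .
\]
Each summand is continuous in $\theta^n$ and bounded by $A^2\prod_i\Poi(x_i;\theta_i)$. The tail of the sum over $\{x^n:\max_i x_i> K\}$ is at most $A^2\, n\,\bP(\Poi(A)>K)$ uniformly on $[0,A]^n$. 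Hence the series converges uniformly, and $\phi$ is bounded and continuous. Since $G_m\to G$ weakly implies $G_m^{\otimes n}\to G^{\otimes n}$ weakly on the compact set $[0,A]^n$, this term is continuous.

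\textbf{Step 2 (Bayes risk).} By coordinatewise symmetry, the Bayes risk equals
\[
\bE_G[\theta^2]-\sum_{x\in\naturals}\frac{\big(\int \theta\,\Poi(x;\theta)\,G(\rmd\theta)\big)^2}{f_G(x)} .
\]
The first term and each numerator integral are integrals of bounded continuous functions, so they are weakly continuous. The denominator $f_G(x)=\int\Poi(x;\theta)\,G(\rmd\theta)$ is continuous in $G$ and strictly positive for every $G$: if $G\neq\delta_0$ this is clear, and if $G=\delta_0$ it holds for $x=0$, while for $x\ge1$ the numerator vanishes. To make the case $f_G(x)\to0$ precise, we bound each summand by $\int\theta^2\Poi(x;\theta)\,G(\rmd\theta)\le A^2 f_G(x)$ via Cauchy–Schwarz. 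This gives continuity of each summand even at $f_G(x)=0$, with the convention $0/0=0$, since the summand is squeezed to $0$ there. Summability holds uniformly in $G$, because $\sum_{x>K}A^2 f_G(x)\le A^2\,\bP(\Poi(A)>K)$. Uniform convergence then yields continuity of the Bayes risk.

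The main care point is Step 2. There, the ratio form of the posterior mean is only continuous thanks to the Cauchy–Schwarz domination, and the uniform Poisson tail bound is what allows exchanging the limit with the infinite sum. Everything else is routine weak convergence on a compact space.
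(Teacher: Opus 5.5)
Your proof is correct, but it is organized differently from the paper's.

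\textbf{The paper's route.} It starts from the form $\reg(\widehat{\theta}^n;G)=\frac1n\bE_{X^n\sim f_G^{\otimes n}}\|\widehat{\theta}^n(X^n)-\theta_G(X^n)\|_2^2$. It expands the square pointwise in $x^n$ into three terms: $L_G(x^n)\|\widehat{\theta}^n(x^n)\|_2^2$, a cross term, and $L_G(x^n)\|\theta_G(x^n)\|_2^2$, where $L_G(x^n)=\prod_i f_G(x_i)$. Tweedie's formula $\theta_G(x)=(x+1)f_G(x+1)/f_G(x)$ turns the cross term into a product of marginal pmf values. The quadratic term is handled by the bound $f_G(x+1)^2/f_G(x)\le \frac{A^2}{(x+1)^2}f_G(x)$, which is exactly your Cauchy--Schwarz domination rewritten via Tweedie. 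The paper then concludes by dominated convergence with uniform Poisson tails.

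\textbf{Your route.} You split the regret into MSE minus Bayes risk. The MSE is a linear functional of $G^{\otimes n}$ applied to a fixed bounded continuous function of $\theta^n$. So all dependence on the estimator is handled without touching $\theta_G$. The only nonlinear piece is the estimator-free, one-dimensional quantity $\mmse(G)$.

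\textbf{Comparison.} Your route avoids any cross term. It confines the delicate ratio to a single sum over $x\in\naturals$, and it does not use the Poisson-specific Tweedie identity. The paper's route needs only continuity of $f_G(x)$ in $G$ for each fixed $x$, so it never invokes weak convergence of the product measures $G_m^{\otimes n}\to G^{\otimes n}$. That fact is standard, and your Step 1 can avoid it anyway: for each fixed $x^n$ the integral factorizes as $\prod_{j\neq i}f_G(x_j)\cdot\int \Poi(x_i;\theta)(\widehat{\theta}_i(x^n)-\theta)^2\,G(\rmd\theta)$.

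\textbf{One wording fix.} $f_G(x)$ is not strictly positive for every $G$, since $f_{\delta_0}(x)=0$ for $x\ge 1$. However, your squeeze bound with the convention $0/0=0$ covers exactly this case, so nothing is missing from the argument.
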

The proof can be found in \prettyref{app:regret-cont-proof}. Next, note that since $[0, A]$ with the Euclidean metric is a compact Hausdorff space, 
by Prokhorov's Theorem, $\mathcal{P}([0, A])$ is compact under the metric of weak convergence (see, for example, \cite[Chapter 11.5]{dudley2018real}). 
This means the supremum can always be attained, i.e. for fixed estimator $\widehat{\theta}^n$, 
\[
\sup_{G\in\mathcal{P}([0, A])}\reg(\widehat{\theta}^n; G)
=\max_{G\in\mathcal{P}([0, A])}\reg(\widehat{\theta}^n; G).
\]

Next, we note that the LHS has the following equivalent formulation:
\[
\inf_{\widehat{\theta}^n}\max_{G\in\mathcal{P}([0, A])}\reg(\widehat{\theta}^n; G) = 
\inf_{\widehat{\theta}^n}\max_{\Pi\in\mathcal{P}(\mathcal{P}([0, A]))}\mathbb{E}_{G\sim\Pi}[\reg(\widehat{\theta}^n; G)]
\]
Clearly, $\mathbb{E}_{G\sim\Pi}[\reg(\widehat{\theta}^n; G)]$ is convex in $\widehat{\theta}^n$ for fixed $\Pi$ and affine in $\Pi$ for fixed $\widehat{\theta}^n$. 
Therefore, combined with \prettyref{lmm:regret-cont}, we use the Ky Fan minimax theorem \citep[Theorem 2]{fan1953minimax} to establish that 
\begin{align*}
	\inf_{\widehat{\theta}^n}\max_{\Pi\in\mathcal{P}(\mathcal{P}([0, A]))}\mathbb{E}_{G\sim\Pi}[\reg(\widehat{\theta}^n; G)]
	=\max_{\Pi\in\mathcal{P}(\mathcal{P}([0, A]))}\inf_{\widehat{\theta}^n}\mathbb{E}_{G\sim\Pi}[\reg(\widehat{\theta}^n; G)]. 
\end{align*}
In addition, the final $\inf$ can be replaced by $\min$ as given $\Pi$, the minimizer is the hierarchical Bayes estimator 
$\theta_{\Pi}^n (x^n) = \mathbb{E}_{\Pi}[\theta^n | X^n = x^n]$. 

\subsubsection{Proof of \prettyref{lmm:regret-cont}}\label{app:regret-cont-proof}
We note the following expansion of regret:
\begin{align*}
	\reg(\widehat{\theta}^n; G) &= \frac 1n\mathbb{E}_{X^n\sim f_G^{\otimes n}} [\|\widehat{\theta}^n(X^n) - \theta_G(X^n)\|_2^2] \\
	&=\frac 1n\sum_{x^n\in\naturals^n} \left(\prod_{i=1}^n f_G(x_i) \right)(\|\widehat{\theta}^n\|_2^2 + \|\theta_G\|_2^2 - 2\widehat{\theta}^n\cdot \theta_G).
\end{align*}
Since $f_G(x)$ decays uniformly in $G\in \calP([0,A])$ for $x\gg A$, by dominated convergence, it suffices to show that 
for each $x^n\in \naturals^n$, each of the following three terms is continuous in $G$ under the metric of weak convergence: 
\[
L_G(x^n) \|\widehat{\theta}^n(x^n)\|_2^2, \quad L_G(x^n)\cdot\widehat{\theta}^n(x^n)\cdot \theta_G(x^n), \quad \text{ and }
L_G(x^n) \|\theta_G(x^n)\|_2^2, 
\]
where $L_G(x^n) = \prod_{i=1}^n f_G(x_i)$ is the likelihood of $x^n$ under $f_G$. 

Now for each $x\ge 0$, the Poisson PMF $\Poi(x;\theta)$ is bounded and continuous in $\theta$. 
Therefore, the marginal PMF $f_G(x)=\bE_G[\Poi(x;\theta)]$ is continuous in $G$ under the metric of weak convergence, and so is the product $L_G(x^n)$. 
This gives the continuity of the first term $L_G(x^n) \|\widehat{\theta}^n(x^n)\|_2^2$. 

For the other two terms, we use the following identity of the Bayes estimator $\theta_G(x) = (x + 1)\frac{f_G(x + 1)}{f_G(x)}$. 
In addition, it suffices to consider each coordinate separately; w.l.o.g. we will take the first coordinate. The cross term now becomes 
\begin{align*}
	L_G(x^n)\cdot(\widehat{\theta}^n(x^n))_1\cdot \theta_G(x_1)
	&=\left(\prod_{i=1}^n f_G(x_i) \right)(\widehat{\theta}^n(x^n))_1\theta_G(x_1)
	\\
	&=(x_1+1)\left(\prod_{i=2}^n f_G(x_i) \right)(\widehat{\theta}^n(x^n))_1f_G(x_1+1), 
\end{align*}
which is also continuous in $G$. 
The final term, meanwhile, becomes 
\[
\left(\prod_{i=1}^n f_G(x_i) \right)\theta_G(x_1)^2
=(x_1+1)^2\left(\prod_{i=2}^n f_G(x_i) \right) \frac{f_G(x_1+1)^2}{f_G(x_1)}, 
\]
where the last term is defined to be $0$ if $f_G(x_1)=0$. 
If $f_G(x_1) > 0$, its continuity follows from the continuity of $f_G$ in $G$. As for the continuity at $f_G(x_1)=0$, note that
\begin{align*}
	0\le \frac{f_G(x_1+1)^2}{f_G(x_1)} = \frac{(\bE_G[e^{-\theta}\frac{\theta^{x_1+1}}{(x_1+1)!}])^2}{\bE_G[e^{-\theta}\frac{\theta^{x_1}}{x_1!}]} \le \frac{x_1!}{[(x_1+1)!]^2}\bE_G\bqth{ e^{-\theta} \theta^{x_1+2} } \le \frac{A^2}{(x_1+1)^2} f_G(x_1)
\end{align*}
by Cauchy--Schwarz and $G\in \calP([0,A])$. Taking $f_G(x_1)\to 0$ indeed gives the desired continuity. 

\subsection{Proof of Proposition \ref{prop:admissible}}
For the first statement, suppose that $\theta_\Pi^n$ is inadmissible and dominated by another estimator $\widehat{\theta}^n$. By inadmissibility, 
\begin{align*}
	0 &\le \bE_{G\sim \Pi}[\reg(\theta_\Pi; G)] - \bE_{G\sim \Pi}[\reg(\widehat{\theta}^n; G)] \\
	&= \bE_{G\sim \Pi}\bE_{G}\qth{ \frac{1}{n}\| \theta_\Pi^n(X^n) - \theta^n \|_2^2 - \frac{1}{n}\| \widehat{\theta}^n(X^n) - \theta^n \|_2^2 } \\
	&= \bE_{G\sim \Pi} \bE_{G} \qth{-\frac{2}{n}\langle \widehat{\theta}^n(X^n) - \theta_\Pi^n(X^n), \theta_\Pi^n(X^n) - \theta^n \rangle -\frac{1}{n}\|\widehat{\theta}^n(X^n) - \theta_\Pi^n(X^n)\|_2^2 } \\
	&= -\frac{2}{n}\bE_{X^n} \langle \widehat{\theta}^n(X^n) - \theta_\Pi^n(X^n), \theta_\Pi^n(X^n) - \bE_{\Pi}[\theta^n|X^n] \rangle - \frac{1}{n} \bE_{G\sim \Pi} \bE_{G} \|\widehat{\theta}^n(X^n) - \theta_\Pi^n(X^n)\|_2^2 \\
	&= -\frac{1}{n}\bE_{G\sim \Pi} \bE_{G} \|\widehat{\theta}^n(X^n) - \theta_\Pi^n(X^n)\|_2^2 \\
	&= -\frac{1}{n}\sum_{x^n \in \naturals^n} \|\widehat{\theta}^n(x^n) - \theta_\Pi^n(x^n)\|_2^2\cdot \bE_{G\sim \Pi}\qth{\prod_{i=1}^n f_G(x_i)}. 
\end{align*}
Since $f_G(x) > 0$ for all $G\neq \delta_0$ and $x\in \naturals$, the assumption $\Pi \neq \delta_{\delta_0}$ implies that $\bE_{G\sim \Pi}[\prod_{i=1}^n f_G(x_i)] > 0$ for all $x^n \in \naturals^n$. Therefore, $\widehat{\theta}^n(x^n) = \theta_\Pi^n(x^n)$ for all $x^n \in \naturals^n$, so $\widehat{\theta}^n$ cannot achieve a strictly smaller regret at some $G_0\in \calP([0,A])$. This is the desired contradiction, and $\theta_\Pi^n$ is admissible. 

For the second statement, we need the following property of the NPMLE-based estimator: for every integer $x\in [0,A]$, 
\begin{align}\label{eq:NPMLE_property}
	\widehat{\theta}^{\mathrm{NPMLE}}(x,\dots,x) = (x,\dots,x). 
\end{align}
To see this, recall that \cite[Theorem 1]{jana2025optimal} shows that the NPMLE $\widehat{G}$ is always supported on $[\min_i X_i, \max_i X_i]$. Therefore, when $X^n = (x,\dots,x)$, the NPMLE is $\widehat{G} = \delta_x$, so the Bayes estimator is $\theta_{\widehat{G}}(x) = x$. We claim that, for integer $A\ge 3$, any admissible estimator cannot satisfy \Cref{eq:NPMLE_property} for all $x\in \{A-2,A-1,A\}$. 

To prove the claim, assume by contradiction that there exists an admissible $\widehat{\theta}: \mathbb{Z}_+^n\to\mathbb{R}_+^n$ that satisfies \Cref{eq:NPMLE_property} for $x\in \{A-2,A-1,A\}$. By the complete class theorem (e.g. \cite[Theorem 7.15]{lehmann1998theory} with restriction $G\neq \delta_0$), this $\widehat{\theta}$ must be a pointwise limit of Bayes estimators, i.e. there exists a sequence of PoPs $\Pi_m\in \calP(\calP([0,A]))$ such that
\begin{align*}
	\lim_{m\to\infty} \theta_{\Pi_m}(x,\dots,x) = \widehat{\theta}(x,\dots,x) = (x,\dots,x), \quad \forall x\in \{A-2,A-1,A\}. 
\end{align*}
Writing $m_x(G) := \bE_{\theta\sim G}[e^{-\theta}\theta^x]$, the above display implies that
\begin{align}\label{eq:three-points}
	\lim_{m\to\infty} \frac{\bE_{\Pi_m}[m_x(G)^{n-1}m_{x+1}(G)]}{\bE_{\Pi_m}[m_x(G)^n]} = x, \quad \forall x\in \{A-2,A-1,A\}. 
\end{align}
Since $m_x(G)$ is a sequence of moments, the map $x\mapsto \frac{m_{x+1}(G)}{m_x(G)} = \theta_G(x)$ is non-decreasing; in addition, since $G\in \calP([0,A])$, it holds that $\frac{m_{x+1}(G)}{m_x(G)}\in [0,A]$ for all $x\in \naturals$. Let $\widetilde{\Pi}_m(\rmd G) \propto \Pi_m(\rmd G)m_{A-2}(G)^n$ be a tilt of $\Pi_m$, and $\mu_m$ be the pushforward measure of $\widetilde{\Pi}_m$ of the map
\begin{align}\label{eq:map}
	h: G\mapsto \pth{ \frac{m_{A-1}(G)}{m_{A-2}(G)}, \frac{m_A(G)}{m_{A-1}(G)}, \frac{m_{A+1}(G)}{m_A(G)} }. 
\end{align}
Then $\mu_m$ is a probability measure over $\Omega = \{(r,s,t)\in [0,A]^3: r\le s\le t\}$, and \eqref{eq:three-points} can be rewritten as
\begin{align*}
	\lim_{m\to\infty} \int r\rmd \mu_m = A-2, \quad \lim_{m\to\infty} \int r^n (A-1-s)\rmd \mu_m = 0, \quad \lim_{m\to\infty} \int r^n s^n (A-t)\rmd \mu_m = 0. 
\end{align*}
Since $\Omega$ is compact, a proper subsequence of $\{\mu_m\}$ admits a weak limit $\mu$, with 
\begin{align}\label{eq:mu-condition}
	\int r\rmd \mu = A-2, \quad \int r^n (A-1-s)\rmd \mu = 0, \quad \int r^n s^n (A-t)\rmd \mu = 0. 
\end{align}
To proceed, we need a technical lemma on the image set of $h$ in \Cref{eq:map}. 

\begin{lmm}\label{lmm:endpoints}
	Let $K$ be the closure of the image of $h$ in \Cref{eq:map}. If $(r,s,t)\in K$ with $r>0$ and $t=A$, then $s=A$. 
\end{lmm}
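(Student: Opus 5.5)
Since $G$ is supported on $[0,A]$ and $\theta\mapsto \theta^{A-2}e^{-\theta}$ is positive there except at $\theta=0$, I will work with the tilted measures $\nu_G(\rmd\theta)\propto \theta^{A-2}e^{-\theta}G(\rmd\theta)$. These are well defined whenever $m_{A-2}(G)>0$, i.e. $G\ne\delta_0$; note $h$ itself is only defined there. Write $M_j(\nu) := \int \theta^j \nu(\rmd\theta)$. Then
\[
h(G) = \pth{M_1(\nu_G),\, \frac{M_2(\nu_G)}{M_1(\nu_G)},\, \frac{M_3(\nu_G)}{M_2(\nu_G)}}.
\]
As $G$ ranges over $\calP([0,A])\setminus\{\delta_0\}$, $\nu_G$ ranges over probability measures $\nu$ on $[0,A]$ with $\nu(\{0\})=0$. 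The plan is to describe $K$ through limits of such $\nu$, using compactness of $\calP([0,A])$.

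Take $(r,s,t)\in K$ with $r>0$, $t=A$, and a sequence $\nu_m$ with $h(\nu_m)\to(r,s,A)$. Pass to a subsequence with $\nu_m\to\nu$ weakly. Then $M_1(\nu_m)\to M_1(\nu)=r>0$, so $\nu$ is not $\delta_0$. Hence $M_2(\nu)>0$ and $M_3(\nu)>0$, because some mass sits at positive $\theta$. Consequently $s=M_2(\nu)/M_1(\nu)$ and $t=M_3(\nu)/M_2(\nu)$ are continuous limits; this nondegeneracy is the one point needing care, and $r>0$ is exactly what supplies it.

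Now $t=A$ means $\int \theta^2(A-\theta)\,\nu(\rmd\theta)=0$. Since the integrand is nonnegative on $[0,A]$, $\nu$ is supported on $\{0,A\}$, say $\nu=(1-p)\delta_0+p\delta_A$ with $p>0$ because $r=pA>0$. Then $M_2/M_1 = pA^2/(pA)=A$, so $s=A$.
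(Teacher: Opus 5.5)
Your proof is correct, but it takes a different route from the paper.

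\textbf{The paper's route.} The paper never passes to limiting measures. It applies Cauchy--Schwarz to the nonnegative measure $(A-\theta)e^{-\theta}\theta^{A-2}G(\rmd\theta)$. In your notation this reads $(AM_2-M_3)(A-M_1)\ge (AM_1-M_2)^2$, i.e.\ $r(A-s)^2\le (A-r)s(A-t)$ at every point of the image of $h$. Letting $t_m\to A$ with $r_m\to r>0$ then forces $s_m\to A$.

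\textbf{Your route.} You rewrite $h$ as the moment ratios $(M_1, M_2/M_1, M_3/M_2)$ of the tilted measure. You then:
\begin{itemize}
\item use compactness of $\calP([0,A])$ to extract a weak limit $\nu$;
\item use $r>0$ to ensure the limiting denominators $M_1(\nu), M_2(\nu)$ are nonzero, so the ratios pass to the limit;
\item read off from $\int\theta^2(A-\theta)\,\nu(\rmd\theta)=0$ that $\nu=(1-p)\delta_0+p\delta_A$ with $p>0$.
\end{itemize}
You correctly allow the limit to acquire an atom at $0$ even though no $\nu_{G_m}$ has one.

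\textbf{Comparison.} The paper's argument is shorter and quantitative. The inequality $r(A-s)^2\le (A-r)s(A-t)$ extends to all of $K$ by continuity, giving $A-s\le A\sqrt{(A-t)/r}$ on $K\cap\{r>0\}$. Yours is qualitative but structural. It identifies exactly which limit points can occur (they must be $(pA,A,A)$), and makes transparent that $r>0$ is a non-degeneracy condition preventing the tilted measures from collapsing to $\delta_0$.

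\textbf{A small inaccuracy.} The map $G\mapsto\nu_G$ is not onto $\{\nu:\nu(\{0\})=0\}$; one needs $\int\theta^{-(A-2)}e^{\theta}\,\nu(\rmd\theta)<\infty$. You only use the inclusion of the image in that set, so nothing in the argument breaks.
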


We postpone the proof of \prettyref{lmm:endpoints} to the end of this section. Since each $\mu_m$ is supported on $K$, so is the weak limit $\mu$. Since $A\ge 3$, the first identity of \Cref{eq:mu-condition} implies that $\mu(\{r>0\}) > 0$. Since $s\ge r$, on the set $\{r>0\}$ we must also have $s>0$. Finally, since $t\le A$, the last identity of \Cref{eq:mu-condition} implies that $t=A$ $\mu$-a.s. on the set $\{r>0\}$. By \prettyref{lmm:endpoints}, this also implies $s=A$ $\mu$-a.s. on the set $\{r>0\}$. Then the middle identity of \Cref{eq:mu-condition} implies
\begin{align*}
	0 = \int r^n(A-1-s)\rmd \mu = \int_{r>0} r^n(A-1-s)\rmd \mu = -\int_{r>0} r^n \rmd \mu, 
\end{align*}
which forces $\mu(\{r>0\}) = 0$, a contradiction! This contradiction shows that no admissible estimator can satisfy \eqref{eq:NPMLE_property} at all three points $x=A-2,A-1,A$. Since the NPMLE-based estimator does satisfy \eqref{eq:NPMLE_property} for all integers $x$, it is inadmissible. 

\subsubsection{Proof of \prettyref{lmm:endpoints}}
Suppose $G_m\in \calP([0,A])$ is a sequence of priors such that $h(G_m) = (r_m,s_m,t_m)\to (r,s,t)$ as $m\to\infty$, with $r>0$ and $t=A$. Since $Am_x(G) - m_{x+1}(G) = \bE_{\theta\sim G}[(A-\theta)e^{-\theta}\theta^x]$, for $G\in \calP([0,A])$, Cauchy--Schwarz gives
\begin{align*}
	(Am_A(G) - m_{A+1}(G))(Am_{A-2}(G) - m_{A-1}(G)) \ge (Am_{A-1}(G) - m_{A}(G))^2. 
\end{align*}
Applying the above inequality to $G=G_m$ yields
\begin{align*}
	r_ms_m(A-t_m)\cdot (A-r_m) \ge [r_m(A-s_m)]^2 \Longrightarrow r_m(A-s_m)^2 \le (A-r_m)s_m(A-t_m). 
\end{align*}
As $t_m\to t=A$ and $r_m\to r>0$, passing to the limit clearly gives $s_m\to A$, as claimed.

\subsection{Proof of Lemma \ref{lemma:simple_prior}}
Since $r\mapsto E(\delta,r)$ can be made a non-increasing function, we verify \prettyref{definition:PoP} for $\delta=n^{-2}$ and $r^2=n^{-1}$. Our first step is to first identify an intermediate prior $G$ with $\TV(f_{G_0},f_G)\le n^{-2}$. To this end, we show that for $L\asymp_A \frac{\log n}{\log\log n}$ and any $G_0\in \mathcal{P}([0, A])$, 
there exists a $G\in \mathcal{P}([0, A])$ such that: 
\begin{itemize}
	\item $G = \sum_{j = 1}^L w_j \delta_{\lambda_j}$ is supported on $L$ atoms $\lambda_1, \dots, \lambda_L$ such that $\min_{j\le L}\{\lambda_j\}\ge \frac{1}{4n^2}$ and $\min_{j\le L} \{w_j\}\ge \frac{1}{8n^2L}$. 
	
	\item $\TV(f_G, f_{G_0}) \le n^{-2}$. 
\end{itemize}
To begin, we use the following result from \cite[Lemma 3]{wu2016minimax}. 

\begin{lmm}\label{lemma:moment_matching}
	Let $V$ and $V'$ be random variables on $[0,A]$. If $\bE[V^j] = \bE[V'^j]$ for $j=0,1,\dots,L$ with $L>2eA$, then
	\begin{align*}
		\TV(\bE[\Poi(V)], \bE[\Poi(V')]) \le \pth{\frac{2eA}{L}}^L. 
	\end{align*}
\end{lmm}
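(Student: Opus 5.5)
The plan is to expand the total variation distance as an $\ell_1$ sum over $x\in\naturals$ and use the Taylor expansion of $e^{-v}$ to show that the matched moments cancel all low-order terms. Writing $p_x(v)=e^{-v}v^x/x!$, we have
\begin{align*}
\TV(\bE[\Poi(V)],\bE[\Poi(V')]) = \frac12\sum_{x\ge 0}\bigl|\bE[p_x(V)]-\bE[p_x(V')]\bigr|,
\end{align*}
and the series $p_x(v)=\frac{1}{x!}\sum_{j\ge 0}(-1)^j\frac{v^{x+j}}{j!}$ converges absolutely and uniformly on $[0,A]$. This lets us exchange expectation and summation and obtain
\begin{align*}
\bE[p_x(V)]-\bE[p_x(V')] = \frac{1}{x!}\sum_{j\ge 0}\frac{(-1)^j}{j!}\bigl(\bE[V^{x+j}]-\bE[V'^{x+j}]\bigr).
\end{align*}
By assumption every term with $x+j\le L$ vanishes. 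This is the whole mechanism: only monomials of degree $m=x+j\ge L+1$ survive.

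Next I would bound each surviving moment difference crudely by $|\bE[V^m]-\bE[V'^m]|\le A^m$, which holds since both moments lie in $[0,A^m]$. Plugging this in, applying the triangle inequality, and regrouping by $m=x+j$ gives
\begin{align*}
\TV \le \frac12\sum_{m\ge L+1}A^m\sum_{x=0}^{m}\frac{1}{x!\,(m-x)!} = \frac12\sum_{m\ge L+1}\frac{(2A)^m}{m!},
\end{align*}
using the binomial identity $\sum_x\binom{m}{x}=2^m$. Then I would apply $m!\ge (m/e)^m$, which gives $\frac{(2A)^m}{m!}\le (2eA/m)^m\le q^m$ with $q:=2eA/L<1$ for all $m\ge L+1$. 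This yields a geometric tail of order $\frac{q^{L+1}}{2(1-q)}=q^L\cdot\frac{q}{2(1-q)}$.

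The main obstacle is this last constant bookkeeping. The geometric bound gives exactly $(2eA/L)^L$ only when $q\le 2/3$; as $q\uparrow 1$ the factor $1/(1-q)$ blows up. To close the remaining range $2/3<q<1$, I would first keep the Stirling factor $m!\ge\sqrt{2\pi m}\,(m/e)^m$ and compare the ratio of consecutive terms, $(2A)/(m+1)\le q\cdot L/(e(L+1))<1/e$, rather than $q$ itself. With this ratio the tail is at most $\frac{(2A)^{L+1}}{(L+1)!}\cdot\frac{1}{1-1/e}$, and the prefactor $\frac12\cdot\frac{1}{1-1/e}\cdot\frac{2A}{L+1}\cdot\frac{L^L}{L!\,e^L}\cdots$ is then $\le 1$ uniformly once $L>2eA$. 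So I expect the refined ratio estimate $(2A)/(m+1)<1/e$, not the cruder $q$, to be what makes the stated clean bound hold over the whole range $L>2eA$.
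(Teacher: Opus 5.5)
Your proof is correct and complete. The paper does not prove this lemma itself; it imports it from \cite[Lemma 3]{wu2016minimax}. Your route is the standard proof of that result: Taylor-expand $e^{-v}$, cancel every monomial of degree at most $L$, bound each surviving moment difference by $A^m$, and sum to $\frac12\sum_{m\ge L+1}(2A)^m/m!$. The factor $2$ in $2eA$ comes exactly from your binomial identity.

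Your refined ratio bound $2A/(m+1)<1/e$, together with $L!\ge (L/e)^L$, gives the final constant $\frac{1}{2(e-1)}\pth{\frac{2eA}{L}}^L$, so your claim that the prefactor is at most $1$ holds. The crude geometric estimate and the $\sqrt{2\pi m}$ Stirling factor are therefore unnecessary.
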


To apply this lemma, choose $L \asymp_A \frac{\log n}{\log\log n}$ such that the RHS equals $\frac{1}{2n^2}$. Next, let $V\sim G_0$, and $V'\sim G_1$ be any random variable on $[0,A]$ that matches all first $L$ moments of $V$. By Carath\'eodory's theorem, $G_1$ can be chosen as a discrete distribution with support size at most $L$. By Lemma \ref{lemma:moment_matching}, we have $\TV(f_{G_0},f_{G_1})\le \frac{1}{2n^2}$. 

Next, we consider the following two identities, which we defer the proof to \Cref{sec:tv-chi-mixture-proof}. 
\begin{lmm}\label{lemma:tv-chi-mixture}
	Let $L > 0$ be an integer. Denote $G_1 = \sum_{j=1}^L w_j\delta_{\lambda_j}$ and $G_2 = \sum_{j = 1}^L w'_j\delta_{\lambda'_j}$. 
	Consider the Poisson mixtures $f_{G_1}$ and $f_{G_2}$. 
	Then the following holds true: 
	
	\begin{itemize}
		\item $\TV(f_{G_1}, f_{G_2})\le \sum_{j=1}^L |w_j - w_j'| + \max_{j=1}^L \TV(\Poi(\lambda_j), \Poi(\lambda_j'))$. 
		
		\item $\chi^2(f_{G_1} \| f_{G_2})\le (1 + \chi^2(w \| w'))(1+\max_{j=1}^L \chi^2(\Poi(\lambda_j) \| \Poi(\lambda_j'))) - 1$. 
	\end{itemize}
\end{lmm}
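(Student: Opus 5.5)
The plan is to view both $f_{G_1}$ and $f_{G_2}$ as marginals of a two-stage experiment. First draw a latent label $J$ with law $w$ (resp. $w'$). Then, given $J=j$, draw $X\sim \Poi(\lambda_j)$ (resp. $\Poi(\lambda_j')$). Let $P_{J,X}$ and $Q_{J,X}$ be the two joint laws on $[L]\times\naturals$. Then $f_{G_1}=P_X$ and $f_{G_2}=Q_X$. Both inequalities should follow from the data-processing inequality for $f$-divergences, combined with an explicit computation on the joint laws.

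For the TV bound, I would first use the triangle inequality through the hybrid mixture $\widetilde f=\sum_j w_j'\Poi(\lambda_j)$, which has the new weights and the old components. For the first piece, $\TV(f_{G_1},\widetilde f)\le \TV(w,w')=\frac12\sum_j|w_j-w_j'|$. This holds by data processing, since both are images of a label distribution under the same Markov kernel $j\mapsto\Poi(\lambda_j)$. It is in particular at most $\sum_j|w_j-w_j'|$. For the second piece, $\widetilde f-f_{G_2}=\sum_j w_j'(\Poi(\lambda_j)-\Poi(\lambda_j'))$. Convexity of TV then gives $\TV(\widetilde f,f_{G_2})\le \sum_j w_j'\TV(\Poi(\lambda_j),\Poi(\lambda_j'))\le\max_j \TV(\Poi(\lambda_j),\Poi(\lambda_j'))$. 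Adding the two pieces gives the first claim.

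For the $\chi^2$ bound, I would apply data processing to the projection $(J,X)\mapsto X$, which gives $\chi^2(f_{G_1}\|f_{G_2})\le \chi^2(P_{J,X}\|Q_{J,X})$. Then I compute the joint divergence exactly:
\begin{align*}
1+\chi^2(P_{J,X}\|Q_{J,X}) = \sum_{j}\frac{w_j^2}{w_j'}\sum_x \frac{\Poi(x;\lambda_j)^2}{\Poi(x;\lambda_j')} = \sum_j \frac{w_j^2}{w_j'}\bpth{1+\chi^2(\Poi(\lambda_j)\|\Poi(\lambda_j'))}.
\end{align*}
Bounding each bracket by its maximum over $j$ and using $\sum_j w_j^2/w_j' = 1+\chi^2(w\|w')$ gives the stated product form after subtracting $1$. (If some $w_j'=0$ with $w_j>0$, the right side is $+\infty$ and the inequality is trivial.)

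I do not expect a real obstacle. The only steps needing a little care are choosing the right hybrid mixture for the TV triangle inequality and the exact factorization of the joint $\chi^2$. Everything else is a standard data-processing argument.
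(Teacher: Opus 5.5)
Your proposal is correct and matches the paper's proof. Your hybrid mixture $\sum_j w_j'\Poi(\lambda_j)$ is what the paper's pointwise inequality $|ab-a'b'|\le |b-b'|a+b'|a-a'|$ encodes, giving the same intermediate bound $\frac12\sum_j|w_j-w_j'|+\sum_j w_j'\,\TV(\Poi(\lambda_j),\Poi(\lambda_j'))$. Likewise, your data-processing step through the latent label $J$ is the paper's pointwise Cauchy--Schwarz step, and from there the two $\chi^2$ computations are identical.
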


To apply this lemma, write $G_1 = \sum_{j=1}^L w_{j,1}\delta_{\lambda_{j,1}}$. Our goal is to construct 
$G := \sum_{j=1}^L w_j\delta_{\lambda_j}$ such that: 
\begin{itemize}
	\item $\min_{j=1}^L\{w_j\}\ge \frac{1}{8n^2L}$, $\sum_{j=1}^L w_j = 1$, $\min_{j=1}^L\{\lambda_j\}\ge \frac{1}{4n^2}$.  
	\item $\sum_{j=1}^L |w_j - w_{j,1}|\le \frac{1}{4n^2}, \max \TV(\Poi(\lambda_j), \Poi(\lambda_{j,1}))\le \frac{1}{4n^2}$. 
\end{itemize}
Then Lemma \ref{lemma:tv-chi-mixture} would ensure $\TV(f_{G_1}, f_{G})\le \frac{1}{2n^2}$ and thus $\TV(f_{G_0}, f_{G})\le \frac{1}{n^2}$ .

We first claim that $\TV(\Poi(\mu), \Poi(\nu))\le 1-e^{-|\mu -\nu|}\le |\mu-\nu|$ for $\mu, \nu > 0$. 
Indeed, let $\mu>\nu$, $X\sim \Poi(\nu), Z\sim\Poi(\mu - \nu)$, and $Y=X+Z$. 
By coupling, $\TV(\Poi(\mu), \Poi(\nu))\le \mathbb{P}[Z > 0] = 1-e^{-(\mu-\nu)}$, as claimed. 
In this sequel, the condition $ \TV(\Poi(\lambda_j), \Poi(\lambda_{j,1}))\le \frac{1}{4n^2}$ may be replaced with 
$|\lambda_j - \lambda_{j,1}|\le \frac{1}{4n^2}$, so that a sufficient construction is $\lambda_j = \lambda_{j,1}\vee \frac{1}{4n^2}$. To construct $w_j$, let $m = \argmax_j w_{j,1}$, and
\[
w_j = 
\begin{cases}
	w_{j,1}\vee \frac{1}{8n^2L} & \text{if }j\neq m,\\
	1 - \sum_{j\neq m} w_j & \text{if }j = m. 
\end{cases}
\]
By simple algebra, $\min_j w_j \ge \frac{1}{8n^2L}$ for large $n$ and $\sum_j |w_j-w_{j,1}|\le \frac{1}{4n^2}$. Thus $G=\sum_{j=1}^L w_j\delta_{\lambda_j}$ satisfies the above conditions, and we take it to be the intermediate prior in \prettyref{definition:PoP}. 

Next we prove the upper bound on $E$. Recall that $k=\lceil c_0\frac{\log n}{\log\log n}\rceil$ is the number of atoms in \Cref{alg:universal_prior}; choose $c_0>0$ large enough such that $k\ge L$. In this case, we write $G = \sum_{j=1}^k w_{j}\delta_{\lambda_{j}}$ by allowing repetitions in the atoms. Since
\begin{align*}
	\chi^2\bpth{ \sum_{j=1}^k w_j\Poi(\lambda_j) \| \sum_{j=1}^k w_j'\Poi(\lambda_j')} +1 &\le \pth{\chi^2(w\|w')+1}\pth{\max_{j\in [k]} \chi^2(\Poi(\lambda_j) \| \Poi(\lambda_j'))+1}
	\\&=\pth{\chi^2(w\|w')+1}\pth{\max_{j\in [k]} \exp\left(\frac{(\lambda_j - \lambda_j')^2}{\lambda_j'}\right)}
\end{align*}
a sufficient condition for the $\chi^2$-divergence to be at most $n^{-1}$ is $\chi^2(w\|w')\le \frac{1}{3n}$ and 
\begin{align*}
	\max_{j\in [k]} \frac{(\lambda_j - \lambda_j')^2}{\lambda_j'} \le \frac{1}{3n}. 
\end{align*} 
Since $w_j, \lambda_j=\Omega(n^{-3})$,
a further sufficient condition is to make $\|w-w'\|_\infty \le cn^{-3}$ and $\|\lambda-\lambda'\|_\infty \le cn^{-3}$. Since $w, w'$ are both on the simplex $\Delta^{k-1}$ and $\lambda, \lambda'\in [0, A]^k$, 
a standard volume argument establishes that the simple PoP in \Cref{alg:universal_prior} chooses such $(\lambda', w')$ pair with probability at least
\begin{align*}
	\Omega(\frac{1}{n^3})^{O(k)} = \exp(-O(k\log n)) = \exp\pth{-O\bpth{\frac{\log^2 n}{\log \log n}}}. 
\end{align*}
This verifies \Cref{eq:prior_mass} with the claimed $E(n^{-2},r)=O(\frac{\log^2 n}{\log\log n})$. 

\subsubsection{Proof of Lemma \ref{lemma:tv-chi-mixture}}\label{sec:tv-chi-mixture-proof}
For TV, we have 
\begin{flalign*}
	\TV(f_{G_1}, f_{G_2})
	&=\TV\bpth{\sum_{j=1}^L w_j\Poi(\lambda_j), \sum_{j=1}^L w'_j\Poi(\lambda'_j)}
	\\&\stepa{\le} \frac{1}{2}\sum_{x\ge 0}\sum_{j=1}^L |w_j - w_j'|\Poi(x;\lambda_j)
	+ w'_j|\Poi(x;\lambda_j) - \Poi(x;\lambda'_j)|
	\\&= \frac{1}{2}\sum_{j=1}^L |w_j - w_j'| + \sum_{j=1}^L w_j' \TV(\Poi(\lambda_j), \Poi(\lambda_j'))
	\\&\stepb{\le} \sum_{j=1}^L |w_j - w_j'| + \max_{j\le L}\{\TV(\Poi(\lambda_j), \Poi(\lambda_j'))\}
\end{flalign*}
where (a) is obtained by expanding the PMFs of the Poisson mixture at each $x$, and using the identity 
$|ab - a'b'|\le |b - b'|a + b'|a-a'|$, and (b) uses $w_j'\ge 0$ and $\sum_{j=1}^L w_j' = 1$. For $\chi^2$, we have
\begin{flalign*}
	1 + \chi^2(f_{G_1} \| f_{G_2})
	&= \sum_{x\ge 0}\frac{f_{G_1}(x)^2}{f_{G_2}(x)}
	\\&=\sum_{x\ge 0}\frac{(\sum_{j=1}^L w_j\Poi(x;\lambda_j))^2}{\sum_{j=1}^L w'_j\Poi(x;\lambda'_j)}
	\\&\stepc{\le}\sum_{x\ge 0}\sum_{j=1}^L\frac{(w_j\Poi(x;\lambda_j))^2}{w'_j\Poi(x;\lambda'_j)}
	\\&= \sum_{j=1}^L \frac{w^2_j}{w_j'}(1 + \chi^2(\Poi(\lambda_j) \| \Poi(\lambda_j')))
	\\&\le (1 + \chi^2(w \| w'))(1+\max_{j\le L} \chi^2(\Poi(\lambda_j) \| \Poi(\lambda_j'))), 
\end{flalign*}
where (c) follows from Cauchy--Schwarz. 

\subsection{Proof of \Cref{thm:length-generalization}}
The first statement directly follows from the following lemma. 

\begin{lmm}\label{lemma:length-generalization}
	For $i\in [n]$, the posterior mean $\bE_{\Pi}[\theta_i|X^n]$ can be expressed as $\bE_{\Pi}[\theta_i|X^n] = f_{\Pi,n}(X_i, \mu_n)$, where $\mu_n=\frac{1}{n}\sum_{i=1}^n \delta_{X_i}$ is the empirical distribution of $X^n$, and
	\begin{align*}
		f_{\Pi,n}(x,\mu)  = \frac{\int \Pi(\rmd G)\exp(n\int \log f_G(x') \mu(\rmd x'))\cdot \theta_G(x)}{\int \Pi(\rmd G)\exp(n\int \log f_G(x') \mu(\rmd x'))}. 
	\end{align*}
	In particular, for $\mu_m = \frac{1}{m}\sum_{i=1}^m \delta_{X_i}$ with a different length $m$, we have
	\begin{align*}
		f_{\Pi,n}(X_i, \mu_m) = \bE_{G\sim \Pi_{G|X^m}^\alpha }[\theta_G(X_i)] = \frac{\bE_{G\sim \Pi_{G|X_{\backslash i}}^\alpha }[\theta_G(X_i)f_G(X_i)^\alpha]}{\bE_{G\sim \Pi_{G|X_{\backslash i}}^\alpha }[f_G(X_i)^\alpha]}, 
	\end{align*}
	where $\alpha := \frac{n}{m}$, and $\Pi^\alpha$ denotes the $\alpha$-posterior:  \begin{align*}
		\Pi^\alpha(\rmd G|X^m) := \frac{\Pi(\rmd G) (\prod_{i=1}^{m} f_G(X_i))^{\alpha} }{\int \Pi(\rmd G') (\prod_{i=1}^{m} f_{G'}(X_i))^{\alpha}}. 
	\end{align*}
\end{lmm}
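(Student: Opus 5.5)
We prove \prettyref{lemma:length-generalization} by writing the posterior in the hierarchical model \eqref{eq:hierarchical} explicitly after integrating out $\theta^n$, and then reading off the dependence on the data through the empirical measure.

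\textbf{Step 1: express the posterior mean through $\Pi_{G|X^n}$.} Conditionally on $G$, the pairs $(\theta_j,X_j)$ are i.i.d., so $\bE[\theta_i\mid G,X^n]=\theta_G(X_i)$. Marginalizing $\theta^n$, the posterior of $G$ is
\begin{align*}
\Pi(\rmd G\mid X^n)=\frac{\Pi(\rmd G)\prod_{j=1}^n f_G(X_j)}{\int\Pi(\rmd G')\prod_{j=1}^n f_{G'}(X_j)}.
\end{align*}
This is the computation behind \prettyref{lemma:posterior_mean_training}. By the tower property,
\begin{align*}
\bE_\Pi[\theta_i\mid X^n]=\frac{\int\Pi(\rmd G)\prod_j f_G(X_j)\,\theta_G(X_i)}{\int\Pi(\rmd G)\prod_j f_G(X_j)}.
\end{align*}

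\textbf{Step 2: rewrite in terms of $\mu_n$.} We have $\prod_{j=1}^n f_G(X_j)=\exp\bigl(n\int\log f_G\,\rmd\mu_n\bigr)$, which gives the stated formula for $f_{\Pi,n}(X_i,\mu_n)$. Permutation equivariance of $\theta_\Pi^n$ is then immediate.

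One technicality concerns $f_G(x')=0$, which happens only for $G=\delta_0$ and $x'\ge1$. We interpret $\exp(-\infty)=0$, and $\theta_G(x)$ may be undefined only where this weight vanishes, so the ratio is well defined. The denominator is positive whenever the data have positive marginal probability. Under the convention $0\cdot\infty=0$, this is the only place needing care, and I expect it to be the main (minor) obstacle rather than anything substantive.

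\textbf{Step 3: evaluate at $\mu_m$.} Plugging $\mu_m$ into the formula gives
\begin{align*}
\exp\Bigl(n\int\log f_G\,\rmd\mu_m\Bigr)=\Bigl(\prod_{j=1}^m f_G(X_j)\Bigr)^{n/m},
\end{align*}
which is exactly the $\alpha$-posterior weight with $\alpha=n/m$. Hence $f_{\Pi,n}(X_i,\mu_m)=\bE_{G\sim\Pi^\alpha_{G|X^m}}[\theta_G(X_i)]$.

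For the leave-one-out form, factor the weight as $(\prod_{j\ne i}f_G(X_j))^\alpha\cdot f_G(X_i)^\alpha$. Then normalize the first factor into $\Pi^\alpha_{G|X_{\backslash i}}$ in both numerator and denominator. The normalizing constants cancel, giving the displayed ratio
\begin{align*}
\frac{\bE_{\Pi^\alpha_{G|X_{\backslash i}}}\bigl[\theta_G(X_i)f_G(X_i)^\alpha\bigr]}{\bE_{\Pi^\alpha_{G|X_{\backslash i}}}\bigl[f_G(X_i)^\alpha\bigr]}.
\end{align*}
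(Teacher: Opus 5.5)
Your proposal is correct and follows essentially the same route as the paper: identify $\bE_\Pi[\theta_i|X^n]$ with $\bE_{G\sim\Pi_{G|X^n}}[\theta_G(X_i)]$ as in \prettyref{lemma:posterior_mean_training}, rewrite the likelihood as $\exp(n\int\log f_G\,\rmd\mu_n)$, substitute $\mu_m$ to obtain the $\alpha$-posterior with $\alpha=n/m$, and factor out $f_G(X_i)^\alpha$ to get the leave-one-out form. Your remark on the degenerate case $G=\delta_0$ (where $f_G(X_i)=0$) is a harmless, slightly more careful addition that the paper leaves implicit.
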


For the second statement, we need the following generalization of \prettyref{lemma:hellinger-to-regret-1}. 

\begin{lmm}\label{lemma:hellinger-to-regret-2}
	Let $G_0$ be a fixed prior supported on $[0,A]$, and $G$ be a random prior supported on $[0,A]$ almost surely. Fix any $\varepsilon\in (0,e^{-e})$, and $\alpha\in (0,1]$. Then for an absolute constant $C=C(A)>0$, 
	\begin{align*}
		\bE_{X\sim f_{G_0}}\bqth{ \bpth{ \frac{\bE[\theta_G(X)f_G(X)^\alpha]}{\bE[f_G(X)^\alpha]} - \theta_{G_0}(X)}^2 } \le C\bpth{\frac{\log (1/\varepsilon)}{\log\log (1/\varepsilon)} \bE\qth{H^2(f_G, f_{G_0})} + \varepsilon } . 
	\end{align*}
\end{lmm}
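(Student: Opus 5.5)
The plan is to reduce \prettyref{lemma:hellinger-to-regret-2} to \prettyref{lemma:hellinger-to-regret-1}. The natural step is to find a single deterministic prior $\widetilde{G}$ whose posterior mean $\theta_{\widetilde{G}}(x)$ equals the ratio estimator $\widehat{\theta}(x):=\bE[\theta_G(x)f_G(x)^\alpha]/\bE[f_G(x)^\alpha]$. However, the tilt $f_G(x)^\alpha$ depends on $x$, so no fixed mixing measure works for all $x$ at once. Instead I will reprove the regret--Hellinger inequality directly, following the structure of \cite[Lemma 4]{jana2025optimal}.

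\textbf{Pointwise comparison.} Write $\theta_G(x)=(x+1)f_G(x+1)/f_G(x)$ and $\theta_{G_0}(x)=(x+1)f_{G_0}(x+1)/f_{G_0}(x)$. Then
\[
\widehat{\theta}(x)-\theta_{G_0}(x)=\frac{\bE[(\theta_G(x)-\theta_{G_0}(x))f_G(x)^\alpha]}{\bE[f_G(x)^\alpha]}.
\]
Using $\theta_G,\theta_{G_0}\in[0,A]$ and a Jensen/Cauchy--Schwarz step, I will bound $(\widehat{\theta}(x)-\theta_{G_0}(x))^2$ by a weighted average of $(\theta_G(x)-\theta_{G_0}(x))^2$ with weights $f_G(x)^\alpha/\bE[f_G(x)^\alpha]$. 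I will then split $\naturals$ into a bulk $\{x\le K\}$ with $K\asymp \frac{\log(1/\varepsilon)}{\log\log(1/\varepsilon)}$ and a tail. On the tail, boundedness gives the contribution $A^2 f_{G_0}(\{x>K\})\lesssim \varepsilon$.

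\textbf{Bulk.} Follow Jana--Polyanskiy--Wu: decompose
\[
\theta_G(x)-\theta_{G_0}(x)=(x+1)\frac{f_G(x+1)-f_{G_0}(x+1)}{f_G(x)}+\theta_{G_0}(x)\frac{f_{G_0}(x)-f_G(x)}{f_G(x)}.
\]
Then add and subtract square roots to obtain differences $\sqrt{f_G}-\sqrt{f_{G_0}}$, and compare $f_G(x)$ with $f_G(x)+f_{G_0}(x)$, using the fact that $f_{G_0}(x+1)/f_{G_0}(x)\le A/(x+1)$. The key observation for the tilted average is this: multiplying by $f_{G_0}(x)$ and averaging with weights proportional to $f_G(x)^\alpha$ with $\alpha\le 1$, the denominators $f_G(x)$ that cause trouble are only made less dangerous, since the weights down-weight realizations $G$ with small $f_G(x)$. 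Concretely, I aim for
\[
f_{G_0}(x)\frac{\bE[(\theta_G-\theta_{G_0})^2 f_G^\alpha]}{\bE[f_G^\alpha]}\lesssim (x+1)\,\bE\big[(\sqrt{f_G(x)}-\sqrt{f_G(x)}\,)^2\ \text{terms at } x, x+1\big]
\]
up to the usual regularization $\max(f_G(x),\varepsilon\text{-level})$. Summing over $x\le K$ then gives $K\cdot\bE[H^2(f_G,f_{G_0})]$.

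\textbf{Main obstacle.} The hard part is controlling the denominator $\bE[f_G(x)^\alpha]$ relative to $f_{G_0}(x)$ when $f_G(x)\ll f_{G_0}(x)$ for most draws of $G$. I will split on the event $\{f_G(x)\ge f_{G_0}(x)/4\}$. On this event, the standard argument applies with $f_G(x)\gtrsim f_{G_0}(x)$. On the complement, $(\sqrt{f_G(x)}-\sqrt{f_{G_0}(x)})^2\gtrsim f_{G_0}(x)$, so the mass there is charged to the Hellinger term using the crude bound $A^2$ on the squared error. The worry is that the tilt weight could concentrate on the bad event. I will handle this by noting that weights on the bad event are at most $(f_{G_0}(x)/4)^\alpha$, while if the good event has probability at least $1/2$ the denominator is at least $\tfrac12(f_{G_0}(x)/4)^\alpha$. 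In the other case, the bad event has probability at least $1/2$, and $\bE[H^2]$ already pays $\gtrsim f_{G_0}(x)$, which absorbs the full $A^2 f_{G_0}(x)$ contribution.
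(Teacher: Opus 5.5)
Your proof is correct. Its outer skeleton matches the paper's: Jensen to pass to the $f_G(x)^\alpha$-tilted average, then a split of $\naturals$ according to whether the normalizer $\bE[f_G(x)^\alpha]$ is comparable to $f_{G_0}(x)^\alpha$. The paper uses $\calE=\{x:\bE[f_G(x)^\alpha]\ge \frac12 f_{G_0}(x)^\alpha\}$ and Markov's inequality to show that on $\calE^c$ at least a third of the draws have $f_G(x)\le \frac34 f_{G_0}(x)$. That is your ``bad event has probability at least $1/2$'' case, and in both versions $\bE[H^2]$ pays for $A^2 f_{G_0}(x)$.

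The genuine difference is on the good set. You redo the Jana--Polyanskiy--Wu computation with the tilted weight. The paper instead applies Young's inequality $f_G^\alpha f_{G_0}^{1-\alpha}\le \alpha f_G+(1-\alpha)f_{G_0}$ to $\bE\sum_x(\theta_G-\theta_{G_0})^2 f_G(x)^\alpha f_{G_0}(x)^{1-\alpha}$. This gives two ordinary regrets, one under $f_{G_0}$ and one under $f_G$. Each is handled by Lemma~\ref{lemma:hellinger-to-regret-1}, the second with the roles of $G$ and $G_0$ swapped, which is legitimate since both live on $[0,A]$. So your opening claim that no reduction to Lemma~\ref{lemma:hellinger-to-regret-1} exists is too pessimistic. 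No single prior reproduces the tilted estimator, but Young's inequality still reduces the problem to the black box. That route is shorter and modular. Yours is self-contained and only truncates the tail under $f_{G_0}$, whereas the paper's also uses tail control of $f_G$ (harmless here).

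If you keep your route, make one step explicit that your heuristic ``the tilt only down-weights small $f_G(x)$'' hides. The tilt also up-weights draws with $f_G(x)\gg f_{G_0}(x)$. This is harmless precisely because your decomposition has $f_G(x)$ in the denominator. Write $a=f_G(x)$, $b=f_{G_0}(x)$, $a'=f_G(x+1)$, $b'=f_{G_0}(x+1)$. Using $(x+1)(a'+b')\le A(a+b)$, you get $(\theta_G-\theta_{G_0})^2\le 4(a+b)a^{-2}\,[A(x+1)(\sqrt{a'}-\sqrt{b'})^2+A^2(\sqrt a-\sqrt b)^2]$. For $a\ge b/4$, the weight satisfies $a^\alpha b^{1-\alpha}\le\max(a,b)$ and $(a+b)\max(a,b)/a^2\le 20$, so the per-draw bound holds uniformly.

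Two small fixes: correct the typo in your target display (the second square root should be $\sqrt{f_{G_0}(x)}$), and drop the ``$\varepsilon$-level regularization'', which is not needed.
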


To prove \Cref{thm:length-generalization}, we first integrate the tails in \prettyref{lemma:posterior_contraction_alpha} to get
\begin{align*}
	\bE_{X_n}\bE_{G\sim \Pi_{G|X^{m-1}}^\alpha} \qth{H^2(f_G, f_{G_0})} &\le C\pth{\varepsilon_m^2 + r_{m,\alpha}^2 + \frac{1}{\alpha m}} + \int_{C(\varepsilon_m^2 + r_{m,\alpha}^2 + \frac{1}{\alpha m})}^2 \pth{m^{-1}+e^{-c\alpha mt}} \rmd t\\
	&\lesssim \varepsilon_m^2 + r_{m,\alpha}^2 + \frac{1}{\alpha m}. 
\end{align*}
Next we apply \prettyref{lemma:length-generalization} and \ref{lemma:hellinger-to-regret-2} to $m=\ntest$, the random prior $G\sim \Pi_{G|X^{\ntest-1}}^\alpha$ with $\alpha=\frac{n}{\ntest}$, an independent $X=X_n\sim f_{G_0}$, and a small parameter $\varepsilon = n_{\mathsf{test}}^{-1}$: 
\begin{align*}
	\sup_{G_0\in \calP([0,A])} \reg(f_{\Pi,n}^{\ntest}; G_0) \le \frac{C\log\ntest}{\log\log\ntest}\pth{\varepsilon_{\ntest}^2 + r_{\ntest,\alpha}^2 + \frac{1}{n}}. 
\end{align*}
Finally, \prettyref{lemma:metric_entropy} gives $\varepsilon_{\ntest}^2 = O(\frac{\log \ntest}{\ntest \log\log \ntest})$, and the inequality $E(n_{\mathrm{test}}^{-2}, r)\vee 1\le nr^2$ implies that $r^2 = \Omega(r_{\ntest,\alpha}^2 + n^{-1})$. This proves \Cref{thm:length-generalization}. 

\subsubsection{Proof of \prettyref{lemma:length-generalization}}
For the first statement, note that \prettyref{lemma:posterior_mean_training} yields $\widehat{\theta}_i(X^n) = \bE_{G\sim \Pi_{G|X^n}}[\theta_G(X_i)]$, with posterior
\begin{align*}
	\Pi(\rmd G|X^n) = \frac{\Pi(\rmd G)\prod_{j=1}^n f_G(X_j)}{\int \Pi(\rmd G')\prod_{j=1}^n f_{G'}(X_j)} = \frac{\Pi(\rmd G)\exp(n\int \log f_G(x) \mu_n(\rmd x))}{\int \Pi(\rmd G')\exp(n\int \log f_{G'}(x) \mu_n(\rmd x))}. 
\end{align*}
Therefore, we conclude that
\begin{align*}
	\widehat{\theta}_i(X^n) = \int \Pi(\rmd G|X^n) \theta_G(X_i) = \frac{\int \Pi(\rmd G)\exp(n\int \log f_G(x) \mu_n(\rmd x))\cdot \theta_G(X_i)}{\int \Pi(\rmd G)\exp(n\int \log f_G(x) \mu_n(\rmd x))} =: f_{\Pi,n}(X_i, \mu_n). 
\end{align*}
This completes the first claim. For the second claim, note that on a new sequence $X^m$,
\begin{align*}
	f_{\Pi,n}(X_i, \mu_m) &= \frac{\int \Pi(\rmd G)\exp(n\int \log f_G(x) \mu_m(\rmd x))\cdot \theta_G(X_i)}{\int \Pi(\rmd G)\exp(n\int \log f_G(x) \mu_m(\rmd x))} \\
	&= \frac{\int \Pi(\rmd G) (\prod_{j=1}^m f_G(X_j))^\alpha \cdot \theta_G(X_i)}{\int \Pi(\rmd G)(\prod_{j=1}^m f_G(X_j))^\alpha} =: \bE_{G\sim \Pi_{G|X^m}^\alpha }[\theta_G(X_i)], 
\end{align*}
where $\alpha=\frac{n}{m}$, and we use the notation $\Pi^\alpha(\rmd G|X^m)$ to denote the $\alpha$-posterior: 
\begin{align*}
	\Pi^\alpha(\rmd G|X^m) := \frac{\Pi(\rmd G) (\prod_{i=1}^{m} f_G(X_i))^{\alpha} }{\int \Pi(\rmd G') (\prod_{i=1}^{m} f_{G'}(X_i))^{\alpha}}. 
\end{align*}
Similar to the proof of \prettyref{lemma:posterior_mean_training}, we also have
\begin{align*}
	f_{\Pi,n}(X_i, \mu_m) = \bE_{G\sim \Pi_{G|X^m}^\alpha }[\theta_G(X_i)] = \frac{\bE_{G\sim \Pi_{G|X_{\backslash i}}^\alpha }[\theta_G(X_i)f_G(X_i)^\alpha]}{\bE_{G\sim \Pi_{G|X_{\backslash i}}^\alpha }[f_G(X_i)^\alpha]}. 
\end{align*}

\subsubsection{Proof of Lemma \ref{lemma:hellinger-to-regret-2}}
Let $\calE := \sth{x\in \naturals: \bE[f_G(x)^\alpha]\ge \frac{1}{2}f_{G_0}(x)^{\alpha}}$. By Markov's inequality, for $x\in \calE^c$, 
\begin{align*}
	\bP\pth{f_G(x)\le \frac{3}{4}f_{G_0}(x)} \ge 1 - \frac{\bE[f_G(x)^\alpha]}{\bpth{\frac{3}{4}f_{G_0}(x)}^{\alpha}} \ge 1 - \frac{1}{2}\pth{\frac{4}{3}}^{\alpha} \ge \frac{1}{3}. 
\end{align*}
Therefore, 
\begin{align*}
	\bE\qth{H^2(f_{G_0},f_G)} &= \sum_{x\in \naturals} \bE\pth{\sqrt{f_{G_0}(x)} - \sqrt{f_{G}(x)}}^2 \ge \sum_{x\in \calE^c} \bE\pth{\sqrt{f_{G_0}(x)} - \sqrt{f_{G}(x)}}^2 \\
	&\ge \sum_{x\in \calE^c} \frac{1}{3} \bpth{\sqrt{f_{G_0}(x)} - \sqrt{\frac{3}{4} f_{G_0}(x)}}^2 \ge c_0 \sum_{x\in \calE^c} f_{G_0}(x) = c_0 f_{G_0}(\calE^c),
\end{align*} 
for some universal constant $c_0>0$. Consequently, we have arrived at
\begin{align*}
	f_{G_0}(\calE^c) \le C_0 \cdot \bE\qth{H^2(f_{G_0},f_G)}. 
\end{align*}
Therefore, as $\theta_G(X), \theta_{G_0}(X)\in [0,A]$ almost surely, 
\begin{align}\label{eq:Ec}
	\bE_{X\sim f_{G_0}}\bqth{ \bpth{ \frac{\bE[\theta_G(X)f_G(X)^\alpha]}{\bE[f_G(X)^\alpha]} - \theta_{G_0}(X)}^2 \indc{X\in \calE^c} } \le A^2 f_{G_0}(\calE^c) \le C_1 \cdot \bE\qth{H^2(f_{G_0},f_G)}. 
\end{align}

Next, for $X\in \calE$, we have
\begin{align*}
	&\bE_{X\sim f_{G_0}}\bqth{ \bpth{ \frac{\bE[\theta_G(X)f_G(X)^\alpha]}{\bE[f_G(X)^\alpha]} - \theta_{G_0}(X)}^2 \indc{X\in \calE} } \\
	&= \bE_{X\sim f_{G_0}}\bqth{ \bpth{ \frac{\bE[(\theta_G(X)-\theta_{G_0}(X))f_G(X)^\alpha]}{\bE[f_G(X)^\alpha]}}^2 \indc{X\in \calE} } \\
	&\stepa{\le} \bE_{X\sim f_{G_0}}\bqth{ \frac{\bE[(\theta_G(X)-\theta_{G_0}(X))^2 f_G(X)^\alpha]}{\bE[f_G(X)^\alpha]} \indc{X\in \calE} } \\
	&\stepb{\le} 2\cdot \bE_{X\sim f_{G_0}}\bqth{ \frac{\bE[(\theta_G(X)-\theta_{G_0}(X))^2 f_G(X)^\alpha]}{f_{G_0}(X)^{\alpha}} } \\
	&= 2\cdot \bE\bqth{\sum_{x\in \naturals} (\theta_G(x)-\theta_{G_0}(x))^2 f_G(x)^\alpha f_{G_0}(x)^{1-\alpha} } \\
	&\stepc{\le} 2\alpha \cdot \bE\bqth{ \bE_{X\sim f_G} \bpth{ (\theta_G(X)-\theta_{G_0}(X))^2 } } + 2(1-\alpha)\cdot \bE\bqth{ \bE_{X\sim f_{G_0}} \bpth{ (\theta_G(X)-\theta_{G_0}(X))^2 } }  \\
	&\stepd{\le} C_2\bpth{\frac{\log (1/\varepsilon)}{\log\log (1/\varepsilon)} \bE\qth{H^2(f_G, f_{G_0})} + \varepsilon }. 
\end{align*}
Here (a) is due to Cauchy--Schwarz, (b) uses the definition of $\calE$, (c) follows from Young's inequality $x^\alpha y^{1-\alpha}\le \alpha x+(1-\alpha)y$ for $\alpha\in (0,1]$ and $x,y\ge 0$, and (d) applies \prettyref{lemma:hellinger-to-regret-1} twice to $(G, G_0)$ and $(G_0, G)$. Combining this inequality with \Cref{eq:Ec} completes the proof of \prettyref{lemma:hellinger-to-regret-2}. 

\section{Deferred Proofs in \Cref{sec:posterior_contraction}}

\subsection{Proof of Lemma \ref{lemma:hellinger-to-regret-1}}
Fix $\varepsilon\in (0,e^{-e})$. The following inequality was shown in \cite[Lemma 4]{jana2025optimal}: for every integer $K\ge 0$, 
\begin{align*}
	\bE_{X\sim f_{G_0}}\qth{ \pth{\theta_G(X) - \theta_{G_0}(X)}^2 } \le C\pth{ KH^2(f_G,f_{G_0}) + \varepsilon_K(G_0) }, 
\end{align*}
with $\varepsilon_K(G_0) := \sum_{y\ge K}f_{G_0}(y)$. Since $\mathrm{supp}(G_0)\subseteq [0,A]$, standard Poisson tails yield $\varepsilon_K(G_0) \le \varepsilon$ for $K=O_A(\frac{\log(1/\varepsilon)}{\log\log(1/\varepsilon)})$. Plugging this choice of $K$ gives the claimed bound. 

\subsection{Proof of Lemma \ref{lemma:metric_entropy}}
Fix $\varepsilon\in (0,e^{-e})$, any $\rho\ge \varepsilon$, and $G_0\in \calP([0,A])$. Since the Hellinger distance is a metric, it suffices to construct an \emph{improper} covering of $\sth{f_G: H(f_G,f_{G_0})\le 2\rho}$ using Hellinger balls of radius $\rho/2$. Let $L = \lceil C\frac{\log(1/\varepsilon)}{\log\log(1/\varepsilon)} \rceil$; by Poisson tail bounds, there is a large absolute constant $C = C(A)>0$ such that
\begin{align*}
	\sup_{G\in \calP([0,A])} \sum_{x>L} f_G(x) \le \frac{\varepsilon^2}{16} \le \frac{\rho^2}{16}. 
\end{align*}
Let $H_{\le L}(P,Q) := (\sum_{i=0}^L (\sqrt{P(i)}-\sqrt{Q(i)})^2)^{1/2}$ be the truncated Hellinger distance, we have
\begin{align*}
	H^2(f_{G_1},f_{G_2}) \le H_{\le L}^2(f_{G_1},f_{G_2}) + f_{G_1}(X>L) + f_{G_2}(X>L). 
\end{align*}
Therefore, for $G_1,G_2\in \calP([0,A])$, the inequality $H_{\le L}(f_{G_1},f_{G_2})\le \frac{\rho}{4}$ implies $H(f_{G_1},f_{G_2})\le \frac{\rho}{2}$. 
As a result, by truncating all Poisson mixtures onto a smaller support $\{0,1,\dots,L\}$, it suffices to restrict to the space $\calS$ of discrete sub-distributions 
\begin{align*}
	\calS = \bsth{ P= (p_0,\dots,p_L)\in \bR_+^{L+1}: \sum_{i=0}^L p_i\le 1 }, 
\end{align*}
and cover the set $\{P\in \calS: H(P,P_0)\le 2\rho\}$ using Hellinger balls of radius $\rho/4$. This covering problem is easily solved via a volume argument: parametrizing $q_i=\sqrt{p_i}$, the set $\calS$ becomes the intersection of the unit $\ell_2$ ball and the non-negative orthant for $Q=(q_0,\dots,q_L)\in \bR^{L+1}$, and the Hellinger distance becomes the $\ell_2$ metric: $H(P_1,P_2)=\|Q_1-Q_2\|_2$. Therefore, the covering problem is equivalent to covering an $\ell_2$ ball of radius $2\rho$ by smaller $\ell_2$ balls of radius $\rho/4$ in $\bR^{L+1}$; a simple volume argument shows that the log covering number is $\log N \le C(L+1)$. Therefore,
\begin{align*}
	\log \Nloc(\varepsilon, \calP, H) = O(L) = O\bpth{\frac{\log(1/\varepsilon)}{\log\log(1/\varepsilon)}}. 
\end{align*}

\subsection{Proof of Lemma \ref{lemma:posterior_mean_training}}
By repeated applications of Bayes' rule, the conditional density of $\theta_i$ given $X^n$ is
\begin{align*}
	\frac{\bE_{G\sim \Pi}\bqth{G(\theta_i)\Poi(X_i;\theta_i)\prod_{j\neq i} f_G(X_j)} }{\bE_{G\sim \Pi}\bqth{\prod_{j=1}^n f_G(X_j)}} = \bE_{G\sim \Pi_{G|X^n}}\bqth{ \frac{G(\theta_i)\Poi(X_i;\theta_i)}{f_G(X_i)} } = \bE_{G\sim \Pi_{G|X^n}}\bqth{ \bP_G\pth{\theta_i | X_i} }. 
\end{align*}
Here $\bP_G$ denotes the joint distribution $\theta_i\sim G$ and $X_i|\theta_i\sim \Poi(\theta_i)$. Therefore,
\begin{align*}
	\bE_\Pi[\theta_i | X^n] = \bE_{G\sim \Pi_{G|X^n}}\bqth{ \int \theta_i \bP_G\pth{\theta_i | X_i} \rmd \theta_i} = \bE_{G\sim \Pi_{G|X^n}}\bqth{ \theta_G(X_i) }, 
\end{align*}
by definition of $\theta_G$ in \Cref{eq:theta_G}. This establishes the first identity. For the second identity, we further use the Bayes' rule to write
\begin{align*}
	\bE_{G\sim \Pi_{G|X^n}}\bqth{ \theta_G(X_i) } = \frac{\bE_{G\sim \Pi_{G|X_{\backslash i}}}[\theta_G(X_i)f_G(X_i)]}{\bE_{G\sim \Pi_{G|X_{\backslash i}}}[f_G(X_i)]} \overset{\Cref{eq:theta_G}}{=} (X_i+1)\cdot \frac{\bE_{G\sim \Pi_{G|X_{\backslash i}}}[f_G(X_i+1)]}{\bE_{G\sim \Pi_{G|X_{\backslash i}}}[f_G(X_i)]}. 
\end{align*}
Since $G_i = \bE_{G\sim \Pi_{G|X_{\backslash i}}}[G]$, we have $f_{G_i}(x) = \bE_{G\sim \Pi_{G|X_{\backslash i}}}[f_G(x)]$ for every $x\in \naturals$. Therefore, we can continue the above expression and obtain
\begin{align*}
	\bE_{G\sim \Pi_{G|X^n}}\bqth{ \theta_G(X_i) } = (X_i+1)\frac{f_{G_i}(X_i+1)}{f_{G_i}(X_i)} \overset{\Cref{eq:theta_G}}{=} \theta_{G_i}(X_i). 
\end{align*} 

\subsection{Proof of \prettyref{lemma:posterior_contraction}}

In this section we prove \prettyref{lemma:posterior_contraction}. We essentially apply the same classical posterior contraction arguments in \cite{ghosal2000convergence}, with a few adaptations to obtain a high-probability statement. Let $G$ be given in \prettyref{definition:PoP} such that $\TV(f_G,f_{G_0})\le n^{-2}$. We claim that it suffices to consider the case $G_0 = G$. In fact, once we establish \prettyref{lemma:posterior_contraction} with $G$ in place of $G_0$, we can move to $G_0$ by noting that $\TV(f_{G_0}^{\otimes (n-1)}, f_G^{\otimes (n-1)})\le (n-1)\TV(f_{G_0}, f_G) \le n^{-1}$ and
\begin{align*}
	H^2(f_{G_0}, \Pi_{X_n|X^{n-1}}) &\lesssim H^2(f_{G}, \Pi_{X_n|X^{n-1}}) + H^2(f_G, f_{G_0}) \\
	&\le H^2(f_{G}, \Pi_{X_n|X^{n-1}}) + 2\TV(f_G, f_{G_0}) \\
	&\le H^2(f_{G}, \Pi_{X_n|X^{n-1}}) + O\pth{n^{-2}}. 
\end{align*}
Hence, this step only replaces $H^2$ by $O(H^2+n^{-2})$ and amplifies the error probability by an additive factor of $n^{-1}$; in the sequel we assume that $G = G_0$ and establish the exponential error probability. 

We apply the arguments in \cite[Theorem 8.1]{ghosal2000convergence}. Let $\Nloc(\varepsilon):=\Nloc(\varepsilon, \calP, H)$ be the local covering number of $\calP$, and $\varepsilon_n>0$ satisfy $\log\Nloc(\varepsilon_n)\le n\varepsilon_n^2$. By \cite[Theorem 7.1]{ghosal2000convergence} (or the classical results in \cite{birge1983approximation,le1986asymptotic}), for $\varepsilon\ge C\varepsilon_n$ with a large absolute constant $C\ge 2$, there exists a test $\phi_n = \phi_n(X^{n-1})\in \{0,1\}$ such that
\begin{align}
	\bE_{f_{G_0}^{\otimes (n-1)}}[\phi_n] &\le \Nloc(\varepsilon_n) \exp\pth{-cn\varepsilon^2} ,\label{eq:type_I_err} \\
	\sup_{f_G\in \calP: H(f_G,f_{G_0})\ge \varepsilon}\bE_{f_{G}^{\otimes (n-1)}}[1-\phi_n] &\le \exp\pth{-cn\varepsilon^2}, \label{eq:type_II_err}
\end{align}
where $c>0$ is a universal constant. By \eqref{eq:type_I_err}, 
\begin{align}\label{eq:component_I}
	f_{G_0}^{\otimes (n-1)}\pth{\sth{\phi_n = 1}} \le \Nloc(\varepsilon_n) \exp\pth{-cn\varepsilon^2}\le \exp\pth{-\frac{c}{2}n\varepsilon^2}, 
\end{align}
by $\log\Nloc(\varepsilon_n)\le n\varepsilon_n^2$ and $\varepsilon\ge C\varepsilon_n$ for large enough $C>0$. On the other hand, let $\Pi_{G|X^{n-1}}$ be the posterior distribution of $G$ given $X^{n-1}$, then
\begin{align*}
	\Pi_{G|X^{n-1}}\pth{H^2(f_{G_0}, f_G) > \varepsilon^2 } 
	&= \frac{\int_{G: H(f_{G_0}, f_G) > \varepsilon } \Pi(\rmd G)\prod_{i=1}^{n-1} f_G(X_i) }{\int \Pi(\rmd G) \prod_{i=1}^{n-1} f_G(X_i)} \\
	&= \frac{\int_{G: H(f_{G_0}, f_G) > \varepsilon } \Pi(\rmd G)\prod_{i=1}^{n-1} \frac{f_G(X_i)}{f_{G_0}(X_i)} }{\int \Pi(\rmd G) \prod_{i=1}^{n-1} \frac{f_G(X_i)}{f_{G_0}(X_i)}}
\end{align*}
by the Bayes' rule. We bound the numerator and denominator separately. 
\begin{enumerate}
	\item Numerator: Adding the event $\phi_n = 0$ and taking expectation with respect to $X^{n-1}\sim f_{G_0}^{\otimes (n-1)}$ yields
	\begin{align*}
		&\bE_{f_{G_0}^{\otimes (n-1)}}\qth{(1-\phi_n)\int_{G: H(f_{G_0}, f_G) > \varepsilon } \Pi(\rmd G)\prod_{i=1}^{n-1} \frac{f_G(X_i)}{f_{G_0}(X_i)}}  \\
		&= \int_{G: H(f_{G_0}, f_G) > \varepsilon } \Pi(\rmd G) \cdot \bE_{f_G^{\otimes (n-1)}}[1-\phi_n] \overset{\prettyref{eq:type_II_err}}{\le} e^{-cn\varepsilon^2}. 
	\end{align*}
	Therefore, by Markov's inequality, 
	\begin{align}\label{eq:numerator_upper_bound}
		f_{G_0}^{\otimes (n-1)}\pth{ \sth{\int_{G: H(f_{G_0}, f_G) > \varepsilon } \Pi(\rmd G)\prod_{i=1}^{n-1} \frac{f_G(X_i)}{f_{G_0}(X_i)} > e^{-\frac{c}{2}n\varepsilon^2}} \cap \sth{\phi_n=0} } \le \exp\pth{-\frac{c}{2}n\varepsilon^2}. 
	\end{align}
	\item Denominator: Let $r_n>0$ satisfy $E(n^{-2},r_n) \le nr_n^2$, and $U = \sth{G': \chi^2(f_{G_0}\| f_{G'}) \le r_n^2}$ be a neighborhood of $G_0$. Since we have assumed $G=G_0$ without loss of generality, assumption \eqref{eq:prior_mass} ensures $\Pi(U)\ge e^{-E_n}$, with $E_n:=E(n^{-2},r_n)$. By Jensen's inequality,
	\begin{align*}
		\int \Pi(\rmd G) \prod_{i=1}^{n-1} \frac{f_G(X_i)}{f_{G_0}(X_i)} \ge \int_U \Pi(\rmd G) \prod_{i=1}^{n-1} \frac{f_G(X_i)}{f_{G_0}(X_i)} \ge \Pi(U)\exp\pth{ \int \Pi_{|U}(\rmd G) \sum_{i=1}^{n-1} \log \frac{f_G(X_i)}{f_{G_0}(X_i)} }, 
	\end{align*}
	where $\Pi_{|U}$ denotes the restriction of $\Pi$ to $U$. By \prettyref{lemma:chi-squared} below (which itself is a refinement of \cite[Lemma 8.3]{ghosal2000convergence}) with $P=f_G, Q=f_{G_0}$, the random variable
	\begin{align*}
		Y_i = \int \Pi_{|U}(\rmd G) \log \frac{f_G(X_i)}{f_{G_0}(X_i)}
	\end{align*}
	satisfies $\bE[e^{|Y_i|} - 1 - |Y_i|]\le \int \Pi_{|U}(\rmd G)\chi^2(f_{G_0}\|f_G) \le r_n^2$ by convexity of $x\mapsto e^{|x|}-1-|x|$ and definition of $U$. Therefore, by the discussion below Bernstein's inequality in \cite[Lemma 2.2.11]{van1996weak}, we have
	\begin{align*}
		f_{G_0}^{\otimes (n-1)} \pth{\sum_{i=1}^{n-1} (Y_i-\bE[Y_i]) \ge -\frac{c}{8}n\varepsilon^2} \ge 1 - \exp\pth{-\frac{c'(cn\varepsilon^2/8)^2}{nr_n^2 + cn\varepsilon^2/8}} \ge 1-\exp\pth{-c''n\varepsilon^2}, 
	\end{align*}
	for universal constants $c,c'>0$, under an additional assumption that $\varepsilon\ge Cr_n$. Since
	\begin{align*}
		\bE[Y_i] = -\int \Pi_{|U}(\rmd G)\KL(f_{G_0}\|f_G) \ge -\int \Pi_{|U}(\rmd G)\chi^2(f_{G_0}\|f_G) \ge -r_n^2, 
	\end{align*}
	we conclude that
	\begin{align*}
		f_{G_0}^{\otimes (n-1)} \pth{\int \Pi(\rmd G) \prod_{i=1}^{n-1} \frac{f_G(X_i)}{f_{G_0}(X_i)} \ge \exp\pth{-E_n-\frac{c}{8}n\varepsilon^2-nr_n^2}} \ge 1-\exp\pth{-c''n\varepsilon^2}. 
	\end{align*}
	Finally, since $E_n\le nr_n^2$, as long as $\varepsilon\ge Cr_n$ with a large enough constant $C>0$, this implies
	\begin{align}\label{eq:denom_lower_bound}
		f_{G_0}^{\otimes (n-1)} \pth{\int \Pi(\rmd G) \prod_{i=1}^{n-1} \frac{f_G(X_i)}{f_{G_0}(X_i)} \ge \exp\pth{-\frac{c}{4}n\varepsilon^2}} \ge 1-\exp\pth{-c''n\varepsilon^2}. 
	\end{align}
\end{enumerate}
By \eqref{eq:component_I}, \eqref{eq:numerator_upper_bound}, and \eqref{eq:denom_lower_bound}, we conclude that for every $\varepsilon^2\ge C(\varepsilon_n^2+r_n^2+\frac{1}{n})$, with probability at least $1-e^{-c_1n\varepsilon^2}$ over the randomness of $X^{n-1}\sim f_{G_0}^{\otimes (n-1)}$, it holds that
\begin{align*}
	\Pi_{G|X^{n-1}}\pth{H^2(f_{G_0}, f_G) > \varepsilon^2} \le e^{-c_2n\varepsilon^2}, \quad \text{with }c_2 = \frac{c}{4}. 
\end{align*}
This argument applies with $\varepsilon^2$ replaced by $j\varepsilon^2$ for every integer $j\ge 1$. Taking a union bound over $1\le j\le \lceil 2/\varepsilon^2\rceil$, we obtain that, with probability at least $
1-\sum_{j=1}^{\lceil 2/\varepsilon^2\rceil}e^{-c_1 n j\varepsilon^2}$,
the following bound holds simultaneously for all such $j$:
\[
\Pi_{G|X^{n-1}}\pth{H^2(f_{G_0},f_G)>j\varepsilon^2}
\le e^{-c_2 n j\varepsilon^2}.
\]
On this event, since $H^2\le 2$, the layer-cake formula gives
\begin{align*}
	\bE_{G\sim\Pi_{G|X^{n-1}}}\qth{H^2(f_{G_0},f_G)}
	&\le \varepsilon^2\sum_{j=0}^{\lceil 2/\varepsilon^2\rceil}
	\Pi_{G|X^{n-1}}\pth{H^2(f_{G_0},f_G)>j\varepsilon^2} \\
	&\le \varepsilon^2\bpth{1+\sum_{j\ge 1}e^{-c_2 n j\varepsilon^2}}
	\le C\left(\varepsilon^2+\frac1{n}\right)
	\le C'\varepsilon^2,
\end{align*}
where the last inequality uses $\varepsilon^2\ge \frac{C}{n}$. The exceptional probability is bounded by
\[
\sum_{j\ge 1}e^{-c_1 n j\varepsilon^2}
\le C e^{-c_1 n\varepsilon^2}
\le e^{-c_1' n\varepsilon^2},
\]
after adjusting constants. Finally, as $\Pi_{X_n|X^{n-1}} = \bE_{\Pi_{G|X^{n-1}}}[f_G]$, on the above event, the convexity of the squared Hellinger distance gives
\begin{align*}
	H^2(f_{G_0}, \Pi_{X_n|X^{n-1}}) \le \bE_{\Pi_{G|X^{n-1}}}\qth{H^2(f_{G_0}, f_G)} \le C'\varepsilon^2. 
\end{align*}
This is the claimed result. 

\begin{lmm}\label{lemma:chi-squared}
	For probability distributions $P$ and $Q$,
	\begin{align*}
		\bE_Q\bqth{ \exp\bpth{\Big|\log \frac{P}{Q}\Big|} - 1 - \Big|\log \frac{P}{Q}\Big| } \le \chi^2(Q\|P). 
	\end{align*}
\end{lmm}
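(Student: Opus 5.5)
Writing $R=\log(P/Q)$ and working pointwise, the plan is to prove the inequality $e^{|R|}-1-|R|\le (e^{-R}-1)^2$ for every real $R$. Taking $\bE_Q$ then gives the claim, because
\[
\bE_Q\bqth{(e^{-R}-1)^2}=\bE_Q\bqth{\pth{\tfrac{Q}{P}-1}^2}=\sum_x \frac{(Q(x)-P(x))^2}{P(x)}\cdot\frac{Q(x)}{P(x)} .
\]
This is not quite $\chi^2(Q\|P)=\sum_x (Q-P)^2/P$: there is an extra factor $Q/P$. So I would not use the crude bound directly. Instead I would aim for the sharper pointwise target
\[
e^{|R|}-1-|R| \le e^{R}\,(e^{-R}-1)^2 = \frac{P}{Q}\pth{\frac{Q}{P}-1}^2,
\]
whose $Q$-expectation is exactly $\sum_x Q\cdot\frac{P}{Q}\cdot\frac{(Q-P)^2}{P^2}=\sum_x\frac{(Q-P)^2}{P}=\chi^2(Q\|P)$. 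Points with $P(x)=0<Q(x)$ make $\chi^2(Q\|P)=\infty$, so they are harmless, and points with $Q(x)=0$ carry no $Q$-mass.

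To verify the pointwise target, I would set $t=e^{-R}=Q/P\in(0,\infty)$ and prove
\[
\phi(t):=e^{|\log t|}-1-|\log t|\le \frac{(t-1)^2}{t}.
\]
The argument splits into two symmetric cases.

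\emph{Case $t\ge1$.} The inequality reads $t-1-\log t\le (t-1)^2/t = t-2+1/t$, which is equivalent to $\log t\ge 1-1/t$. This is the standard bound $\log t\ge 1-1/t$.

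\emph{Case $t\le1$.} Put $s=1/t\ge1$. The inequality reads $s-1-\log s\le (1/s-1)^2 s = s-2+1/s$, which is again $\log s\ge 1-1/s$. So the same standard bound finishes this case.

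Both cases reduce to $\log u\ge 1-1/u$ for $u\ge 1$, so I don't expect any genuine obstacle. The only real care is choosing the weight $e^{R}$, i.e.\ pairing the $(Q/P-1)^2$ term with $P/Q$, so that the expectation lands on $\chi^2(Q\|P)$ rather than a weighted variant. A secondary point is the measure-theoretic handling of zero-mass points, which the convention $\chi^2=\infty$ covers.
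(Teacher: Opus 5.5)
Your proof is correct and is essentially the paper's argument. The paper also splits into the regions $\{P\ge Q\}$ and $\{P<Q\}$ and on each uses $\log u\ge 1-1/u$ to bound the integrand by $(Q-P)^2/P$, which is exactly $Q$ times your pointwise bound $\phi(t)\le (t-1)^2/t$ with $t=Q/P$. The ``crude'' bound $e^{|R|}-1-|R|\le (e^{-R}-1)^2$ you first proposed is actually false for large $R>0$ (e.g.\ $R=2$), so discarding it was necessary rather than merely convenient.
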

\begin{proof}
	Using $\log x\ge 1-\frac{1}{x}$, when $P\ge Q$, this integral is
	\begin{align*}
		\int_{P\ge Q} \bpth{ P - Q - Q\log \frac{P}{Q}} \le \int_{P\ge Q} \bpth{P-Q-Q\bpth{1-\frac{Q}{P}}} = \int_{P\ge Q} \frac{(Q-P)^2}{P}. 
	\end{align*}
	When $P<Q$, this integral becomes
	\begin{align*}
		\int_{P<Q} \bpth{\frac{Q^2}{P} - Q - Q\log\frac{Q}{P}} \le \int_{P<Q} \bpth{\frac{Q^2}{P} - Q - Q\bpth{1-\frac{P}{Q}}} = \int_{P<Q} \frac{(Q-P)^2}{P}. 
	\end{align*}
	Summing up gives the target upper bound $\chi^2(Q\|P)$.
\end{proof}

\subsection{Proof of \prettyref{lemma:posterior_contraction_alpha}}
The proof precisely mimics the arguments of \prettyref{lemma:posterior_contraction} and uses the same test construction $\phi_n$ satisfying \Cref{eq:type_I_err} and \Cref{eq:type_II_err}. The main difference starts from
\begin{align*}
	\Pi_{G|X^{n-1}}^\alpha \pth{H^2(f_{G_0}, f_G) > \varepsilon^2 } = \frac{\int_{G: H(f_{G_0}, f_G) > \varepsilon } \Pi(\rmd G)\bpth{\prod_{i=1}^{n-1} \frac{f_G(X_i)}{f_{G_0}(X_i)}}^\alpha }{\int \Pi(\rmd G) \bpth{\prod_{i=1}^{n-1} \frac{f_G(X_i)}{f_{G_0}(X_i)}}^\alpha }. 
\end{align*}
Again, we bound the numerator and denominator separately. 
\begin{enumerate}
	\item Numerator: Since $0\le \alpha\le 1$, by H\"older's inequality,
	\begin{align*}
		&\bE_{f_{G_0}^{\otimes (n-1)}}\qth{(1-\phi_n)\int_{G: H(f_{G_0}, f_G) > \varepsilon } \Pi(\rmd G)\bpth{\prod_{i=1}^{n-1} \frac{f_G(X_i)}{f_{G_0}(X_i)}}^\alpha}  \\
		&= \int_{G: H(f_{G_0}, f_G) > \varepsilon } \Pi(\rmd G) \cdot \int_{\phi_n=0} f_G^{\otimes (n-1)}(\rmd x^{n-1})^{\alpha}  f_{G_0}^{\otimes (n-1)}(\rmd x^{n-1})^{1-\alpha}  \\
		&\le \int_{G: H(f_{G_0}, f_G) > \varepsilon } \Pi(\rmd G) \cdot  f_G^{\otimes (n-1)}(\sth{\phi_n=0})^{\alpha}  f_{G_0}^{\otimes (n-1)}(\sth{\phi_n=0})^{1-\alpha} \overset{\prettyref{eq:type_II_err}}{\le} e^{-c\alpha n\varepsilon^2}. 
	\end{align*}
	Therefore, by Markov's inequality, 
	\begin{align}\label{eq:numerator_upper_bound_alpha}
		& f_{G_0}^{\otimes (n-1)}\pth{ \sth{\int_{G: H(f_{G_0}, f_G) > \varepsilon } \Pi(\rmd G)\bpth{\prod_{i=1}^{n-1} \frac{f_G(X_i)}{f_{G_0}(X_i)}}^{\alpha} > e^{-\frac{c}{2}\alpha n\varepsilon^2}} \cap \sth{\phi_n=0} } \nonumber\\
		&\le \exp\pth{-\frac{c}{2}\alpha n\varepsilon^2}. 
	\end{align}
	\item Denominator: Again, let $U = \sth{G': \chi^2(f_{G_0}\| f_{G'}) \le r_{n,\alpha}^2}$ be a neighborhood of $G_0$, and $\Pi_{|U}$ be the restriction of $\Pi$ to $U$. By Jensen's inequality,
	\begin{align*}
		\int \Pi(\rmd G) \bpth{\prod_{i=1}^{n-1} \frac{f_G(X_i)}{f_{G_0}(X_i)}}^\alpha &\ge \int_U \Pi(\rmd G) \bpth{\prod_{i=1}^{n-1} \frac{f_G(X_i)}{f_{G_0}(X_i)}}^\alpha \\
		&\ge \Pi(U)\exp\pth{ \alpha \int \Pi_{|U}(\rmd G) \sum_{i=1}^{n-1}  \log \frac{f_G(X_i)}{f_{G_0}(X_i)} }. 
	\end{align*}
	Using $\Pi(U)\ge e^{-E(n^{-2},r_{n,\alpha})}\ge e^{-\alpha nr_{n,\alpha}^2}$ and the same Bernstein concentration for the exponent, as long as $\varepsilon\ge Cr_{n,\alpha}$, we have
	\begin{align}\label{eq:denom_lower_bound_alpha}
		f_{G_0}^{\otimes (n-1)} \pth{\int \Pi(\rmd G) \bpth{\prod_{i=1}^{n-1} \frac{f_G(X_i)}{f_{G_0}(X_i)}}^\alpha \ge \exp\pth{-\frac{c}{4}\alpha n\varepsilon^2}} \ge 1-\exp\pth{-c''n\varepsilon^2}. 
	\end{align}
\end{enumerate}
By \Cref{eq:type_I_err}, \Cref{eq:numerator_upper_bound_alpha}, and \Cref{eq:denom_lower_bound_alpha}, the same arguments in the proof of \prettyref{lemma:posterior_contraction} lead to
\begin{align*}
	& f_{G_0}^{\otimes (n-1)}\pth{\sth{\Pi_{G|X^{n-1}}^\alpha\pth{H^2(f_{G_0}, f_G) > \varepsilon^2 } > e^{-c\alpha n\varepsilon^2/4}}} \\
	&\le f_{G_0}^{\otimes (n-1)}\pth{\sth{\Pi_{G|X^{n-1}}^\alpha\pth{H^2(f_{G_0}, f_G) > \varepsilon^2 } > e^{-c\alpha n\varepsilon^2/4}} \cap \sth{\phi_n=0} } + f_{G_0}^{\otimes (n-1)}\sth{\phi_n=1} \\
	&\le e^{-c_1\alpha n\varepsilon^2},
\end{align*}
as long as $\varepsilon^2 \ge C(\varepsilon_n^2+r_{n,\alpha}^2+\frac{1}{\alpha n})$. The same layer-cake argument in the proof of \prettyref{lemma:posterior_contraction} gives the target upper bound of $\bE_{G\sim \Pi_{G|X^{n-1}}^\alpha}[H^2(f_{G_0},f_G)]$. 

\section{Deferred proofs in \Cref{sec:discussion}}\label{app:deferred-proofs-disc}

\subsection{Proof of Lemma \ref{lemma:finite_M}}
We first derive the expression of $\widehat{\theta}^n$ in \Cref{eq:ERM-PI}. For each $X^n \in \naturals^n$, let $\mathcal{S}(X^n) = \{m: \exists \pi\in S^n, \pi\circ X^{n, (m)} = X^n\}$ be the collection of data batches $X^{n,(m)}$ which are permutations of $X^n$. In addition, for all $m\in \mathcal{S}(X^n)$, we use $\pi_m$ to record the $m$-th permutation, i.e., $\pi_m\circ X^{n, (m)}=X^n$. Note that $\calS(X^n)$ only depends on the \emph{type} (or the sorted version) of $X^n$, denoted by $T(X^n)$; in addition, for a permutation-equivariant estimator $\widehat{\theta}^n$, we only need to specify its output for each input type. Therefore, the ERM objective in \Cref{eq:ERM-PI} takes a separable form on types, and for each type $T$, $\widehat{\theta}(X^n)$ (with input type $T$) is the minimizer of
\begin{align*}
	\sum_{m\in \calS(T)} \| \theta^{n,(m)} - \widehat{\theta}(X^{n,(m)}) \|_2^2 &= \sum_{m\in \calS(T)} \| \theta^{n,(m)} - \pi_m^{-1}\circ \widehat{\theta}(X^n) \|_2^2 \\
	&= \sum_{m\in \calS(T)} \| \pi_m \circ \theta^{n,(m)} - \widehat{\theta}(X^n) \|_2^2. 
\end{align*} 
This gives that
\begin{align*}
	\widehat{\theta}^n(X^n) = \frac{1}{|\mathcal{S}(X^n)|}\sum_{m\in \mathcal{S}(X^n)} \pi_m\circ \theta^{n, (m)}
\end{align*}
whenever $\mathcal{S}(X^n)\neq\emptyset$;
when $\mathcal{S}(X^n)=\emptyset$, we set $\widehat{\theta}^n(X^n)$ to an arbitrary vector in $[0,A]^n$. 

To continue, we claim the following that connects the approximation error and regret bound uniform over all $G_0\in\mathcal{P}([0, A])$, which we will defer the proof to \Cref{app:approximate_erm_proof}. 

\begin{lmm}\label{lemma:approximate_ERM}
	Let $r_n$ be defined in \Cref{thm:general}. Let $\delta > 0$ and a set $\mathcal{G}\subseteq\naturals^n$ such that for all priors $G_0\in \mathcal{P}([0, A])$, 
	we have $f_{G_0}^{\otimes n}(\mathcal{G}^c)\le \delta$. 
	Suppose an estimator $\widehat{\theta}^n\in [0,A]^n$ is an approximate population risk minimizer on $\mathcal{G}$, satisfying
	\begin{align}\label{eq:approximate_ERM}
		\bE_{\Pi}\qth{ \frac{1}{n}\|\widehat{\theta}^n(X^n) - \bE_{\Pi}[\theta^n | X^n]\|_2^2 \indc{X^n\in \mathcal{G}}} \le \varepsilon^2. 
	\end{align}
	Then for an absolute constant $C=C(A)$, 
	\begin{align*}
		\sup_{G_0} \reg(\widehat{\theta}^n; G_0) \le \frac{C\log n}{n\log\log n}\bpth{\frac{\log n}{\log\log n} + nr_n^2} + C\pth{\varepsilon e^{nr_n^2} + \delta}. 
	\end{align*} 
\end{lmm}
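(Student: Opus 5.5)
The plan is to compare $\widehat{\theta}^n$ with the hierarchical Bayes estimator $\theta_\Pi^n=\bE_\Pi[\theta^n|X^n]$ under the true prior $G_0$ and then invoke \Cref{thm:general}. For fixed $G_0$, the triangle inequality together with $(a+b)^2\le 2a^2+2b^2$ gives
\begin{align*}
\reg(\widehat{\theta}^n;G_0)\le 2\reg(\theta_\Pi^n;G_0)+\frac{2}{n}\bE_{f_{G_0}^{\otimes n}}\qth{\|\widehat{\theta}^n-\theta_\Pi^n\|_2^2}.
\end{align*}
\Cref{thm:general} bounds the first term by $\frac{C\log n}{n\log\log n}(\frac{\log n}{\log\log n}+nr_n^2)$. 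I will split the second term on $\mathcal{G}$ and $\mathcal{G}^c$. On $\mathcal{G}^c$, both estimators lie in $[0,A]^n$, so that piece is at most $2A^2\delta$.

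The key step is the transfer of measure on $\mathcal{G}$. Assumption \eqref{eq:approximate_ERM} controls the error under the Bayes marginal $m_\Pi(x^n):=\bE_{G\sim\Pi}[\prod_i f_G(x_i)]$, but we need it under $f_{G_0}^{\otimes n}$. Write $D(x^n)=\frac1n\|\widehat{\theta}^n(x^n)-\theta_\Pi^n(x^n)\|_2^2\indc{x^n\in\mathcal{G}}$, which lies in $[0,A^2]$. I will use Cauchy--Schwarz:
\begin{align*}
\bE_{f_{G_0}^{\otimes n}}[D]=\sum_{x^n} D\,m_\Pi\cdot\frac{f_{G_0}^{\otimes n}}{m_\Pi}\le \pth{\bE_{m_\Pi}[D^2]}^{1/2}\pth{1+\chi^2(f_{G_0}^{\otimes n}\|m_\Pi)}^{1/2}.
\end{align*}
Since $D\le A^2$, we have $\bE_{m_\Pi}[D^2]\le A^2\varepsilon^2$, so the first factor is at most $A\varepsilon$.

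The remaining task is to bound $1+\chi^2(f_{G_0}^{\otimes n}\|m_\Pi)$ by $e^{O(nr_n^2)}$, and I expect this to be the main obstacle. First pass to the intermediate prior $G$ of \prettyref{definition:PoP}, with $\TV(f_{G_0},f_G)\le n^{-2}$, and restrict $m_\Pi$ to $U=\{G':\chi^2(f_G\|f_{G'})\le r_n^2\}$, which gives $m_\Pi\ge \Pi(U)\,\bE_{\Pi_{|U}}[f_{G'}^{\otimes n}]$. Convexity of $\chi^2$ in its second argument and tensorization then give
\begin{align*}
1+\chi^2\pth{f_G^{\otimes n}\,\big\|\,\bE_{\Pi_{|U}}f_{G'}^{\otimes n}}\le \bE_{\Pi_{|U}}\qth{(1+r_n^2)^n}\le e^{nr_n^2}.
\end{align*}
Together with $1/\Pi(U)\le e^{E(n^{-2},r_n)}\le e^{nr_n^2}$, this yields $1+\chi^2(f_G^{\otimes n}\|m_\Pi)\le e^{2nr_n^2}$.

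To switch from $f_G$ back to $f_{G_0}$, I will not apply $\chi^2$ directly. Instead I will couple: since $f_{G_0}^{\otimes n}$ and $f_G^{\otimes n}$ differ in TV by at most $n\cdot n^{-2}=n^{-1}$, and $D\le A^2$, the change of measure costs an additive $O(n^{-1})$. This term is absorbed into the first, Bayes-regret term. Collecting everything gives a total of order $\frac{C\log n}{n\log\log n}(\cdot)+CA\varepsilon e^{nr_n^2}+C\delta$, which is the claim after adjusting constants, for example by rescaling $r_n$ so that the exponent $e^{nr_n^2}$ absorbs the constant $2$.
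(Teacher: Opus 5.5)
Your proposal is correct and follows the paper's proof essentially step for step. It uses the split into twice the regret of $\theta_\Pi^n$ plus a deviation term, the $A^2\delta$ bound on $\mathcal{G}^c$, and the $O(1/n)$ total-variation swap from $f_{G_0}^{\otimes n}$ to the intermediate $f_G^{\otimes n}$. It then bounds $1+\chi^2(f_G^{\otimes n}\|m_\Pi)\le e^{E(n^{-2},r_n)+nr_n^2}\le e^{2nr_n^2}$ by restricting to $U$ and using convexity in the second argument, and finishes with the Cauchy--Schwarz transfer using $D\le A^2$. Your closing remark about rescaling $r_n$ is unnecessary: the square root of $e^{2nr_n^2}$ already gives exactly the factor $e^{nr_n^2}$ in the statement, and rescaling $r_n$ would also change the first term.
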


We take $\mathcal{G}=[0, L]^n$ with $L = C(A)\frac{\log n}{\log\log n}$, so that $\delta = O(\frac{1}{n})$. 
We show that $\varepsilon \le \frac{1}{n}\exp(-nr_n^2)$ for the estimator $\widehat{\theta}^n$ in \eqref{eq:ERM-PI} and $M = \exp(O_A(\frac{\log^2 n}{\log \log n} + nr_n^2))$. Let ${\bf X}=(X^{n,(m)})_{m\in [M]}$ denote the collection of all $X$ sequences in training data. We note that whenever $|\calS(x^n)|\ge 1$, 
\begin{align*}
	\bE[\widehat{\theta}^n(x^n) | {\bf X}] = \frac{1}{|\mathcal{S}(x^n)|}\sum_{m\in \mathcal{S}(x^n)} \bE[\pi_m\circ \theta^{n, (m)} | {\bf X}] = \bE_\Pi[\theta^n | X^n = x^n], 
\end{align*}
where $(\theta^n, X^n)$ is a single generic batch generated from $\Pi$. By independence among batches, we conclude from the above unbiasedness that
\begin{align*}
	\bE\bqth{ \| \widehat{\theta}^n(x^n) - \bE_\Pi[\theta^n | X^n = x^n] \|_2^2 \mid {\bf X}} &= \frac{1}{|\calS(x^n)|}\bE\bqth{ \| \theta^n - \bE_\Pi[\theta^n|X^n = x^n] \|_2^2 \mid X^n = x^n } \\
	&\le \frac{nA^2}{|\calS(x^n)|}. 
\end{align*}
For $|\calS(x^n)|=0$, we simply replace the denominator by $1$. Finally, taking expectation with respect to ${\bf X}$ and $x^n$ gives
\begin{align*}
	\bE \qth{\frac{1}{n}\|\widehat{\theta}^n(X^n) - \bE_{\Pi}[\theta^n | X^n]\|_2^2 \indc{X^n\in [0, L]^n}} \le \bE\qth{\frac{A^2}{|\calS(X^n)| \vee 1} \indc{X^n\in [0, L]^n}}. 
\end{align*}

To upper bound the final quantity, we sum over all possible types $T$ obtained from $[0, L]^n$. Let $p(T)$ be the probability that a fresh draw $X^n\sim \Pi_{X^n}$ has type $T$, then $|\calS(X^n)|\sim \mathsf{B}(M, p(T))$. Consequently, 
\begin{flalign*}
	\bE\qth{\frac{A^2}{|\calS(X^n)| \vee 1} \Big| T(X^n) = T} 
	&= A^2\cdot (1 - p(T))^M + A^2 \sum_{k=1}^M \frac{1}{k} \binom{M}{k}p(T)^k(1-p(T))^{M-k}
	\\&\stepa{\lesssim} (1 - p(T))^M + \min\{1, \frac{1}{Mp(T)}\}
	\\&\stepb{\lesssim} \min\{1, \frac{1}{Mp(T)}\}
\end{flalign*}
for some absolute constant $C>0$. Here (a) is due to \cite[Lemma 16]{polyanskiy2021sharp}, 
and (b) is due to the inequality $(1-x)^M\le \min\{1, \frac{1}{Mx}\}$ for all $x\in [0, 1]$. 
Let $N$ be the number of types for $X^n\in [0, L]^n$, with 
\begin{align*}
	N = \binom{L + n}{L} = O((n+L)^L) = O\left(\exp(\frac{\log^2 n}{\log \log n})\right).
\end{align*}
Denoting $T_L^{(n)}$ as the set of all types in $[0, L]^n$,  we may continue the previous display to get 
\[
\varepsilon^2 \lesssim \mathbb{E}\qth{\min\{1, \frac{1}{Mp(T)}\}\indc{T\in T_L^{(n)}}}
\le \sum_{T\in T_L^{(n)}} \frac{p(T)}{Mp(T)}
=\frac{N}{M}. 
\]
Therefore, by taking $M = n^2N\exp(2nr_n^2) \le \exp\left(O\left(\frac{\log^2 n}{\log \log n} + nr_n^2\right)\right)$, we obtain $\varepsilon\le \frac{1}{n}e^{-nr_n^2}$, as desired. 

\subsubsection{Proof of Lemma \ref{lemma:approximate_ERM}}\label{app:approximate_erm_proof}
By \Cref{thm:general} and the triangle inequality $(a+b)^2\le 2(a^2+b^2)$, we only need to show that for every $G_0$ supported on $[0,A]$, 
\begin{align*}
	\bE_{G_0}\qth{ \frac{1}{n} \|\widehat{\theta}^n(X^n) - \bE_{\Pi}[\theta^n | X^n]\|_2^2 } \le C\pth{\varepsilon e^{nr_n^2} + \delta}. 
\end{align*}
We first note that 
\begin{align*}
	\bE_{G_0}\qth{ \frac{1}{n} \|\widehat{\theta}^n(X^n) - \bE_{\Pi}[\theta^n | X^n]\|_2^2 \indc{X^n\in\mathcal{G}^c}} \le A^2\mathbb{P}_{G_0}[X^n\in \mathcal{G}^c]\le A^2\delta 
\end{align*}
hence it suffices to show that 
\begin{align*}
	\bE_{G_0}\qth{ \frac{1}{n} \|\widehat{\theta}^n(X^n) - \bE_{\Pi}[\theta^n | X^n]\|_2^2 \indc{X^n\in\mathcal{G}}} \le Ce^{nr_n^2}\varepsilon
\end{align*}
To this end, first note that the data distribution $f_{G_0}^{\otimes n}$ can be changed into $f_G^{\otimes n}$ for some $G$, in view of \prettyref{definition:PoP}, by incurring a cost at most $\TV(f_G^{\otimes n},f_{G_0}^{\otimes n}) \le n\TV(f_G,f_{G_0})=O(\frac{1}{n})$. Therefore, WLOG we may assume that $G=G_0$ and write
\begin{align*}
	\chi^2(f_{G_0}^{\otimes n} \| \Pi_{X^n}) + 1 &= \sum_{x^n\in \naturals^n} \frac{f_{G_0}^{\otimes n}(x^n)^2}{\Pi_{X^n}(x^n)} = \sum_{x^n\in \naturals^n} \frac{f_{G_0}^{\otimes n}(x^n)^2}{\bE_{G\sim \Pi}[f_G^{\otimes n}(x^n)]} \\
	&\stepa{\le} \frac{1}{\Pi(U)} \sum_{x^n\in \naturals^n} \frac{f_{G_0}^{\otimes n}(x^n)^2}{\bE_{G\sim \Pi_{|U}}[f_G^{\otimes n}(x^n)]} \\
	&\stepb{\le} \frac{1}{\Pi(U)} \bE_{G\sim \Pi_{|U}}\qth{\chi^2(f_{G_0}^{\otimes n}\|f_G^{\otimes n})+1} \\
	&= \frac{1}{\Pi(U)} \bE_{G\sim \Pi_{|U}}\qth{\pth{\chi^2(f_{G_0}\|f_G)+1}^n} \\
	&\stepc{\le} e^{E(n^{-2},r_n)+nr_n^2} \stepd{\le} e^{2nr_n^2},  
\end{align*}
where (a) defines $U := \sth{G: \chi^2(f_{G_0}\|f_G)\le r_n^2}$ and $\Pi_{|U}$ as the restriction of $\Pi$ to $U$, (b) follows from convexity of $x\mapsto \frac{1}{x}$, (c) uses the definition of $U$ and \prettyref{definition:PoP}, and (d) uses the definition of $r_n$ that $E(n^{-2},r_n)\le nr_n^2$. Next, we invoke the following form of the Cauchy-Schwarz inequality on arbitrary distributions $P$ and $Q$, an event $E$, and arbitrary nonnegative function $g$: 
\begin{flalign*}
	(\mathbb{E}_P[g\indc{E}])^2 \le \mathbb{E}_Q[g^2\indc{E}](\chi^2(P\|Q)+1). 
\end{flalign*}
Using $P=f_{G_0}^{\otimes n}$, $Q=\Pi_{X^n}$, $g = \frac{1}{n} \|\widehat{\theta}^n(X^n) - \bE_{\Pi}[\theta^n | X^n]\|_2^2$, 
and $E$ denotes the event $X^n\in \mathcal{G}$,
we conclude that 
\begin{align*}
	\bE_{G_0}\qth{ g \indc{X^n\in \mathcal{G}}} &\le \sqrt{e^{2nr_n^2}\cdot \bE_{\Pi}\bqth{g^2\indc{X^n\in \mathcal{G}}}} \\
	&\le \sqrt{e^{2nr_n^2}\cdot A^2 \bE_{\Pi}\bqth{g\indc{X^n\in \mathcal{G}}}} \le Ce^{nr_n^2}\varepsilon.
\end{align*}
This is the desired claim. 

\subsection{Proof of \prettyref{thm:simple_prior_subexpo}}
To generalize our approach to subexponential priors, we consider the prior truncation techniques (as generalized from \cite[Lemma 12]{jana2023empirical}). In other words, we show that for any reasonable estimator $\widehat{\theta}$ and a prior $G\in\subexpo(s)$, 
we can find $G'$ supported on $[0, c(s)\log n]$ such that the extra regret incurred by this prior truncation is at most $O(\frac{1}{n})$. 

\begin{lmm}\label{lmm:prior-truncation}
	Let $G\in\subexpo(s)$, and consider any estimator $\widehat{\theta}^n \in [0,M]^n$.  Denote $G'$ as the prior truncated at $c\log n$ where $c:=c(s)$ is such that $G(\theta > c\log n) < \frac{1}{n^{10}}$, and $G'(\theta\in\cdot) = G(\theta\in \cdot | \theta\le c\log n)$. Then 
	\[
	\reg(\widehat{\theta}^n; G)\le \reg(\widehat{\theta}^n; G') + O_s\left(\frac{1+M^2}{n^4}\right). 
	\]
\end{lmm}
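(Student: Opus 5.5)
We compare the two regrets through a coupling of the latent parameters. Write $B=\{\theta\le c\log n\}$ and $p:=G(B^c)<n^{-10}$. Since $G = (1-p)G' + pG''$ with $G''=G(\cdot\mid B^c)$, we can draw $\theta^n\sim G^{\otimes n}$ by first drawing $\theta'^n\sim G'^{\otimes n}$ and then independently replacing each coordinate by a draw from $G''$ with probability $p$. Let $\calE$ be the event that no coordinate is replaced. Then $\bP(\calE^c)\le np\le n^{-9}$, and on $\calE$ the pair $(\theta^n,X^n)$ has exactly the law it has under $G'$. The regret is the MSE minus the Bayes risk, so we handle the two pieces separately.

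\textbf{MSE term.} We split
\[
\bE_G\Bigl[\tfrac1n\|\widehat\theta^n-\theta^n\|^2\Bigr]\le \bE_{G'}\Bigl[\tfrac1n\|\widehat\theta^n-\theta^n\|^2\Bigr]+\bE\Bigl[\tfrac1n\|\widehat\theta^n-\theta^n\|^2\indc{\calE^c}\Bigr].
\]
For the second term we use $\|\widehat\theta^n-\theta^n\|^2\le 2nM^2+2\|\theta^n\|^2$ and apply Cauchy--Schwarz to $\bE[\|\theta^n\|^2\indc{\calE^c}]$. Subexponential moments give $\bE[\theta_i^4]=O_s(1)$, so this term is $O_s((1+M^2)n^{-9/2})$.

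\textbf{Bayes-risk term.} Here we need the reverse inequality, namely
\[
\mmse(G)\ge \mmse(G')-O_s(n^{-4}).
\]
The per-coordinate Bayes risk is $\mmse(G)=\bE_G(\theta-\theta_G(X))^2$, and by concavity of the mmse in the prior,
\[
\mmse(G)\ge (1-p)\,\mmse(G')+p\,\mmse(G'')\ge \mmse(G')-p\,\mmse(G').
\]
Since $\mmse(G')\le \bE_{G'}[\theta^2]=O_s(\log^2 n)$, the loss is $p\cdot O_s(\log^2 n)=O(n^{-9})$. This concavity step is the main point to get right. It holds because the Bayes risk is an infimum over estimators of a quantity linear in the prior, and the problem is separable across coordinates, so there is no issue with dimension $n$.

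\textbf{Conclusion.} Combining the two pieces,
\[
\reg(\widehat\theta^n;G)\le \reg(\widehat\theta^n;G')+O_s\bigl((1+M^2)n^{-4}\bigr).
\]
The main obstacle is controlling the contribution of the rare large $\theta$'s in the MSE term. We handle it with fourth-moment bounds on the subexponential tail together with the $n^{-10}$ choice of $c(s)$, which leaves ample slack to reach $n^{-4}$.
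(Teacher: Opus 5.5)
Your proposal is correct and follows essentially the same route as the paper. Your coupling event $\calE$ coincides with the paper's event $\{\max_i \theta_i\le c\log n\}$, and as in the paper you dominate the MSE on it by the $G'$-MSE and handle the complement by Cauchy--Schwarz with fourth moments. Your concavity bound $\mmse(G')-\mmse(G)\le p\,\mmse(G')$ is exactly the ingredient the paper imports from Wu--Verd\'u (stated there as $\frac{p}{1-p}\mmse(G)$); the only difference is that you derive the regret decomposition and the mmse comparison directly rather than citing them.
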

The proof is deferred to \prettyref{app:lmm_trunc_proof}. 

For the hierarchical Bayes estimator $\theta_\Pi^n$, since $\Pi\in \calP(\calP([0,c_0\log n]))$, we have $M=O(\log n)$ in \prettyref{lmm:prior-truncation}. Therefore, the overhead in \prettyref{lmm:prior-truncation} is negligible, and we may assume that $G_0$ is supported on $[0,c_0\log n]$. 

Next we solve for $\varepsilon_n$ and $r_n$ in the posterior contraction lemma (\prettyref{lemma:posterior_contraction}). First, for $\mathcal{P} = \{f_G: \text{supp}(G)\subseteq [0, c_0\log n]\}$, the same truncation argument (with the truncation threshold at $\Theta(\log n)$ rather than $\Theta(\frac{\log n}{\log\log n})$) in \prettyref{lemma:metric_entropy} gives $\log \Nloc(\varepsilon, \calP, H)=O(\log n)$ for all $\varepsilon^2\ge \frac{1}{n}$. Therefore, we can take $\varepsilon_n^2 = O(\frac{\log n}{n})$ for the inequality $\log \Nloc(\varepsilon_n, \calP, H)\le n\varepsilon_n^2$. Next, to upper bound $E(n^{-2},r)$, \prettyref{lemma:moment_matching} implies that every $f_G\in \calP$ can be approximated by a finite Poisson mixture with $L=O(\log n)$ atoms. Therefore, by the same arguments as in the proof of \prettyref{lemma:simple_prior}, we get $E(n^{-2},r) = O(\log^2 n)$ for $r^2\ge \frac{1}{n}$, so that we can take $r_n^2 = O(\frac{\log^2 n}{n})$ from the inequality $E(n^{-2},r_n)\le nr_n^2$. Therefore, by \prettyref{lemma:posterior_contraction} and integrating the tails, we obtain
\begin{align*}
	\bE_{X^{n-1}\sim f_{G_0}^{\otimes (n-1)}}[H^2(f_{G_0}, f_{G_n})] = O\pth{\frac{\log^2 n}{n}}, 
\end{align*}
with $G_n = \bE_{G\sim \Pi_{G|X^{n-1}}}[G]$ defined in \prettyref{lemma:posterior_mean_training}. 

We now conclude by connecting Hellinger distance regret bound by using the following tools \cite[Lemma 4]{jana2025optimal}: for $G,G_0\in \calP([0,A])$ and any $K\in \naturals$, 
\[
\reg(\theta_G; G_0)\le C(AKH^2(f_G, f_{G_0}) + \varepsilon_K(G_0)),
\]
with $\varepsilon_K(G_0) := \sum_{y\ge K} f_{G_0}(y)$. 
Standard Poisson tails yield that $\varepsilon_K(G_0)\le \frac{1}{n}$ for some $K=O(\log n)$. Choosing $G=G_n$, taking expectation over $G_n$, and noting that $A=c_0\log n$, the regret bound is now $O(\frac{\log^4 n}{n})$, as desired. 

\subsubsection{Proof of \prettyref{lmm:prior-truncation}}\label{app:lmm_trunc_proof}
Denote the event $E = \{\max_{i=1}^n \theta_i \le c\log n\}$; 
we have $\mathbb{P}[E^c] \le n^{-9}$. 
Let $\mmse(G) := \min_{\widehat{\theta}}\mathbb{E}_{\theta\sim G}[(\widehat{\theta}(X) - \theta)^2]$, i.e. the MSE achieved by the Bayes estimator. Then by \cite[Eqn. (131)]{polyanskiy2021sharp}, 
\[
\reg(\widehat{\theta}^n; G) \le \reg(\widehat{\theta}^n; G')+\mmse(G')-\mmse(G)+\mathbb{E}_{G}\bqth{\frac{1}{n}\|\widehat{\theta} - \theta\|_2^2\indc{E^c}}
\]
By \cite[Lemma 2]{wu2011functional}, 
$\mmse(G')-\mmse(G)\le \frac{\varepsilon}{1-\varepsilon}\mmse(G)=O_s(\varepsilon)$, with $\varepsilon=G(\theta>c\log n)\le n^{-10}$. It remains to bound $\mathbb{E}_{\pi}[(\widehat{\theta} - \theta)^2\indc{E^c}]$: by Cauchy-Schwarz, 
\[
\mathbb{E}_{G}\bqth{\frac{1}{n}\|\widehat{\theta} - \theta\|_2^2\indc{E^c}}
\le \sqrt{\mathbb{P}[E^c]\mathbb{E}_{G}\bqth{\frac{1}{n^2}\|\widehat{\theta} - \theta\|_2^4}}
\le n^{-4}(M^2+O_s(1))
=O_s\pth{\frac{1+M^2}{n^4}}, 
\]
where we have used that $\mathbb{E}_G[\theta^4]=O_s(1)$ for all $G\in \subexpo(s)$. This completes the proof. 

\subsection{Proof of \Cref{thm:simple_prior_gaussian}}
We first establish the rate of posterior contraction in the Gaussian case, by working out $\varepsilon_n$ and $r_n$ in \prettyref{lemma:posterior_contraction}. To upper bound the local entropy $\log \Nloc(\varepsilon, \calP, H)$ for the class of Gaussian mixtures $\calP = \sth{f_G(x)=\bE_{\theta\sim G}[\varphi(x-\theta)]: G\in \calP([-A,A])}$, we will overbound it by the global entropy $\log N(\varepsilon, \calP, H)$. We quote the following relationship between the TV distance of Gaussian mixtures and moment matching \cite[Lemma 9]{wu2020optimal}: 
\[
\TV(f_{G_0}, f_{G_1})\le \frac{1}{2}\left[\sum_{m=0}^{\infty} \frac{|\bE_{U\sim G_1} [U^m] - \bE_{V\sim G_2} [V^m]|^2}{m!}\right]^{1/2}. 
\]
Since $H^2\le 2\TV$, by simple algebra and Carath\'eodory's theorem, every $f_{G_0}\in \calP$ is $O(\frac{1}{n^2})$-close to a finite Gaussian mixture with $L=O_A(\frac{\log n}{\log\log n})$ atoms. Therefore, by quantizing the atom locations and weights of an $L$-component Gaussian mixture, we get
\begin{align*}
	\log N(n^{-1/2}, \calP, H) = O\pth{\frac{\log^2 n}{\log\log n}}. 
\end{align*}
We can therefore choose $\varepsilon_n^2 = O(\frac{\log^2 n}{n\log\log n})$ in \prettyref{lemma:posterior_contraction}. 

The above approximation by an $L$-component Gaussian mixture also gives an upper bound of $E(n^{-2},r)$. Based on the finite Gaussian mixture, the same argument in the proof of \prettyref{lemma:simple_prior} yields
\begin{align*}
	E(n^{-2},r) = O(L\log n) = O\pth{\frac{\log^2 n}{\log\log n}}, \quad r^2\ge \frac{1}{n}. 
\end{align*}
Therefore, we can take $r_n^2 = O(\frac{\log^2 n}{n\log\log n})$ in \prettyref{lemma:posterior_contraction}, which gives
\begin{align*}
	f_{G_0}^{\otimes (n-1)}\bpth{H^2(f_{G_0}, f_{G_n}) \ge C\varepsilon^2} \le \frac{1}{n} + e^{-cn\varepsilon^2}, \quad \text{for } \varepsilon^2 \gtrsim \frac{\log^2 n}{n\log\log n},  
\end{align*}
where $G_n = \bE_{G\sim \Pi_{G|X^{n-1}}}[G]$ is defined in \prettyref{lemma:posterior_mean_training}.

Finally we quote a state-of-the-art regret-Hellinger inequality in the recent work \cite[Theorem 1]{chen2026sharp}:
\begin{align*}
	\reg(\theta_{G_n}; G_0) \le CH^2(f_{G_n},f_{G_0})\frac{\log \frac{1}{H^2(f_{G_n},f_{G_0})}}{\log\log \frac{1}{H^2(f_{G_n},f_{G_0})}}. 
\end{align*}
By the Hellinger guarantee above and tail integration, we obtain a regret of $O(\frac{\log^3 n}{n(\log\log n)^2})$ for $\theta_\Pi^n$, as desired.

\subsection{Proof of \Cref{thm:simple_prior_polynomial}}
The recent work \cite[Lemma 16]{kang2026function} tells that, for any $G, G_0$ supported on $[0, A]$ and any $K > p$, we have 
\[
\bE_{G_0}\qth{ \pth{g_G - g_{G_0}}^2 } 
\le C(A^{2p} + A^pK^p)H^2(f_{G_0}, f_G)+A^{2p}f_{G_0}(X > K - p).
\]
Taking $K = C(A)\frac{\log n}{\log \log n}$ such that 
$f_{G_0}(X > K - p) \le \frac{1}{n^2}$, the target regret bound is then a direct consequence of the same Hellinger guarantee in \Cref{thm:general} and \prettyref{lemma:simple_prior}: 
\begin{align*}
	\bE_{X^{n-1}\sim f_{G_0}^{\otimes (n-1)}}[H^2(f_{G_0}, f_{G_n})] = O\pth{\frac{\log^2 n}{n\log\log n}}, 
\end{align*}
with $G_n = \bE_{G\sim \Pi_{G|X^{n-1}}}[G]$, where  \prettyref{lemma:posterior_mean_training} implies that $g_{\Pi,n} = g_{G_n}$. 

\section{Additional experimental details}

\subsection{Details of Experiments in \Cref{sec:performance}}
In this section, we mention the details of how the test priors are generated: they are the neural and the multinomial prior-on-priors, 
which are different from the PoP we used in \Cref{alg:universal_prior} that we use to train our transformer. Code release is available at \url{https://github.com/Anzoteh96/eb-transformers}.

\paragraph{Neural-generated prior-on-priors.} This is described in \cite[Appendix A.2]{teh2025solving}, 
reproduced here for clarity. We sample $\theta_{\mathsf{base}}\in [0, A]$ via the following: first, let $\mathcal{M}$ be the set of priors determined by some two-layer perceptron with a non-linear activation. This is defined as follows: 
\begin{equation*}
	\mathcal{M} = \{\pi: \pi = \varphi^{W_1, W_2, \sigma}_{\sharp} \mathsf{Unif}[0, A]\}
\end{equation*}
where $\varphi^{W_1, W_2, \sigma}(x) = \mathsf{Sigmoid}(10W_2\sigma(W_1x))$, $W_1, W_2$ are linear operators, and $\sigma$ is an activation function chosen randomly from 
\[\{\mathrm{GELU, ReLU, SELU, CELU, SiLU, Tanh, TanhShrink}\}. \]
The test Poisson means $\theta_{\mathsf{base}}$ are then produced by sampling from a mixture of 4 priors in $\mathcal{M}$. 

\paragraph{Multinomial prior-on-prior.} 
Here, the prior-on-prior $\Pi_{\mathsf{test}}$ consists of priors in the form $\sum_{j=1}^{10A} w_j\delta_{\frac{j}{10}}$
with $(w_1, \cdots, w_{10A})\sim \text{Dir}(1, 1, \dots, 1)$. In other words, the test prior is a discrete distribution uniformly chosen from the simplex over a fixed grid. 

\subsection{Details of Experiments in \Cref{sec:alpha-posterior}}
Here, we describe how the experiments in \Cref{sec:alpha-posterior} are set up. 
For each $m=2, 5, 10$ we consider $N = 2,000$ runs of sampling $m$ priors $G_1, \cdots, G_m$, take the simple PoP as $\Pi_m=\frac{1}{m}\sum_{i=1}^m \delta_{G_i}$, 
and choose the ones (among the $N$ runs) such that the hierarchical Bayes of $\Pi_m$ has the highest average regret across $G_1, \cdots, G_m$. 

Next, we train a transformer that trains exclusively on $G_1, \dots, G_m$. For all $m=2, 5, 10$, we take $M = 200,000$ steps, 
and for each step $i$ we sample 200 batches, each in the form of $(\theta^n, X^n)$ pairs sampled from $G_j$ where $j = (i-1) \% m + 1$. The transformer architecture and training details are identical to \cite{teh2025solving}.

\subsection{Additional Results in \Cref{sec:alpha-posterior}}
In this section we demonstrate some of the numerical results deferred from the main parts. 
Precisely, we tabulate the results of $\alpha$-posterior on $m = 5$ and $m = 10$, as tabulated in \Cref{fig:alpha-posterior-5priors} and 
\Cref{fig:alpha-posterior-10priors}. 
We see that the hypothesis that the transformers are doing $\alpha$-posterior for $\alpha\simeq\frac{n}{\ntest}$ continues to hold, 
despite larger error bars. 

\UnifiedFigure{hbtp}{fig:alpha-posterior-5priors}{Mean squared distance between transformer output and the hierarchical Bayes estimator using various $\alpha$-posteriors, trained on $m = 5$ priors}
	{\subfloat[$m = 5, n = 50$]{
			\includegraphics[scale=0.4]{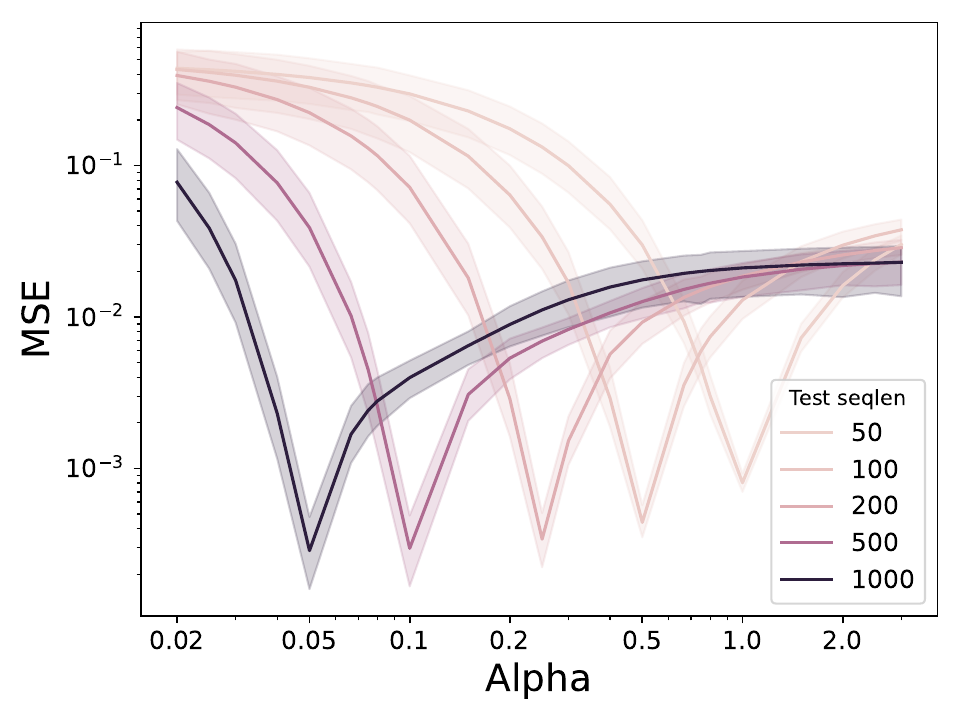}
		}
	  \subfloat[$m = 5, n = 100$]{
	  	\includegraphics[scale=0.4]{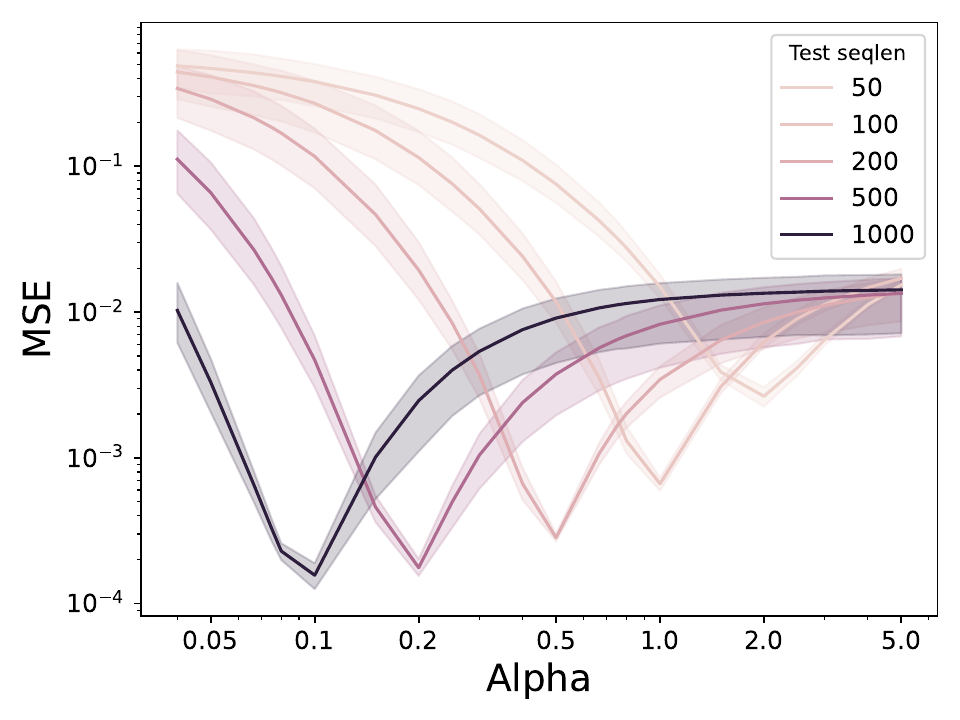}
	  }
      
      \subfloat[$m = 5, n = 200$]{
	  	\includegraphics[scale=0.4]{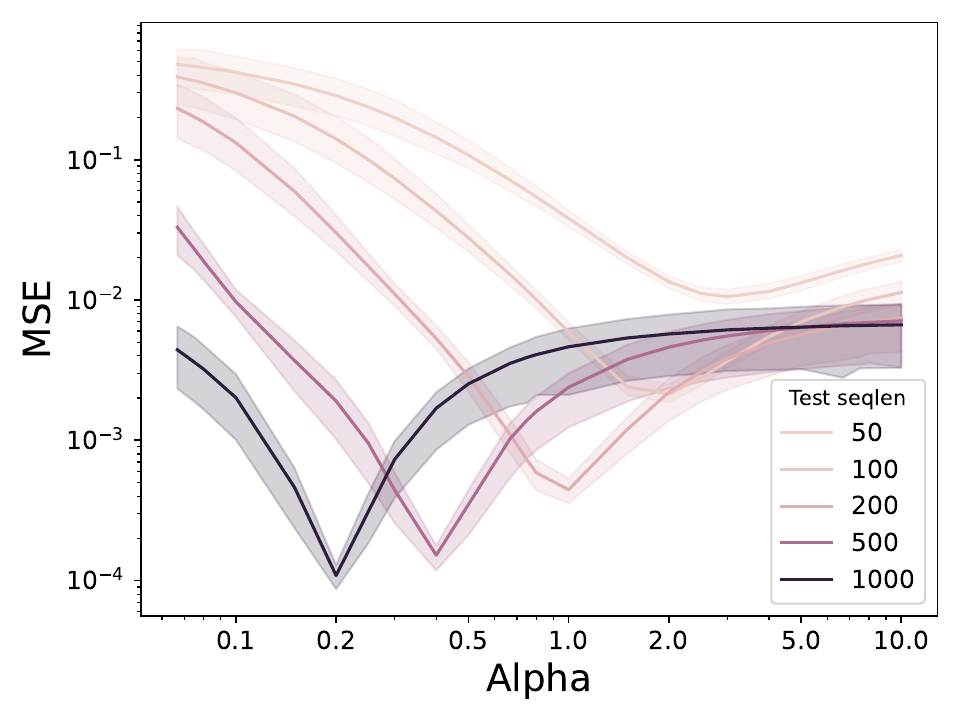}
	  }
      \subfloat[$m = 5, n = 500$]{
      	\includegraphics[scale=0.4]{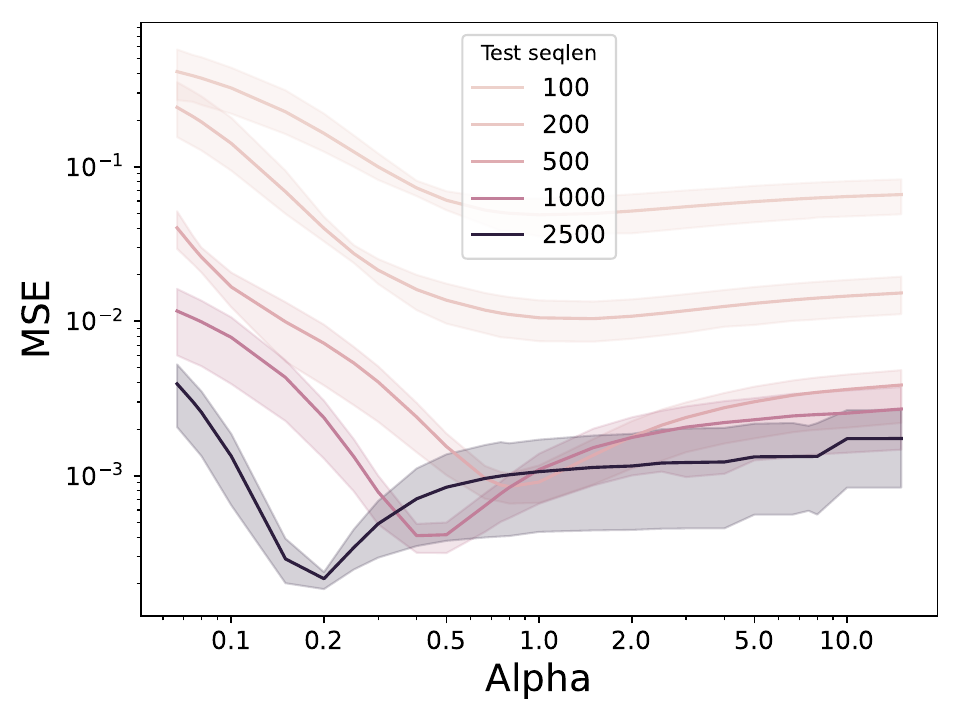}
      }
    }

\UnifiedFigure{t}{fig:alpha-posterior-10priors}{Mean squared distance between transformer output and the hierarchical Bayes estimator using various $\alpha$-posteriors, trained on $m = 10$ priors}
	{\subfloat[$m = 10, n = 50$]{
			\includegraphics[scale=0.4]{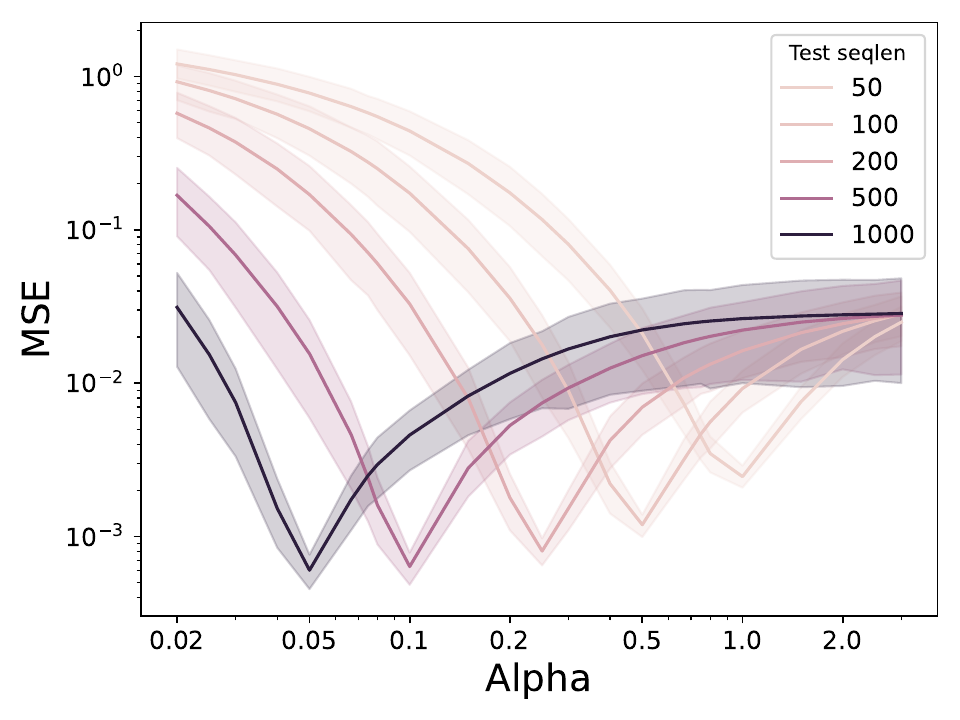}
		}
	   \subfloat[$m = 10, n = 100$]{
	  	\includegraphics[scale=0.4]{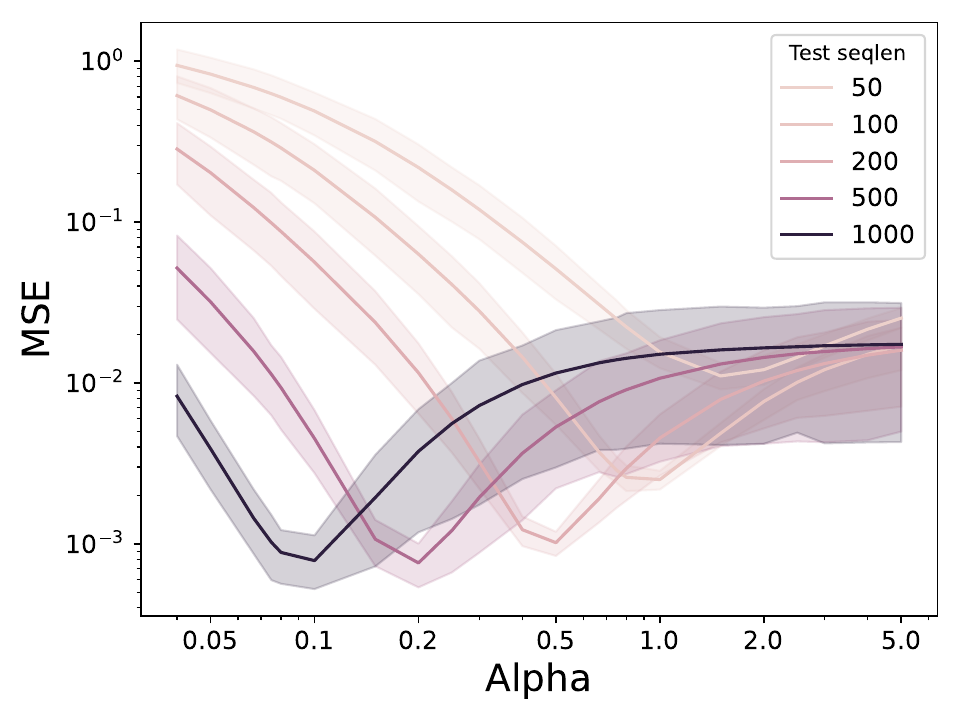}
	   }
       
      \subfloat[$m = 10, n = 200$]{
	  	\includegraphics[scale=0.4]{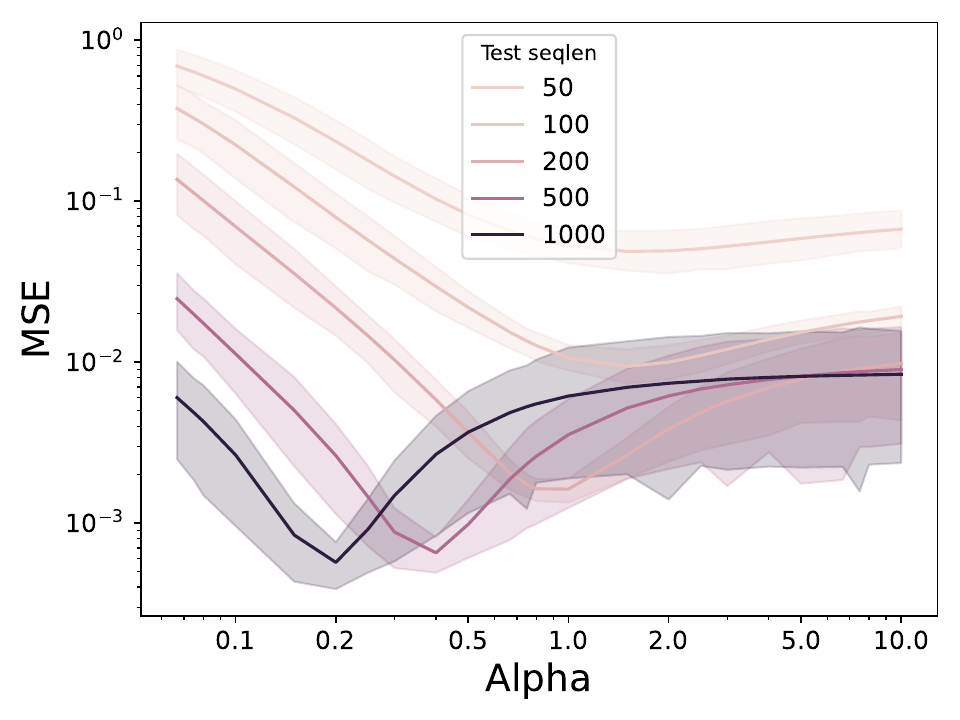}
	   }
      \subfloat[$m = 10, n = 500$]{
      	\includegraphics[scale=0.4]{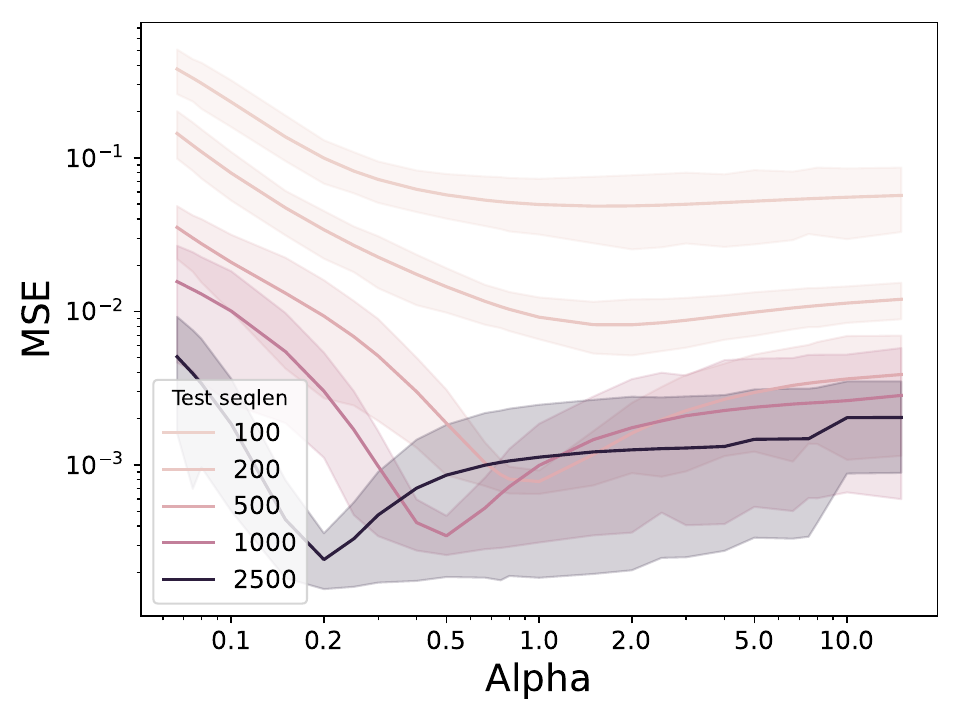}
      }
    }

\end{document}